\def\eqref#1{equation~\ref{#1}}
\def\1{\bm{1}}
\DeclareMathAlphabet{\mathsfit}{\encodingdefault}{\sfdefault}{m}{sl}
\SetMathAlphabet{\mathsfit}{bold}{\encodingdefault}{\sfdefault}{bx}{n}
\let\classAND\AND
\let\AND\relax
\let\AND\classAND
\newacronym{rl}{RL}{Reinforcement Learning}
\newacronym{mbrl}{MBRL}{Model-based Reinforcement Learning}
\newacronym{mbie}{MBIE}{Model-based Interval Estimation}
\newacronym{mbie-eb}{MBIE-EB}{MBIE with exploratory bonus}
\newacronym{mdp}{MDP}{Markov decision process}
\newacronym{pomdp}{POMDP}{Partially Observable MDP}
\newacronym{iid}{i.i.d.}{independent and identically distributed}
\newacronym{rlao}{RLAO}{RL from Abstracted Observations}
\newacronym{mbrlao}{MBRLAO}{MBRL from Abstracted Observations}
\newtheorem{theorem}{Theorem}
\newtheorem{lemma}{Lemma}
\newtheorem{assumption}{Assumption}
\newtheorem{definition}{Definition}
\newtheorem{observation}{Observation}
\title{An Analysis of Model-Based Reinforcement Learning From Abstracted Observations}
\author{\name Rolf A. N. Starre \email r.a.n.starre@tudelft.nl \\
	\addr Delft University of Technology
	\AND
	\name Marco Loog \email marco.loog@ru.nl \\
	\addr Radboud University
	\AND
	\name Elena Congeduti \email e.congeduti@tudelft.nl\\
	\addr Delft University of Technology
	\AND
	\name Frans A. Oliehoek \email f.a.oliehoek@tudelft.nl\\
	\addr Delft University of Technology}
\begin{document}

\maketitle

\begin{abstract}
	Many methods for Model-based Reinforcement learning (MBRL) in Markov decision processes (MDPs) provide guarantees for both the accuracy of the model they can deliver and the learning efficiency.  
	At the same time, state abstraction techniques allow for a reduction of the size of an MDP while maintaining a bounded loss with respect to the original problem.  
	Therefore, it may come as a surprise that no such guarantees are available when combining both techniques, i.e., where MBRL merely observes abstract states.  
	Our theoretical analysis shows that abstraction can introduce a dependence between samples collected online (e.g., in the real world).
	That means that, without taking this dependence into account, results for MBRL do not directly extend to this setting.
	Our result shows that we can use concentration inequalities for martingales to overcome this problem. This result makes it possible to extend the guarantees of existing MBRL algorithms to the setting with abstraction. 
	We illustrate this by combining R-MAX, a prototypical MBRL algorithm, with abstraction, thus producing the first performance guarantees for model-based `RL from Abstracted Observations':
	model-based reinforcement learning with an abstract model.
\end{abstract}

\section{Introduction~\label{sec:intro}}
Tabular \gls*{mbrl} methods provide guarantees that show they can learn efficiently in \glspl*{mdp}~\citep{brafman2002r,strehl2008analysis,jaksch2010near,fruit2018efficient,talebi2018variance,zhang2019regret,bourel2020tightening}. 
They do this by finding solutions to a fundamental problem for \gls*{rl}, the  exploration-exploitation dilemma: when to take actions to obtain more information (explore) and when to take actions that maximize reward based on the current knowledge (exploit). 
However, \glspl*{mdp} can be huge, which can be problematic for tabular methods. 
One way to deal with large problems is by using abstractions, such as the mainstream state abstractions~\citep{li2009unifying, abel2016near}. 
State abstractions reduce the size of the problem by aggregating together states according to different criteria, depending on the specific type of abstraction.
We can view state abstraction as a special case of function approximation, where every state maps to its abstract state~\citep{mahadevan2010representation}, and we can roughly divide them into \textit{exact} and \textit{approximate} abstractions~\citep{li2009unifying,abel2016near}.

\begin{wrapfigure}{r}{0.43\linewidth}
	\centering
	\includegraphics[width=1\linewidth]{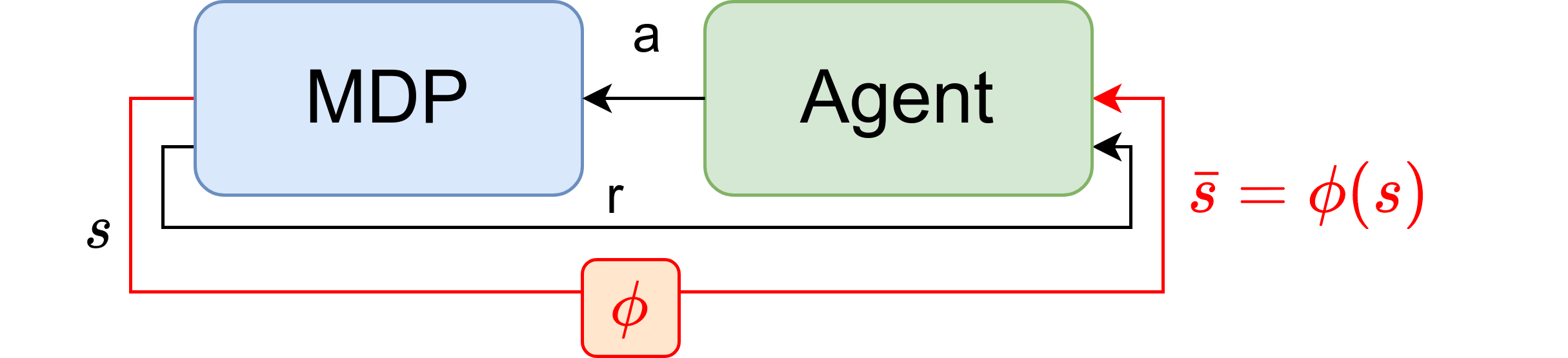}\
	\caption{RL from Abstracted Observations, the agent receives the abstract state $\bar{s} = \phi(s)$ as an observation instead of the state $s$. 
		Image based on \citet{abel2018state}.
	}
	\label{fig:mdpdf}
\end{wrapfigure}
Approximate abstractions relax the criteria of exact abstractions, and therefore allow for a
larger reduction in the state space.
Typically, this approximation leads to a trade-off between performance and the amount of required data~\citep{paduraru2008model,jiang2015abstraction}.
In this paper, we will assume the use of abstraction as a given, e.g., because the complete state space is too large to deal with. Nevertheless, we explore the trade-off in Section~\ref{sec:RMAXAbstracted}, where we compare the performance of the prototypical R-MAX algorithm~\citep{brafman2002r} with and without abstraction.

In our setting, the agent acts in an \gls*{mdp} that returns states $s$, but instead of observing the true state $s$, the agent only observes abstract states $\phi(s)$ (see Figure~\ref*{fig:mdpdf}).
This setting, which has been considered before \citep{ortner2014selecting, abel2018state},\footnote{We refer to Section~\ref{sec:relatedWork} for a comparison with the related work.} is what we call
\emph{\gls*{rlao}}.
Surprisingly, there are relatively few results for \gls*{rlao}, even though many results for the planning setting are available~\citep{li2006towards,abel2016near}. 
The main difference between these two settings is that in planning with abstraction the resulting problem can still be considered an \gls*{mdp}, but in \gls*{rlao}, while the underlying problem is still an \gls*{mdp}, the observed problem is not.

The observation that the observed problem is not an \gls*{mdp} can be understood when we realize that \gls*{rlao} corresponds to \gls*{rl} in a \gls*{pomdp}~\citep{kaelbling1998planning}, as previously described \citep{bai2016markovian}. 
Specifically, the abstraction function serves as an observation function. 
Rather than observing its true state $s$, the agent observes the abstract state $\phi(s)$ and its policy chooses an action based on this abstract state.
It is well known that policies for \glspl*{pomdp} that only base their action on the last observation can be arbitrarily bad \citep{singh1994learning}.
Fortunately, there is also good news, as this worst-case does not apply when  $\phi$ is an \textit{exact model similarity abstraction}~\footnote{Also known as stochastic bisimulation~\citep{givan2003equivalence}.}~\citep{li2009unifying}, because the resulting problem can be considered an \gls*{mdp}; this abstraction maps states to the same abstract state only when their reward and transition functions in the abstract space are the same~\citep{li2006towards}.
We focus on the related \textit{approximate model similarity abstraction}~\citep{abel2016near}, which  maps states to the same abstract state only when their reward and transition functions in the abstract space are close.
Intuitively, because of its connection to the exact model similarity, one could expect that for this abstraction the worst-case also does not apply. 
However, as we discuss in detail in Section~\ref{sec:2.2guaranteesMBRL}, \gls*{mbrl} methods typically use results that rely on the assumption of \gls*{iid}\ samples to prove efficient learning \citep{strehl2008analysis,jaksch2010near,fruit2018efficient,bourel2020tightening}.
This is not appropriate in \gls*{rlao}: with abstraction, the transitions between abstract states need not be Markov, and the samples may depend on the history. 

We analyze collecting samples in \gls*{rlao} and prove that, with abstraction, samples are not guaranteed to be independent.
This means that \emph{most guarantees of existing MBRL methods do not hold in the 	\gls*{rlao} setting}.~\footnote{Of course, certain guarantees on the combination of abstraction and RL are known. 
	However, in most related work in abstraction settings (e.g., abstraction selection), the complication of samples not being independent does not occur due to particular assumptions~\citep{paduraru2008model, hallak2013model, maillard2013optimal, majeed2018q, ortner2019regret, du2019provably}. 
	Section~\ref*{sec:relatedWork} gives details for individual papers.}
The primary technical result in this work shows that we can still learn an accurate model in \gls*{rlao} by replacing concentration inequalities that rely on independent samples with a well-known concentration inequality for martingales~\citep{azuma1967weighted}. 
This result allows us to extend the guarantees of \gls*{mbrl} methods to \gls*{rlao}.
We illustrate such an extension for the prototypical R-MAX algorithm~\citep{brafman2002r}, thus producing the first performance guarantees for model-based methods in \gls*{rlao}.
These results are important for the often adopted state abstraction framework, as they allow us to conclude under what cases performance guarantees in \gls*{mbrl} can be transferred to settings with state abstraction.


\section{Background~\label{sec:preliminaries}}
Section~\ref{sec:abstractedmbrl} will cover the combination of \gls*{mbrl} and abstraction in \glspl*{mdp}, in this section we introduce the required background. 

\subsection{Model-Based RL~\label{sec:mbrlbg}}
As is typical for \gls*{rl} problems, we assume the environment the agent is acting in can be represented by an infinite horizon \gls*{mdp} $M \triangleq \langle S, A, T, R, \gamma \rangle$~\citep{puterman2014markov}. 
Here $S$ is a finite set of states $s \in S$, $A$ a finite set of actions $a \in A$, $T$ a transition function $T(s'|s,a) = \Pr(s'|s,a)$, $R$ a reward function $R(s,a)$ which gives the reward received when the agent executes action $a$ in state $s$, and $\gamma$ is a discount factor with $0 \leq \gamma \leq 1$ that determines the importance of future rewards. 
We use $R_\text{max}$ to denote the maximum reward the agent can obtain in one step.
The agent's goal is to find an optimal policy $\pi^* : S \rightarrow A$, i.e., a policy that maximizes the expectation of the cumulative reward in the \gls*{mdp}. 
$V^\pi(s)$ denotes the expected value of the cumulative reward under policy $\pi$ starting from state $s$. 
Similarly, $Q^\pi(s,a)$ denotes the expected value of the cumulative reward when first taking action $a$ from state $s$ and then following policy $\pi$ afterward. 

\gls*{mbrl} methods learn a model from the experience that the agent gains by taking actions and observing the rewards it gets and the states it reaches. 
For a fixed state-action pair $(s, a)$, we let $\tau_1, \tau_2, \cdots, \tau_{N(s, a)}$ be the first $N(s, a)$ time steps at which the agent took action $a$ in state $s$.
The first $N(s,a)$ states $s'$ that the agent reached after taking action $a$ in state $s$ are stored as the sequence $Y_{s,a} \triangleq (s'^{(\tau_1 + 1)},s'^{(\tau_2 + 1)},\cdots,s'^{(\tau_{N(s,a)} + 1)})$. 
We use $Y$ to refer to the collection of all $Y_{s,a}$.
Typically, in \gls*{mbrl}, the obtained experience is used to construct the empirical model $T_Y$~\citep{brafman2002r,strehl2008analysis,jaksch2010near,fruit2018efficient,talebi2018variance,zhang2019regret,bourel2020tightening}. This model is constructed simply by counting how often the agent reached a particular next state $s'$ and normalizing the obtained quantity by the total count:
\begin{align}~\label{eq:empiricalT}
	\forall s' \in S : T_Y(s'|s,a) \triangleq \frac{1}{N(s,a)} \sum_{i=1}^{N(s,a)}\mathbbm{1}\{Y_{s,a}^{(\tau_i + 1)} = s'\}.
\end{align}
Here $\mathbbm{1}\{\cdot\}$ denotes the indicator function of the specified event, i.e., $\mathbbm{1}\{Y_{s,a}^{(\tau_i + 1)} = s'\}$ is $1$ if $Y_{s,a}^{(\tau_i + 1)} = s'$ and $0$ otherwise.

\subsection{Guarantees for \gls*{mbrl}~\label{sec:2.2guaranteesMBRL}}
The quality of the empirical model $T_Y$ is crucial for performance guarantees, irrespective of the form of the guarantee, e.g., PAC-MDP~\citep{strehl2008analysis} or regret~\citep{jaksch2010near}. 
The quality of the empirical model is high when the distance between $T_Y(\cdot|s,a)$ and the ground truth $T(\cdot|s,a)$ is small. 
We can, for instance, measure this distance with the $L_1$ norm, defined as follows: 
\begin{align}
	||T_Y(\cdot|s,a) - T(\cdot|s,a)||_1 \triangleq \sum_{s' \in S} |T_Y(s'|s,a) - T(s'|s,a)|.
\end{align} 
Concentration inequalities are often used to guarantee that, with enough samples, this distance will be small, e.g.:
\begin{lemma}[$L_1$ inequality \citep{weissman2003inequalities}]~\label{lemma:weissman}
	Let $Y_{s,a} = Y_{s,a}^{(1)},Y_{s,a}^{(2)},\cdots,Y_{s,a}^{(N(s,a))}$ be \gls*{iid}\ random variables distributed according to $T(\cdot|s,a)$. Then, for all $\epsilon > 0$,
	\begin{align}
		\Pr(||T_Y(\cdot|s,a) - T(\cdot|s,a)||_1 \geq \epsilon) \leq (2^{|S|} - 2) e^{-\frac{1}{2}N(s,a)\epsilon^2 }.~\label{eq:inequalityMDP}
	\end{align}
\end{lemma}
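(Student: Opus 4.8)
The plan is to reduce the $L_1$ deviation to a maximum of scalar deviations over subsets of the state space, and then combine a union bound with a Chernoff-type tail bound applied to each subset. The structural fact I would establish first is that, for any two probability distributions on the finite set $S$ (suppressing the fixed pair $(s,a)$ in the notation), the $L_1$ distance admits the representation
\begin{align}
	||T_Y(\cdot|s,a) - T(\cdot|s,a)||_1 = 2 \max_{A \subseteq S} \big( T_Y(A) - T(A) \big),
\end{align}
where $T_Y(A) \triangleq \sum_{s' \in A} T_Y(s'|s,a)$ and $T(A)$ is defined analogously. To see this, I would write $\sum_{s'} |T_Y(s') - T(s')| = \max_{\sigma \in \{-1,+1\}^S} \sum_{s'} \sigma(s')(T_Y(s') - T(s'))$, the maximizer being the coordinatewise sign. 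Identifying the set $A = \{s' : \sigma(s') = +1\}$ and using that both $T_Y$ and $T$ sum to one collapses the signed sum to $2(T_Y(A) - T(A))$. The empty set and $S$ itself contribute zero, so only the $2^{|S|} - 2$ proper nonempty subsets are relevant; this is exactly where the prefactor in the claimed bound comes from.

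With this reformulation, the event of interest is a union over subsets, and I would bound it by
\begin{align}
	\Pr\big( ||T_Y - T||_1 \geq \epsilon \big) = \Pr\Big( \max_{A} \big(T_Y(A) - T(A)\big) \geq \tfrac{\epsilon}{2} \Big) \leq \sum_{A} \Pr\Big( T_Y(A) - T(A) \geq \tfrac{\epsilon}{2} \Big),
\end{align}
where the union bound runs over the $2^{|S|} - 2$ relevant subsets. For each fixed $A$, the quantity $T_Y(A) = \frac{1}{N(s,a)} \sum_{i=1}^{N(s,a)} \mathbbm{1}\{Y_{s,a}^{(\tau_i + 1)} \in A\}$ is an empirical mean of indicator random variables bounded in $[0,1]$, each with expectation $T(A)$. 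Since the $Y_{s,a}$ are assumed \gls*{iid}, these indicators are independent Bernoulli variables, so Hoeffding's inequality yields $\Pr(T_Y(A) - T(A) \geq \epsilon/2) \leq e^{-2 N(s,a)(\epsilon/2)^2} = e^{-\frac{1}{2}N(s,a)\epsilon^2}$. Inserting this per-subset bound, which is uniform over $A$, into the union bound gives exactly the stated inequality.

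The step carrying all the weight is the application of Hoeffding's inequality to each fixed subset $A$, and it is valid \emph{only} because independence of the $Y_{s,a}$ makes the indicators $\mathbbm{1}\{Y_{s,a}^{(\tau_i + 1)} \in A\}$ i.i.d.\ with the correct mean. The combinatorial reduction to subsets and the union bound are routine and entirely robust to the sampling process. By contrast, this concentration step is precisely the one that breaks once abstraction introduces history-dependence among the online samples, which is why the subsequent analysis must replace it with a martingale concentration inequality in place of Hoeffding's bound.
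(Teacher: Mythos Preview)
Your proposal is correct and follows essentially the same approach as the paper. The paper cites this lemma from \citet{weissman2003inequalities} without reproving it, but in Appendix~\ref{app:l1noniid} it proves the generalization Lemma~\ref{lemma:myL1} by exactly the route you take: the identity $||P_Y - P||_1 = 2\max_{\mathcal{S}\subseteq S}(P_Y(\mathcal{S}) - P(\mathcal{S}))$, a union bound over the $2^{|S|}-2$ nontrivial subsets, and Hoeffding's inequality applied to the indicator variables for each fixed subset.
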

These inequalities typically make use of the fact that samples are \gls*{iid}
It is not necessarily evident that these bounds can be applied without problem. Let us explore the transitions from a particular state, say state 42, in a Markov chain (we can ignore actions for this argument). 
Let $k$ and $l$ denote the time steps of two different visits to state 42. 
Without abstraction, the conditional distributions from which next states are sampled are identical. 
So the question now is if these are independent. 
That is, is it the case that: 
\begin{align}
	P( S_{k+1}, S_{l+1} | S_k=42, S_l=42 ) = P( S_{k+1} | S_k=42 ) * P( S_{l+1} | S_l=42 )?
\end{align}

We have that 
\begin{align}
	P( S_{k+1}, S_{l+1} | S_k=42, S_l=42 ) &= 	P( S_{k+1} | S_k=42, S_l=42 ) P( S_{l+1} | S_k=42, S_k+1, S_l=42 ) \\
	&= P( S_{k+1} | S_k=42, S_l=42 ) P( S_{l+1} | S_l=42 ) \text{ (due to the Markov property)}
\end{align}
So the question is if $P( S_{k+1} | S_k=42, S_l=42 )=P( S_{k+1} | S_k=42 )$?
In general, this is not the case, since the information that $S_l=42$ gives information about what $S_{k+1}$ was. 

However, as shown for instance by \citet{strehl2008analysis}, concentration inequalities for \gls*{iid} samples, such as Hoeffding’s Inequality, can still be used as an upper bound in this case, because of the Markov property and the identical distributions of the samples.
In this way, \gls*{mbrl} can upper bound the probability that the empirical model $T_Y(\cdot|s,a)$ will be far away ($\geq \epsilon$) from the actual model $T(\cdot|s,a)$.
When the empirical model is accurate, a policy based on this model leads to near-optimal performance in the \gls*{mdp} $M$~\citep{brafman2002r,strehl2008analysis,jaksch2010near,bourel2020tightening}.

\subsection{State Abstraction for Known Models}
\label{sec:background_stateAbs}
We can formulate state abstraction as a mapping from states to abstract states \citep{li2006towards}. 
This mapping is done with an abstraction function $\phi$, a surjective function that maps from states $s \in S$ to abstract states $\bar{s} \in \bar{S}$: $\phi(s): S \rightarrow \bar{S}$. 
We use the $\ \bar{} \ $ notation to refer to the abstract space and define $\bar{S}$ as $\bar{S} = \{\phi(s) | s \in S\}$. 
We slightly overload the definition of $\bar{s}$ to be able to write $s \in \bar{s}$.
In this case, $\bar{s}$ is the set of states that map to $\bar{s}$, i.e., $\bar{s} = \{ s \in S \ | \ \phi(s) = \bar{s} \}$.
This form of state abstraction is general, and clusters states with different dynamics into abstract states. 
We assume that the state abstraction deterministically maps states to an abstract state. 
Since each state maps to precisely one abstract state and multiple states can map to the same abstract state, the abstract state space is typically (much) smaller than the original state space, $|\bar{S}| \leq |S|$. 


We focus on a type of abstraction \textit{approximate model similarity abstraction}~\citep{abel2016near}, also known as approximate stochastic bisimulation~\citep{dean1997model,givan2003equivalence}. 
In this abstraction, two states can map to the same abstract state only if their behavior is similar in the abstract space, i.e., when the reward function and the transitions to abstract states are close. 
We can determine the transition probability to an abstract state $T(\bar{s}'|s,a)$ as:
\begin{align}
	T(\bar{s}'|s,a) = \sum_{s' \in \bar{s}'} T(s'|s,a).~\label{eq:TtoAbstract}	
\end{align}
Then, we can use \eqref{eq:TtoAbstract} to define approximate model similarity abstraction:
\begin{definition}~\label{def:approxmodel}
	An approximate model similarity abstraction, $\phi_{model,\eta_R, \eta_T}$, for fixed $\eta_R, \eta_T$, satisfies
	\begin{align}
		\phi_{model,\eta}(s_1) = \phi_{model,\eta}(s_2) \implies \nonumber 
		\forall a \in A : \ &|R(s_1,a) - R(s_2, a)| \leq \eta_R, \\
		\forall \bar{s}' \in \bar{S}, a \in A : \ &|T(\bar{s}'|s_1,a) - T(\bar{s}'|s_2,a) | \leq \eta_T.
	\end{align}
\end{definition}
From now on, we will refer to $\phi_{\text{model},\eta_R, \eta_T}$ as $\phi$. 
We note that this abstraction is still quite generic. 
It can cluster together states that have different transition and reward functions. 


\subsection{Planning With Abstract \glspl*{mdp}~\label{sec:2.4}}
In the planning setting, where the model is known a priori, we can use the abstraction function $\phi$ to construct an abstract \gls*{mdp}.
An abstract \gls*{mdp} can be helpful because it is smaller, making it easier to find a solution, and a solution for the abstract \gls*{mdp} can work well in the original \gls*{mdp}~\citep{li2006towards,abel2016near}. 
We construct an abstract \gls*{mdp} $\bar{M}_\omega$ from the model of an \gls*{mdp} $M$, an abstraction function $\phi$, and an action-specific weighting function $\omega$.~\footnote{The action-specific weighting function is more general than the typically used weighting function, which is not action-specific and only depends on the state $s$~\citep{li2006towards}. 
	More formally, it is the case where $\forall a,a' \in A, \ s \in S : \omega(s,a) = \omega(s,a')$.} 
The weighting function $\omega$ gives a weight to every state-action pair: $\forall s \in S, \ a \in A : 0 \leq \omega(s,a) \leq 1$. 
The weights of the state-action pairs associated with an abstract state $\bar{s}$ sum up to 1: $\sum_{s' \in \phi(s)} \omega(s',a) = 1$.
We can use the weighting function to create an abstract transition and reward function, which are weighted averages of the original transition and reward functions. 
In this way, from $M$, $\phi$, and any $\omega$, we can \textit{construct} an abstract \gls*{mdp} $\bar{M}_{\omega}$:
\begin{definition}[Abstract \gls*{mdp}]~\label{def:constructabstract}
	Given an \gls*{mdp} $M$, $\phi$, and $\omega$, an abstract \gls*{mdp} $\bar{M}_{\omega} = \langle \bar{S}, A, \bar{T}_\omega, \bar{R}_\omega \rangle$ is constructed as: $\bar{S} = \{ \phi(s) \ | \ s \in S \},	A = A,$
	\begin{align}
		\forall \bar{s} \in \bar{S}, \ a \in A : 	\bar{R}_\omega(\bar{s},a) &= \sum_{s \in \bar{s}} \omega(s,a) R(s,a), \\
		\forall \bar{s},\bar{s}' \in \bar{S}, \ a \in A : \bar{T}_\omega(\bar{s}'|\bar{s}, a) &= \sum_{s \in \bar{s}} \sum_{s' \in \bar{s}'} \omega(s,a)T(s'|s,a).~\label{eq:constructTransitions}
	\end{align}
\end{definition}
Note that the abstract \gls*{mdp} $\bar{M}_\omega$ itself is an \gls*{mdp}. So we can use planning methods for \glspl*{mdp} to find an optimal policy $\bar{\pi}^*$ for $\bar{M}_\omega$. 
A desirable property of the approximate model similarity abstraction is that we can upper bound the difference between the optimal value $V^*$ in $M$ and the value $V^{\bar{\pi}^*}$ obtained when following the policy $\bar{\pi}^*$ in $M$. 
These bounds exists in different forms~\citep{dearden1997abstraction,abel2016near,taiga2018approximate}. 
For completeness, we give these bounds for both the undiscounted finite horizon and the discounted infinite horizon:
\begin{theorem}~\label{theorem:boundsElena}
	Let $M = \langle S, A, T, R \rangle$ be an \gls*{mdp} and $\bar{M} = \langle \bar{S}, A, \bar{T}, \bar{R} \rangle$ an abstract \gls{mdp}, for some defined abstract transitions and rewards. We assume that 
	\begin{align}
		\forall  \bar{s},\bar{s}' \in \bar{S}, \ s \in \bar{s}, \ a \in A: |\bar{T}(\bar{s}'|\bar{s},a) - \Pr(\bar{s}'|s,a)| &\leq \eta_T ~\label{eq:1} \\
		\text{and } |\bar{R}(\bar{s},a) - R(s,a)| &\leq \eta_R. ~\label{eq:2}
	\end{align}
	Then, for a finite horizon problem with horizon $h$ we have:
	\begin{align}
		V^*(s) - V^{\bar{\pi}^*}(s) \leq 2h\eta_R + (h+1)h \eta_T |\bar{S}| R_\text{max}.
	\end{align}
	And for a discounted infinite horizon problem with discount $\gamma$ we have:
	\begin{align}
		V^*(s) - V^{\bar{\pi}^*}(s) \leq \frac{2\eta_R}{1 - \gamma} + \frac{2\gamma\eta_T |\bar{S}| R_\text{max}}{(1-\gamma)^2}.
	\end{align}
\end{theorem}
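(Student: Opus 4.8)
The plan is to reduce the theorem to two one-sided value comparisons and then add them; this additive structure is exactly what produces the factor of $2$ in both bounds. Write $\bar{V}^*$ for the optimal value function of $\bar{M}$, and recall that $\bar{V}^* = \bar{V}^{\bar{\pi}^*}$, the value of $\bar{\pi}^*$ inside $\bar{M}$. Lift $\bar{\pi}^*$ to $M$ by acting as $\bar{\pi}^*(\phi(s))$ in state $s$; its value in $M$ is the $V^{\bar{\pi}^*}$ of the statement. For the discounted case I would split
\[
V^*(s) - V^{\bar{\pi}^*}(s) = \underbrace{\left(V^*(s) - \bar{V}^*(\phi(s))\right)}_{\text{(A)}} + \underbrace{\left(\bar{V}^*(\phi(s)) - V^{\bar{\pi}^*}(s)\right)}_{\text{(B)}},
\]
and bound both (A) and (B) by $C \triangleq \frac{\eta_R}{1-\gamma} + \frac{\gamma \eta_T |\bar{S}| R_\text{max}}{(1-\gamma)^2}$, which gives the claimed $2C$.

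To bound (B), set $\Delta(s) \triangleq \bar{V}^*(\phi(s)) - V^{\bar{\pi}^*}(s)$ and write the Bellman equations for $V^{\bar{\pi}^*}$ in $M$ and for $\bar{V}^* = \bar{V}^{\bar{\pi}^*}$ in $\bar{M}$, both evaluated at the common action $a = \bar{\pi}^*(\phi(s))$. The one subtle manipulation is to compare the successor terms, which live in different spaces: I substitute $V^{\bar{\pi}^*}(s') = \bar{V}^*(\phi(s')) - \Delta(s')$, regroup the ground successors by their abstract class using $\Pr(\bar{s}'|s,a) = \sum_{s' \in \bar{s}'} T(s'|s,a)$ from \eqref{eq:TtoAbstract}, and add and subtract $\sum_{\bar{s}'} \Pr(\bar{s}'|s,a)\bar{V}^*(\bar{s}')$. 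This splits $\Delta(s)$ into a reward gap bounded by $\eta_R$ via \eqref{eq:2}, a transition-approximation term $\sum_{\bar{s}'}\left(\bar{T}(\bar{s}'|\bar{s},a) - \Pr(\bar{s}'|s,a)\right)\bar{V}^*(\bar{s}')$ bounded by $\eta_T |\bar{S}| R_\text{max}/(1-\gamma)$ via \eqref{eq:1} together with $\|\bar{V}^*\|_\infty \le R_\text{max}/(1-\gamma)$, and a contraction term $\gamma \sum_{s'} T(s'|s,a)\Delta(s')$ bounded by $\gamma \|\Delta\|_\infty$. Taking the supremum over $s$ yields
\[
\|\Delta\|_\infty \le \eta_R + \frac{\gamma \eta_T |\bar{S}| R_\text{max}}{1-\gamma} + \gamma \|\Delta\|_\infty,
\]
and solving for $\|\Delta\|_\infty$ gives $\|\Delta\|_\infty \le C$, i.e.\ (B).

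For (A) I would run the same algebra on the \emph{optimality} Bellman equations. Letting $a^*$ denote the maximizing action for $V^*(s)$, the abstract optimal value satisfies $\bar{V}^*(\phi(s)) \ge \bar{R}(\phi(s),a^*) + \gamma \sum_{\bar{s}'}\bar{T}(\bar{s}'|\phi(s),a^*)\bar{V}^*(\bar{s}')$ since it is a maximum over actions, so evaluating at $a^*$ gives a one-sided inequality in the right direction. The identical regrouping and add–subtract step then produces the same recursion for $\sup_s\left(V^*(s) - \bar{V}^*(\phi(s))\right)$, hence this quantity is also at most $C$, establishing (A). The finite-horizon undiscounted bound follows from exactly these two estimates unrolled as a backward induction over the $h$ stages: the contraction step $\gamma\|\Delta\|_\infty$ is replaced by passing the previous stage's error forward, the geometric factors $\tfrac{1}{1-\gamma}$ and $\tfrac{1}{(1-\gamma)^2}$ become $h$ and the arithmetic sum $\sum_{k=1}^{h} k = \tfrac{h(h+1)}{2}$ respectively, and bounding each stage's value function by (remaining steps)$\cdot R_\text{max}$ delivers $2h\eta_R + (h+1)h\,\eta_T |\bar{S}| R_\text{max}$.

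The main obstacle is the cross-space bookkeeping in the successor term: one must cleanly separate the genuine abstraction error that \eqref{eq:1} and \eqref{eq:2} control from the recursive/contraction piece, and this hinges on correctly collapsing $\sum_{s'} T(s'|s,a)\bar{V}^*(\phi(s'))$ into $\sum_{\bar{s}'}\Pr(\bar{s}'|s,a)\bar{V}^*(\bar{s}')$. I also note that the $|\bar{S}|$ factor arises from the crude termwise bound on the transition-error sum; since $\bar{T}(\cdot|\bar{s},a)$ and $\Pr(\cdot|s,a)$ are both distributions their difference sums to zero, so one could in principle sharpen it, but I will retain $|\bar{S}|$ to match the stated constants.
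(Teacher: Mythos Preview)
Your proposal is correct and follows essentially the same route as the paper. The paper also splits through $\bar{V}^*(\phi(s))$ into your terms (A) and (B), bounds each separately via the same add--subtract of $\sum_{\bar{s}'}\Pr(\bar{s}'|s,a)\bar{V}^*(\bar{s}')$, and combines by the triangle inequality; the only cosmetic differences are that the paper runs both the discounted and finite-horizon cases as explicit inductions on the horizon (taking the limit $n\to\infty$ for the discounted bound) rather than your direct contraction argument, and it proves the two-sided estimate $|V^*(s)-\bar{V}^*(\phi(s))|$ via $|\max f-\max g|\le\max|f-g|$ instead of your one-sided evaluation at $a^*$.
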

The proof of Theorem~\ref{theorem:boundsElena} is in Appendix~\ref{app:valuebounds}.
These bounds show that an optimal abstract policy $\bar{\pi}^*$ for $\bar{M}$ can also perform well in the original problem $M$ when the approximate errors $\eta_R$ and $\eta_T$ are small.
They hold for  any abstract \gls{mdp} $\bar{M}$ created from an approximate model similarity abstraction $\phi$ and any valid weighting function $\omega$.

\section{\gls*{mbrl} From Abstracted Observations~\label{sec:abstractedmbrl}}
In \gls*{rlao}, we have an abstraction function $\phi$ and instead of observing the true state $s$, the agent observes the abstract state $\phi(s)$.
In contrast to the planning setting in Section~\ref{sec:background_stateAbs}, here we act in an \gls*{mdp} $M$ of which we do \textit{not} know the transition and reward functions. 
As mentioned in the introduction, there are surprisingly few results for the \gls*{rlao} setting (Section~\ref{sec:relatedWork} discusses special cases people have considered).
Specifically,
results of \gls*{mbrlao} are lacking. 
Section~\ref{sec:dependence} explains why this is by analyzing how abstraction leads to dependence between samples, which means that the methods for dealing with Markov transitions, as covered in Section~\ref{sec:2.2guaranteesMBRL}, no longer suffice.
Then, in Section~\ref{sec:simulator}, we show how concentration inequalities for martingales can be used to still learn an accurate model in \gls*{rlao}.  
To illustrate how this result can be used to extend the results of \gls*{mbrl} methods to \gls*{rlao}, we extend the results of the R-MAX algorithm~\citep{brafman2002r}.
R-MAX is a well-known and straightforward method that guarantees sample efficient learning. 


\subsection{The General \gls*{mbrl} From Abstracted Observations Approach}
In \gls*{rlao}, the agent collects data for every abstract state-action pair $(\bar{s},a)$, stored as sequences $\bar{Y}_{\bar{s},a}$:
\begin{equation}
	\bar{Y}_{\bar{s},a} :\{\bar{s}'^{(\tau_1 + 1)},\bar{s}'^{(\tau_2 + 1)},\cdots,\bar{s}'^{(\tau_{N(\bar{s},a)}+1)}\}.~\label{eq:sampledTransitions}
\end{equation}
Like in~\eqref{eq:empiricalT}, we construct an empirical model $\bar{T}_Y$, now looking at the abstract next-states that the agent reached:
\begin{align}~\label{eq:empiricalTabstract}
	\bar{T}_Y(\bar{s}'|\bar{s},a) \triangleq \frac{1}{N(\bar{s},a)} \sum_{i=1}^{N(\bar{s},a)}\mathbbm{1}\{\bar{Y}_{\bar{s},a}^{(i)} = \bar{s}'\}.
\end{align}
Suppose we could guarantee that the empirical model $\bar{T}_Y$ was equal, or close, to the transition function $\bar{T}_\omega$ of an abstract \gls*{mdp} $\bar{M}_\omega$ constructed from the true \gls*{mdp} with $\phi$ and a valid $\omega$. In that case, we could bound the loss in performance due to applying the learned policy $\bar{\pi}^*$ to $M$ instead of applying the optimal policy $\pi^*$~\citep{abel2016near, taiga2018approximate}. Our main question is: do the finite-sample model learning guarantees of \gls*{mbrl} algorithms still hold in the \gls*{rlao} setting?

\subsection{Requirements for guarantees for \gls*{mbrl} From Abstracted Observations~\label{sec:dependence}}
In order to give guarantees, we need to show that the empirical model $\bar{T}_Y$ is close to the transition model of an abstract \gls*{mdp} $\bar{M}_\omega$. Before defining this transition model of $\bar{M}_\omega$, we examine the data collection.
In the online data collection, the agent obtains a sample for $\bar{Y}_{\bar{s},a}$ when it is in a state $s \in \bar{s}$ and takes action $a$. 
Specifically, the agent obtains the $i$-th sample $\bar{Y}_{\bar{s},a}^{(i)}=\bar{s}'^{\tau_1 +1}$
from state $X_{\bar{s},a}^{(i)}= s^{\tau_i} \in \bar{s}$: 
\begin{align}
	\bar{Y}_{\bar{s},a}^{(i)}\sim T(\cdot|X_{\bar{s},a}^{(i)}=s^{\tau_i},a).\label{eq:sampling}
\end{align}
Let $X_{\bar{s},a}=(X_{\bar{s},a}^{(i)})_{i=1}^{N(\bar{s},a)}$ denote the sequence of states $s \in \bar{s}$ from which the agent took action $a$.
Each state $s$ gets a weight according to how often it appears in $X_{\bar{s},a}$, which we formalize with the weighting function $\omega_X$: 
\begin{align}
	\forall s \in \bar{s}, a \in A : \omega_X(s,a) \triangleq \frac{1}{N(\bar{s},a)} \sum_{i=1}^{N(\bar{s},a)} \mathbbm{1}\{X^{(i)}_{\bar{s},a} = s\}.~\label{eq:omegax}
\end{align}
We use $\omega_X$ to define $\bar{T}_{\omega_X}$ analogous to \eqref{eq:constructTransitions}:
\begin{align}
	\forall \bar{s}, \bar{s}' \in \bar{S}, a \in A : \bar{T}_{\omega_X}(\bar{s}'|\bar{s},a) &\triangleq \sum_{s \in \bar{s}} \omega_X(s,a) \sum_{s' \in \bar{s}'} T(s'|s,a).~\label{eq:omegaxT}
\end{align}
To highlight the close connection between  $\bar{T}_{\omega_X}$ and $\bar{T}_Y$ (build of samples from $T(\cdot|X_{\bar{s},a}^{(i)}=s^{\tau_i},a)$), we give a second, but equivalent,~\footnote{In the proof of Theorem~\ref{theorem:thelemma} we show that these two definitions are equivalent.} definition of $\bar{T}_{\omega_X}$:
\begin{align}
	\forall (\bar{s},a),\bar{s}' :   \bar{T}_{\omega_X}(\bar{s}'|\bar{s},a) \triangleq \frac{1}{N(\bar{s},a)} \sum_{i=1}^{N(\bar{s},a)}T(\bar{s}'|X_{\bar{s},a}^{(i)},a).~\label{eq:omegaxT2}
\end{align} 
Note that $\omega_X$ and thus $\bar{T}_{\omega_X}$ are not fixed a priori. 
Instead, like $\bar{T}_Y$, they are empirical quantities that change at every time step and depend on the policy and the (stochastic) outcomes. 
Importantly, by its definition, $\omega_X$ is a valid $\omega$ at every timestep.
It is not a problem that $\omega_X$ and $\bar{T}_{\omega_X}$ change over time,  as long as the empirical model $\bar{T}_Y$ can be shown to be close to $\bar{T}_{\omega_X}$. 
For this, we want a concentration inequality to provide bounds on the deviation of the empirical model $\bar{T}_Y$ from $\bar{T}_{\omega_X}$; we refer to this inequality as the abstract L1 inequality, similar in form to \eqref{eq:inequalityMDP}:
\begin{align}
	P(|\bar{T}_Y(\cdot|\bar{s},a) - \bar{T}_{\omega_X}(\cdot|\bar{s},a)|_1 \geq \epsilon) \leq \delta,
	\label{eq:inequalityMDPabstracted}
\end{align}
where $\bar{T}_Y(\cdot|\bar{s},a)$ is defined according to \eqref{eq:empiricalTabstract} and $\bar{T}_{\omega_X}$ according to \eqref{eq:omegaxT}. 

\subsection{Why the Previous Strategy Fails: Dependent Samples That Are Not Identically Distributed}
Suppose we could directly obtain \gls*{iid}\ samples from $\bar{T}_{\omega_X}$ and base our empirical model $\bar{T}_Y$ on the obtained samples. 
In that case, we could show that the abstract L1 inequality holds by applying Lemma~\ref*{lemma:weissman}. 
This lemma would be applicable because we could obtain a number $N(\bar{s},a)$ of \gls*{iid}\ samples per abstract state-action pair, distributed according to $\bar{T}_{\omega_X}(\cdot|\bar{s},a)$.
However, the samples are not \gls*{iid}\ in \gls*{rlao}: the samples are neither identically distributed nor independent, and this combination means that previous techniques fail.
We will first cover the distribution of the samples and show that samples not being identically distributed is not a problem. 
Then we prove that samples are not guaranteed to be independent. 
Afterward, Section~\ref{sec:simulator} shows that we can still learn when the samples are dependent.

\subsubsection{Why We Can Not Use Lemma~\ref{lemma:weissman}: Dependent Samples.~\label{sec:dependentsamples}}
The samples are not necessarily identically distributed in \gls*{rlao} since the agent obtains a sample $\bar{Y}^{(i)}$ when taking action $a$ from state $X^{(i)}_{\bar{s},a} = s \in \bar{s}$, as in \eqref{eq:sampling}. 
If $X^{(i)}_{\bar{s},a} \neq X^{(j)}_{\bar{s},a}$, these states can have different transition distributions. 
This implies that in general we might not be able to apply Lemma~\ref{lemma:weissman}, because it assumes identically distributed random variables.
However, different distributions by themselves need not be a problem; we show that the result also holds when the random variables are not identically distributed:
\begin{lemma}~\label{lemma:myL1}
	Let $X_{\bar{s},a} = s_1, \cdots, s_m$ be a sequence of states $s \in \bar{s}$ and let $\boldsymbol{\bar{Y}}_{\bar{s},a} = \bar{Y}^{(1)},\bar{Y}^{(2)},\cdots,\bar{Y}^{(m)}$ be independent random variables distributed according to $\Pr(\cdot|s_1,a), \cdots, \Pr(\cdot|s_m,a)$ (\eqref{eq:TtoAbstract}).
	Then, for all $\epsilon > 0$,
	\begin{align}
		\Pr(||\bar{T}_Y(\cdot|\bar{s},a) - \bar{T}_{\omega_X}(\cdot|\bar{s},a) ||_1 \geq \epsilon) \leq (2^{|\bar{S}|} - 2) e^{-\frac{1}{2}m\epsilon^2 }.
	\end{align}
\end{lemma}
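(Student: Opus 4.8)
The plan is to recognize this as the Weissman $L_1$ inequality (Lemma~\ref{lemma:weissman}) with the identically-distributed hypothesis removed, and to observe that the standard proof of the bound $(2^{|S|}-2)e^{-\frac{1}{2}m\epsilon^2}$ never actually invokes that hypothesis. First I would record the equivalence of the two expressions \eqref{eq:omegaxT} and \eqref{eq:omegaxT2} for $\bar{T}_{\omega_X}$: substituting the definition \eqref{eq:omegax} of $\omega_X$ into \eqref{eq:omegaxT} and interchanging the two finite sums gives $\sum_{s\in\bar{s}}\omega_X(s,a)\,T(\bar{s}'|s,a)=\frac{1}{m}\sum_{i=1}^{m}T(\bar{s}'|X^{(i)}_{\bar{s},a},a)$, which is \eqref{eq:omegaxT2}. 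The payoff of the second form is immediate: since $\bar{Y}^{(i)}\sim\Pr(\cdot|s_i,a)$ we have $\Pr(\bar{Y}^{(i)}=\bar{s}')=T(\bar{s}'|s_i,a)$, so $\bar{T}_{\omega_X}(\bar{s}'|\bar{s},a)=\frac{1}{m}\sum_i\Pr(\bar{Y}^{(i)}=\bar{s}')=\mathbb{E}[\bar{T}_Y(\bar{s}'|\bar{s},a)]$. In words, $\bar{T}_{\omega_X}(\cdot|\bar{s},a)$ is exactly the mean of the empirical distribution $\bar{T}_Y(\cdot|\bar{s},a)$. This identification is the only place where the hypotheses are used, and it holds verbatim without identical distribution.

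Next I would pass from the $L_1$ norm to subsets. For any two distributions $P,Q$ on $\bar{S}$ one has $\|P-Q\|_1=2\max_{A\subseteq\bar{S}}(P(A)-Q(A))$, with maximizer $A^\star=\{\bar{s}':P(\bar{s}')\geq Q(\bar{s}')\}$ and with both the empty set and $\bar{S}$ giving the value $0$. Hence the event $\{\|\bar{T}_Y-\bar{T}_{\omega_X}\|_1\geq\epsilon\}$ equals $\{\exists A:\ \bar{T}_Y(A)-\bar{T}_{\omega_X}(A)\geq\epsilon/2\}$, and because the two trivial subsets contribute nothing, a union bound over the remaining $2^{|\bar{S}|}-2$ subsets gives $\Pr(\|\bar{T}_Y-\bar{T}_{\omega_X}\|_1\geq\epsilon)\leq\sum_{A}\Pr(\bar{T}_Y(A)-\bar{T}_{\omega_X}(A)\geq\epsilon/2)$.

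It then remains to bound each summand. Fixing $A$, I would write $\bar{T}_Y(A)=\frac{1}{m}\sum_{i=1}^m\mathbbm{1}\{\bar{Y}^{(i)}\in A\}$ as an average of the indicators $Z_i=\mathbbm{1}\{\bar{Y}^{(i)}\in A\}$. These $Z_i$ are independent (because the $\bar{Y}^{(i)}$ are) and take values in $[0,1]$, and by the previous paragraph their average has mean $\bar{T}_{\omega_X}(A)$. Hoeffding's inequality applies to sums of independent bounded variables and does \emph{not} require them to be identically distributed, so $\Pr(\bar{T}_Y(A)-\bar{T}_{\omega_X}(A)\geq\epsilon/2)\leq e^{-2m(\epsilon/2)^2}=e^{-\frac{1}{2}m\epsilon^2}$. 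Summing over the $2^{|\bar{S}|}-2$ nontrivial subsets yields the claimed bound.

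The main conceptual point — worth stating carefully rather than being a technical obstacle — is the last step: the entire argument rests on the fact that Hoeffding's bound needs only independence and boundedness, whereas Lemma~\ref{lemma:weissman} is phrased for i.i.d.\ data. Once $\bar{T}_{\omega_X}$ is recognized as $\mathbb{E}[\bar{T}_Y]$, non-identical distributions cause no difficulty, so the only genuinely restrictive hypothesis surviving into the later development is independence. I would flag explicitly that nothing here controls dependence among the $\bar{Y}^{(i)}$; closing that gap is precisely the role of the subsequent martingale argument.
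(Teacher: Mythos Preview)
Your proposal is correct and follows essentially the same route as the paper's proof: rewrite the $L_1$ distance as $2\max_{A\subseteq\bar S}(\bar T_Y(A)-\bar T_{\omega_X}(A))$, union-bound over the $2^{|\bar S|}-2$ nontrivial subsets, and apply Hoeffding's inequality to the indicator variables $\mathbbm{1}\{\bar Y^{(i)}\in A\}$, noting that Hoeffding requires only independence and boundedness. Your additional step of explicitly identifying $\bar T_{\omega_X}$ as $\mathbb{E}[\bar T_Y]$ is a nice clarification the paper leaves implicit (it simply says ``$P_{\omega_X}(\mathcal{\bar S})$ acts as $\mu$''), but otherwise the arguments coincide.
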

The proof can be found in Appendix~\ref{app:l1noniid}. 
Therefore, if the samples in \gls*{rlao} were independent, then we could apply Lemma~\ref{lemma:myL1} to guarantee an accurate model.

\paragraph{Independence.}
One could be tempted to assume the samples are independent, i.e.,
\begin{align}
	\forall \bar{s}_1',\cdots,\bar{s}_m' \in (\bar{S})^m : \Pr(\bar{Y}_{\bar{s},a}^{(1)}=\bar{s}'_1, \cdots,\bar{Y}_{\bar{s},a}^{(m)}=\bar{s}'_m) 
	= \Pr(\bar{Y}_{\bar{s},a}^{(1)} =\bar{s}'_1)\cdots P(\bar{Y}_{\bar{s},a}^{(m)}=\bar{s}'_m).~\label{eq:independence}
\end{align}
However, this is not true in general in \gls*{rlao}:
\begin{observation}~\label{observation:notindependent}
	When collecting samples online using an abstraction function, such samples are not necessarily independent. 
\end{observation}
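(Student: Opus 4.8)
The statement is a negative existence claim, so the plan is to exhibit a single explicit counterexample: a small MDP together with an abstraction function $\phi$ for which two consecutively collected samples of the sequence $\bar{Y}_{\bar{s},a}$ violate the factorization identity \eqref{eq:independence}. Since the claim is only that the samples are \emph{not necessarily} independent, one well-chosen instance suffices, and I would keep it as small as possible so that the two relevant conditional probabilities can be read off by hand rather than through a general argument.

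The construction I have in mind isolates the mechanism already hinted at in Section~\ref{sec:2.2guaranteesMBRL}: the observed abstract outcome carries information about which (hidden) ground state the agent occupies at its \emph{next} visit to $\bar{s}$, and ground states aliased to the same abstract state may have different transition distributions. Concretely, I would take a single action $a$ and an abstract state $\bar{s}=\{s_1,s_2\}$ whose two members transition into two further abstract states $\bar{c}$ and $\bar{d}$ with \emph{different} probabilities, say $T(\bar{c}|s_1,a)=p_1$ and $T(\bar{c}|s_2,a)=p_2$ with $p_1\neq p_2$. The ground states constituting $\bar{c}$ and $\bar{d}$ are arranged as deterministic one-step ``corridors'' that route the agent back into $\bar{s}$ so that observing $\bar{c}$ always returns it to $s_1$ and observing $\bar{d}$ always returns it to $s_2$. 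I would pick $|p_1-p_2|\leq \eta_T$ and equal rewards so that $\phi$ is a genuine approximate model similarity abstraction in the sense of Definition~\ref{def:approxmodel}, making the counterexample relevant to exactly the setting we care about.

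With this in place the computation is immediate. Conditioning on the first sample, $\Pr(\bar{Y}^{(2)}=\bar{c}\mid \bar{Y}^{(1)}=\bar{c})=p_1$ because the agent is then necessarily at $s_1$, whereas $\Pr(\bar{Y}^{(2)}=\bar{c}\mid \bar{Y}^{(1)}=\bar{d})=p_2$ because the agent is then at $s_2$. Since $p_1\neq p_2$, the conditional law of the second sample genuinely depends on the first, so the joint distribution does not factor as required by \eqref{eq:independence}, and the samples are dependent.

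The main obstacle is not the algebra but guarding against an \emph{accidentally independent} example. If the aliased states shared the same abstract transition distribution ($p_1=p_2$, the exact model similarity / bisimulation case) the abstract process would be Markov and the samples would in fact behave well; and even with $p_1\neq p_2$, if the corridors returned the agent to a ground state that were statistically independent of the observed outcome, the dependence would wash out into a fixed mixture. The design must therefore simultaneously enforce (i) distinct ground transition distributions for $s_1$ and $s_2$, and (ii) an informative coupling between the observed outcome and the next visited ground state. Verifying that both hold, and that the grouping still satisfies Definition~\ref{def:approxmodel}, is the crux; once that is established, the failure of \eqref{eq:independence} follows by direct substitution.
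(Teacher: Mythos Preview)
Your proposal is correct and follows essentially the same construction as the paper: a two-state abstract block whose members transition to two singleton abstract states with different probabilities, and deterministic return corridors that make the next visited ground state a function of the observed outcome. The paper instantiates this with concrete numbers ($p_1=0.6$, $p_2=0.4$) and compares the joint probability to the product of marginals directly, whereas you argue via the conditional $\Pr(\bar{Y}^{(2)}\mid\bar{Y}^{(1)})$ varying with $\bar{Y}^{(1)}$; these are equivalent, and your added check that $|p_1-p_2|\le\eta_T$ keeps the example inside Definition~\ref{def:approxmodel} is a welcome refinement the paper leaves implicit.
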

\begin{wrapfigure}{r}{0.3\linewidth}
		\centering
		\includegraphics[width=1\linewidth]{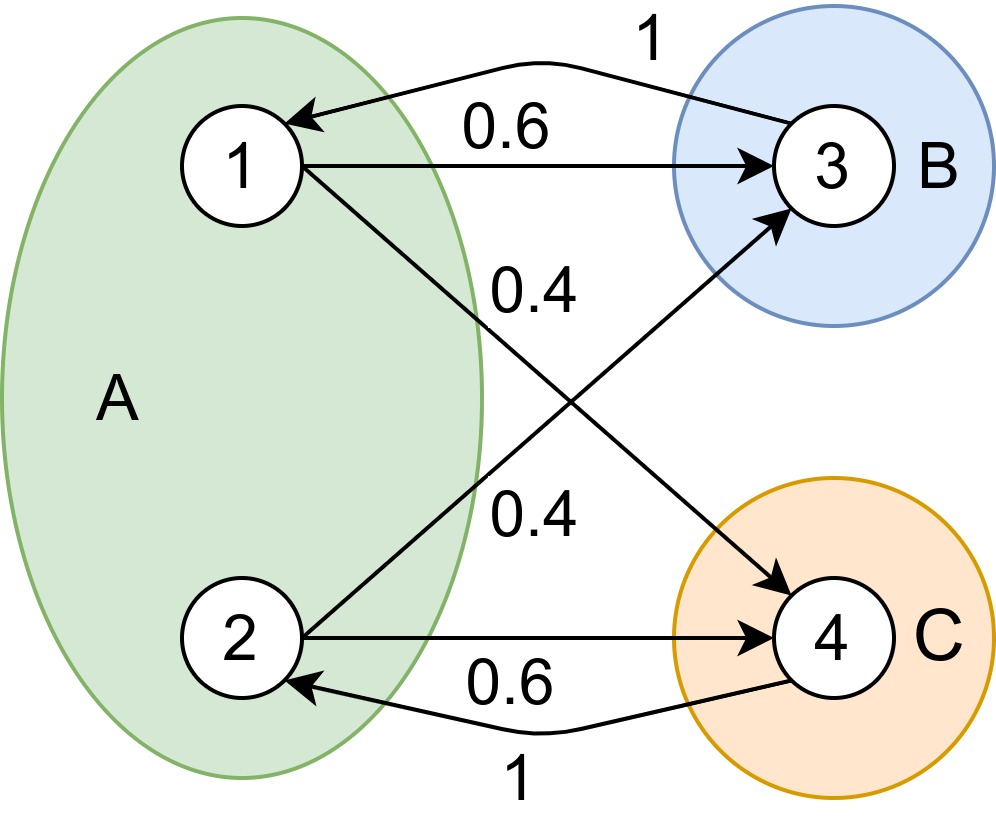}
		\caption{Simple \gls*{mdp}, with only 1 action, and abstraction. The small circles are states (1,2,3,4). A, B and C are the abstract states. The arrows show the transition probabilities, e.g. $P(3|1) = 0.6$.}
		\label{fig:counter}
		\vspace{-35pt}
\end{wrapfigure}
Samples can be dependent when 1) samples are collected online in the real environment, of which we do not know the transitions, and 2) the samples are collected for abstract states $\bar{s}$.
Observation~\ref{observation:notindependent} can be understood from the perspective that the \gls*{rlao} problem corresponds to \gls*{rl} in a \gls*{pomdp}~\citep{bai2016markovian}. 
The corresponding \gls*{pomdp} uses the abstraction function as the observation function and the abstract states as observations.
Since the transitions between observations need not be Markov in \glspl*{pomdp}, the samples from abstract states can depend on the history.
While this observation may be clear from the \gls*{pomdp} perspective, work in \gls*{rlao} regularly assumes (explicitly or implicitly) that independent samples can somehow be obtained~\citep{paduraru2008model,ortner2014selecting,jiang2015abstraction,ortner2019regret}. 
In the following counterexample, we rigorously show that samples are not necessarily independent.

\paragraph{Counterexample.}
We use the example \gls*{mdp} and abstraction in Figure~\ref*{fig:counter}, where we have four states, three abstract states, and only one action. 
Since the example \gls*{mdp} has only one action, we omit the action from the notation.
We examine the transition function of abstract state $A$, $\bar{T}_Y(\cdot|A)$ and consider the first two times we transition from $A$. 
These two transition samples, $\bar{s}'_1$ and $\bar{s}'_2$, are the first two entries in $\bar{Y}_A$.  
We show that the samples are not independent for at least one combination of $\bar{s}'_1$ and $\bar{s}'_2$. 

Let $\bar{s}'_1 = \bar{s}'_2 = B$, i.e., the first two times we experience a transition from the abstract state $A$, we end up in $B$. 
We denote the $i$-th experienced transition from abstract state $A$ as $\bar{Y}_{A}^{(i)}$.
Let state $1$ be the starting state. 

We start with the product of the probabilities:
\begin{equation}
	\Pr(\bar{Y}_{A}^{(1)}=B) \Pr(\bar{Y}_{A}^{(2)}=B).
\end{equation}
We have $\Pr(\bar{Y}_{A}^{(1)}=B) = \Pr(B|1) = 0.6$ for the first term since state $1$ is the starting state. 
The second term is more complex since it includes the probability of starting the transition from state $1$ and state $2$. 

We have:
\begin{align}
	&\Pr(\bar{Y}_{A}^{(2)}=B) = \sum_{\bar{s} \in \bar{S}} \Pr(\bar{Y}_{A}^{(2)}=B| \bar{Y}_{A}^{(1)}=\bar{s}) \Pr(\bar{Y}_{A}^{(1)}=\bar{s}) \\
	&= \Pr(\bar{Y}_{A}^{(2)}=B| \bar{Y}_{A}^{(1)}=A) \Pr(\bar{Y}_{A}^{(1)}=A) 
	+ \Pr(\bar{Y}_{A}^{(2)}=B| \bar{Y}_{A}^{(1)}=B) \Pr(\bar{Y}_{A}^{(1)}=B) \nonumber \\
	&+ \Pr(\bar{Y}_{A}^{(2)}=B| \bar{Y}_{A}^{(1)}=C) \Pr(\bar{Y}_{A}^{(1)}=C). \label{eq:toref1} \\
	&= \Pr(\bar{Y}_{A}^{(2)}=B| \bar{Y}_{A}^{(1)}=B) \Pr(\bar{Y}_{A}^{(1)}=B) + \Pr(\bar{Y}_{A}^{(2)}=B| \bar{Y}_{A}^{(1)}=C) \Pr(\bar{Y}_{A}^{(1)}=C). \label{eq:toref2} \\
	&= \Pr(Y_{A}^{(2)}=3| \bar{Y}_{A}^{(1)}=3) \Pr(Y_{A}^{(1)}=3) 
	+ \Pr(Y_{A}^{(2)}=3| Y_{A}^{(1)}=4) \Pr(Y_{A}^{(1)}=4) \label{eq:toref3} \\
	&= 0.6 \cdot 0.6 + 0.4 \cdot 0.4 = 0.52. \label{eq:toref4}
\end{align}
For the step from \eqref{eq:toref1} to \eqref{eq:toref2}, $\Pr(\bar{Y}_{A}^{(1)}=A)$ is $0$ because there is no transition from a state in $A$ to a state in $A$.
Then, from \eqref{eq:toref2} to \eqref{eq:toref3}, we use that both abstract states $B$ and $C$ consist of exactly 1 state. So, e.g., $\Pr(\bar{Y}_{A}^{(2)}=B| \bar{Y}_{A}^{(1)}=B) = \Pr(Y_{A}^{(2)}=3| \bar{Y}_{A}^{(1)}=3)$. 
So, for the product of the probabilities, we end up with: $\Pr(\bar{Y}_{A}^{(1)}=B) \Pr(\bar{Y}_{A}^{(2)}=B) =0.6 \cdot 0.52 = 0.321$.

For the joint probability, we have: 
\begin{align}
	Pr(\bar{Y}_{A}^{(1)}=B, \bar{Y}_{A}^{(2)}=B) 
	&= \Pr(\bar{Y}_{A}^{(1)}=B) \Pr(\bar{Y}_{A}^{(2)}=B| \bar{Y}_{A}^{(1)}=B) \\
	&= \Pr(B|1) (\Pr(B|1) \Pr(1|B)) \\
	&= 0.6 \cdot (0.6 \cdot 1) \\
	&= 0.6 \cdot 0.6 \\
	&= 0.36. 
\end{align}
Here, $\Pr(\bar{Y}_{A}^{(2)}=B| \bar{Y}_{A}^{(1)}=B) = \Pr(B|1) \Pr(1|B)$ because the first transition ends in state $B$ and we always transition to state $1$ from state $B$. Hence, $\Pr(\bar{Y}_{A}^{(2)}=B| \bar{Y}_{A}^{(1)}=B) = \Pr(B|1) \Pr(1|B) = 0.6 \cdot 1$. 

Combining the joint probability and the product of probabilities, we end up with: 
\begin{align}
	0.36 = \Pr(\bar{Y}_{A}^{(1)}=B, \bar{Y}_{A}^{(2)}=B) \neq \Pr(\bar{Y}_{A}^{(1)}=B) \Pr(\bar{Y}_{A}^{(2)}=B) = 0.6 \cdot 0.52.
\end{align}
Thus, the samples are not independent. 
Leading us to the second observation.
\begin{observation}
	As independence cannot be guaranteed, Lemmas~\ref*{lemma:weissman} and~\ref*{lemma:myL1} cannot be readily applied to show that the abstract L1 inequality holds.	
\end{observation}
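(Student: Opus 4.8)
The plan is to derive this observation as a direct logical consequence of the counterexample just computed together with the stated hypotheses of the two lemmas. Since the statement claims only that the lemmas \emph{cannot be readily applied}, there is no need to show that no concentration bound exists at all (that is the task of Section~\ref{sec:simulator}); it suffices to exhibit that the precondition shared by both lemmas fails in \gls*{rlao}.

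First I would inspect the hypotheses of each lemma. Lemma~\ref{lemma:weissman} assumes the samples are \gls*{iid}, which in particular requires mutual independence, while Lemma~\ref{lemma:myL1} relaxes only the identical-distribution requirement but still explicitly assumes the samples $\bar{Y}^{(1)},\dots,\bar{Y}^{(m)}$ are independent. Hence independence, formalized as the factorization in \eqref{eq:independence}, is a necessary precondition for invoking either result as stated.

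Next I would invoke the counterexample, which exhibits a concrete \gls*{mdp} and abstraction for which
\begin{align}
	\Pr(\bar{Y}_{A}^{(1)}=B, \bar{Y}_{A}^{(2)}=B) = 0.36 \neq 0.6 \cdot 0.52 = \Pr(\bar{Y}_{A}^{(1)}=B)\,\Pr(\bar{Y}_{A}^{(2)}=B).
\end{align}
Thus the joint distribution of the online samples fails to factor as required in \eqref{eq:independence}, establishing Observation~\ref{observation:notindependent}: online abstracted samples need not be independent. The intuition, as noted from the \gls*{pomdp} viewpoint, is that knowing the later abstract outcome constrains which underlying state was visited, which in turn carries information about the earlier sample.

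Finally, combining these two facts yields the observation immediately: because the independence hypothesis shared by Lemmas~\ref{lemma:weissman} and~\ref{lemma:myL1} can be violated in \gls*{rlao}, neither lemma can be applied verbatim to establish the abstract L1 inequality \eqref{eq:inequalityMDPabstracted}. The only real obstacle here is one of care rather than difficulty: one must confirm that independence is genuinely load-bearing in each lemma and cannot be silently replaced by the weaker Markov-plus-identical-distribution structure that sufficed in the non-abstract case of Section~\ref{sec:2.2guaranteesMBRL}. Pinning down exactly why that earlier argument breaks is precisely what motivates the martingale-based replacement developed next.
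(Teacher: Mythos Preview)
Your proposal is correct and mirrors the paper's own justification: the paper simply notes that ``this claim follows from the fact that Lemmas~\ref*{lemma:weissman} and~\ref*{lemma:myL1} both use the assumption of independence in their proofs,'' which is exactly the argument you give after invoking the counterexample and Observation~\ref{observation:notindependent}. Your additional remark that independence is genuinely load-bearing (and not substitutable by the Markov-plus-identical-distribution trick of Section~\ref{sec:2.2guaranteesMBRL}) anticipates precisely the point the paper makes in the subsequent subsection.
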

This claim follows from the fact that Lemmas~\ref*{lemma:weissman} and~\ref*{lemma:myL1} both use the assumption of independence in their proofs.
It would still be possible to obtain independent samples if we could, for example, have access to a simulator of the problem. 
In that case, it is still possible to give guarantees on the accuracy of the model, which we show in Appendix~\ref{app:simulator}. 
However, we consider the setting where a simulator is not available.

\subsubsection{Why the Approach by~\citet{strehl2008analysis} Fails}
While the counterexample above is informative as to why Lemmas~\ref{lemma:weissman} and ~\ref{lemma:myL1} cannot be applied, the failure to apply these lemmas may not come as a surprise: in the end, as shown by \citet{strehl2008analysis}, more work is needed. 
They are able to use these concentration inequalities due to an additional proof that shows that even though the samples are drawn from a Markov chain, and thus not fully independent, the inequality still serves as an upper bound. 
This raises the question whether we could not follow the same approach, and show that Lemma~\ref{lemma:weissman} (or~\ref{lemma:myL1}) is still an upper bound in the \gls*{rlao} setting.

It turns out that this is not possible, as that result uses the Markov property and requires each sample to be identically distributed.
Without abstraction, only $(s,a)$ and the next states $s’ \sim P(\cdot | s, a)$ are considered, which indeed have the same distribution.
In \gls*{rlao}, the outcomes of multiple states are grouped together and for a pair $(\bar{s},a)$ both the state $s \in \bar{s}$ that we reach and the resulting next state $\bar{s}’$ need to be considered.
Since the distributions $s' \sim P(\cdot|s_1,a)$ and $s' \sim P(\cdot|s_2,a)$  of two states $s_1, s_2 \in \bar{s}$ do not have to be the same, these samples are not guaranteed to be identically distributed.

\subsubsection{Summary: Why Previous Strategies Fail}
Summarizing, we have seen that previous strategies fail due to the combination of samples neither being independent, nor being identically distributed. 
We showed that if the samples would only be non-identically distributed (but independent) we could modify the proof of Lemma~\ref{lemma:weissman}, leading to Lemma~\ref{lemma:myL1}, that could be directly used. 
On the other hand, if the samples were only dependent (but still identically distributed), it would be possible to follow the strategy of \citet{strehl2008analysis}. 
However, given that we are dealing with the dependent non-identically distributed setting, neither of these previous strategies work, and a new approach is needed, as we present next.

\subsection{Guarantees for Abstract Model Learning Using Martingales~\label{sec:simulator}} 
Now we want to give a guarantee in the form of the abstract L1 inequality from \eqref{eq:inequalityMDPabstracted}.\footnote{We focus on the transition function, for the reward function we make some simplifying assumptions in Section~\ref{sec:RMAXAbstracted}. 
	We discuss in Section~\ref{sec:discussion} how these assumptions can be relaxed and the result extended for the reward function.}
In Section~\ref{sec:dependence}, we found this was not possible with concentration inequalities such as Hoeffding's inequality because the samples are not guaranteed to be independent. 
Here we consider a related bound for weakly dependent samples, the Azuma-Hoeffding inequality. 
This inequality makes use of the properties of a martingale difference sequence, which are slightly weaker than independence:
\begin{definition}[Martingale difference sequence~\citep{azuma1967weighted}]~\label{def:mds}
	The sequence $Z_1, Z_2, \cdots$ is a martingale difference sequence if, $\forall i$, it satisfies the following conditions:
	\begin{align*}
		E[&Z_i|Z_1,Z_2,\cdots,Z_{i-1}] = 0, \\
		|&Z_i| < \infty.
	\end{align*}
\end{definition}
The properties of the martingale difference sequence can be used to obtain the following concentration inequality:
\begin{lemma}[Azuma-Hoeffding Inequality~\citep{hoeffding1963probability,azuma1967weighted}]\label{lemma:azumahoeffding}
	If the random variables $Z_1, Z_2,\cdots$ form a martingale difference sequence (Def.~\ref{def:mds}), with $|Z_i|\leq b$, then 
	\begin{align}
		\Pr(\sum_{i=1}^n Z_i > \epsilon) \leq e^{-\frac{\epsilon^2}{2b^2n}}.
	\end{align} 
\end{lemma}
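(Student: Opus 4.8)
The plan is to use the classical Chernoff (exponential moment) method, turning the tail bound into a bound on a moment generating function and then exploiting the martingale structure through iterated conditioning. First I would fix an arbitrary $\lambda > 0$ and apply Markov's inequality to the nonnegative random variable $e^{\lambda \sum_{i=1}^n Z_i}$, obtaining
\[
\Pr\left(\sum_{i=1}^n Z_i > \epsilon\right) = \Pr\left(e^{\lambda \sum_{i=1}^n Z_i} > e^{\lambda \epsilon}\right) \leq e^{-\lambda \epsilon}\, E\left[e^{\lambda \sum_{i=1}^n Z_i}\right].
\]
All the work is now shifted onto controlling the moment generating function of the sum.

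To bound that function I would peel off the last increment by conditioning on $Z_1, \ldots, Z_{n-1}$ and invoking the tower property:
\[
E\left[e^{\lambda \sum_{i=1}^n Z_i}\right] = E\left[e^{\lambda \sum_{i=1}^{n-1} Z_i}\, E\left[e^{\lambda Z_n} \,\middle|\, Z_1, \ldots, Z_{n-1}\right]\right].
\]
This is exactly where Definition~\ref{def:mds} is used: because $E[Z_n \mid Z_1, \ldots, Z_{n-1}] = 0$ and $|Z_n| \leq b$, the conditional law of $Z_n$ is a zero-mean distribution supported in $[-b,b]$, so Hoeffding's lemma yields the deterministic factor $E[e^{\lambda Z_n} \mid Z_1, \ldots, Z_{n-1}] \leq e^{\lambda^2 b^2 / 2}$. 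Pulling this constant out of the expectation and iterating the same step $n$ times gives $E[e^{\lambda \sum_{i=1}^n Z_i}] \leq e^{n \lambda^2 b^2 / 2}$.

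Combining the two displays, I get $\Pr\left(\sum_{i=1}^n Z_i > \epsilon\right) \leq \exp\!\left(-\lambda \epsilon + \tfrac{1}{2} n \lambda^2 b^2\right)$ for every $\lambda > 0$. The last step is to optimize this free parameter: the exponent is a convex quadratic in $\lambda$, minimized at $\lambda = \epsilon / (n b^2)$, and substituting this value collapses the exponent to $-\epsilon^2 / (2 b^2 n)$, which gives the claimed bound.

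I expect the only genuine obstacle to be Hoeffding's lemma itself, i.e.\ the inequality $E[e^{\lambda Z}] \leq e^{\lambda^2 b^2 / 2}$ for a zero-mean $Z$ supported in $[-b,b]$; everything else (Markov, the tower property, the scalar minimization) is routine. I would prove it by analyzing the cumulant generating function $\psi(\lambda) = \log E[e^{\lambda Z}]$, noting $\psi(0) = 0$ and $\psi'(0) = E[Z] = 0$, and bounding its second derivative: $\psi''(\lambda)$ equals the variance of $Z$ under the exponentially tilted measure, which is again supported in an interval of length $2b$, so $\psi''(\lambda) \leq b^2$ by Popoviciu's inequality. A second-order Taylor expansion then gives $\psi(\lambda) \leq \lambda^2 b^2 / 2$, completing the argument.
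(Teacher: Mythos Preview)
Your proof is correct and is exactly the standard textbook argument for the Azuma--Hoeffding inequality. Note, however, that the paper does not actually prove this lemma: it is stated with citations to \citet{hoeffding1963probability} and \citet{azuma1967weighted} as a known result and then invoked as a tool in the proof of Theorem~\ref{theorem:thelemma}, so there is no ``paper's own proof'' to compare against here.
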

Our main result, Theorem~\ref{theorem:thelemma}, shows that we can use Lemma~\ref{lemma:azumahoeffding} to obtain a concentration inequality for the abstract transition function in \gls*{rlao} (as in \eqref{eq:inequalityMDPabstracted}).
Specifically, we show that, with high probability, the empirical abstract transition function $\bar{T}_Y$ will be close to the abstract transition function $\bar{T}_{\omega_X}$:
\begin{theorem}[Abstract L1 inequality]~\label{theorem:thelemma}
	If an agent has access to a state abstraction function $\phi$ and uses this to collect data for any abstract state-action pair $(\bar{s},a)$ by acting in an \gls*{mdp} $M$ according to a policy $\bar{\pi}$, we have that the following holds with a probability of at least $1 - \delta$ for a fixed value of $N(\bar{s},a)$:
	\begin{align}
		||\bar{T}_Y(\cdot|\bar{s},a) - \bar{T}_{\omega_X}(\cdot|\bar{s},a)||_1 \leq \epsilon, 
	\end{align}
	where $\delta = 2^{|\bar{S}|}e^{-\frac{1}{8} N(\bar{s},a)\epsilon^2}$.
\end{theorem}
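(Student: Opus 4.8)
The plan is to reduce the $L_1$ deviation to a collection of scalar deviations, one for each subset of the abstract state space, and to show that each such scalar deviation is the normalized sum of a martingale difference sequence, so that Lemma~\ref{lemma:azumahoeffding} applies. First I would recall the standard identity that for two distributions $p,q$ over $\bar{S}$,
\begin{equation*}
	\|p - q\|_1 = 2 \max_{B \subseteq \bar{S}} \big(p(B) - q(B)\big),
\end{equation*}
so that the event $\|\bar{T}_Y(\cdot|\bar{s},a) - \bar{T}_{\omega_X}(\cdot|\bar{s},a)\|_1 \geq \epsilon$ is contained in the union, over the $2^{|\bar{S}|}$ subsets $B \subseteq \bar{S}$, of the events $\bar{T}_Y(B|\bar{s},a) - \bar{T}_{\omega_X}(B|\bar{s},a) \geq \epsilon/2$, where $\bar{T}_Y(B|\bar{s},a) \triangleq \sum_{\bar{s}' \in B} \bar{T}_Y(\bar{s}'|\bar{s},a)$ and similarly for $\bar{T}_{\omega_X}$. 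This is well defined for each realization since both $\bar{T}_Y(\cdot|\bar{s},a)$ and $\bar{T}_{\omega_X}(\cdot|\bar{s},a)$ are genuine distributions over $\bar{S}$, the latter because $\sum_{\bar{s}'} T(\bar{s}'|X_{\bar{s},a}^{(i)},a) = 1$.

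For a fixed subset $B$, I would expand both terms using \eqref{eq:empiricalTabstract} and the equivalent form \eqref{eq:omegaxT2} of $\bar{T}_{\omega_X}$; establishing this equivalence, promised in the footnote, is a short interchange of summation, obtained by substituting the definition \eqref{eq:omegax} of $\omega_X$ into \eqref{eq:omegaxT} and collapsing the indicator. This writes the scalar deviation as $\frac{1}{N(\bar{s},a)}\sum_{i=1}^{N(\bar{s},a)} Z_i^B$ with
\begin{equation*}
	Z_i^B \triangleq \mathbbm{1}\{\bar{Y}_{\bar{s},a}^{(i)} \in B\} - T(B \mid X_{\bar{s},a}^{(i)}, a).
\end{equation*}

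The crux, and the step I expect to be the main obstacle, is verifying that $(Z_i^B)_i$ is a martingale difference sequence in the sense of Definition~\ref{def:mds}. I would introduce the filtration $\mathcal{F}_i$ generated by the agent's entire trajectory up to and including the outcome of the $i$-th visit to $(\bar{s},a)$, and note that the source state $X_{\bar{s},a}^{(i)}$ is $\mathcal{F}_{i-1}$-measurable: we know which concrete state $s \in \bar{s}$ we occupy before the $i$-th transition is realized. Because the \emph{underlying} process is a genuine \gls*{mdp}, the next abstract state depends on the full history only through this source state, so $\E[\mathbbm{1}\{\bar{Y}_{\bar{s},a}^{(i)} \in B\}\mid \mathcal{F}_{i-1}] = T(B \mid X_{\bar{s},a}^{(i)},a)$ and hence $\E[Z_i^B \mid \mathcal{F}_{i-1}] = 0$; since $\sigma(Z_1^B,\dots,Z_{i-1}^B) \subseteq \mathcal{F}_{i-1}$, the tower property yields $\E[Z_i^B \mid Z_1^B,\dots,Z_{i-1}^B] = 0$. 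The delicate point is that the visitation times $\tau_1,\dots,\tau_{N(\bar{s},a)}$ are random and the agent wanders between visits under a history-dependent policy $\bar{\pi}$; I would argue that none of this breaks the conditional-expectation identity, because conditioning on $\mathcal{F}_{i-1}$ already fixes the source state and the Markov property of $M$ handles the one-step outcome. Treating $N(\bar{s},a)$ as fixed, as the statement requires, sidesteps any optional-stopping complication. Boundedness is immediate: $Z_i^B \in [-1,1]$, so $|Z_i^B| \leq 1$.

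Finally, I would apply Lemma~\ref{lemma:azumahoeffding} to $(Z_i^B)_i$ with $b = 1$, $n = N(\bar{s},a)$, and threshold $N(\bar{s},a)\epsilon/2$, obtaining
\begin{equation*}
	\Pr\Big(\tfrac{1}{N(\bar{s},a)}\textstyle\sum_{i} Z_i^B \geq \tfrac{\epsilon}{2}\Big) \leq e^{-\frac{1}{8}N(\bar{s},a)\epsilon^2},
\end{equation*}
and take a union bound over all $2^{|\bar{S}|}$ subsets $B$ to conclude that $\Pr(\|\bar{T}_Y(\cdot|\bar{s},a) - \bar{T}_{\omega_X}(\cdot|\bar{s},a)\|_1 \geq \epsilon) \leq 2^{|\bar{S}|} e^{-\frac{1}{8}N(\bar{s},a)\epsilon^2} = \delta$. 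Note the one-sided form of Lemma~\ref{lemma:azumahoeffding} suffices here, since ranging over all subsets $B$ already captures both signs of the coordinatewise differences; one could shave the prefactor to $2^{|\bar{S}|}-2$ as in Lemma~\ref{lemma:weissman} by discarding the empty and full subsets, but the stated bound does not require this.
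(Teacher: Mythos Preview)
Your proposal is correct and follows essentially the same route as the paper's proof: both reduce the $L_1$ deviation to a maximum over $2^{|\bar S|}$ scalar deviations, verify the martingale-difference property via the Markov structure of the underlying \gls*{mdp} at the source state $X^{(i)}_{\bar s,a}$, apply Azuma--Hoeffding, and union-bound. The only cosmetic difference is that the paper parameterizes the decomposition by sign vectors $\boldsymbol z\in\{-1,1\}^{\bar S}$ (yielding $|Z_i|\le 2$ with threshold $\epsilon$) whereas you use subsets $B\subseteq\bar S$ (yielding $|Z_i^B|\le 1$ with threshold $\epsilon/2$); these are in bijection and produce the identical exponent $-\tfrac{1}{8}N(\bar s,a)\epsilon^2$.
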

Here we use the definitions of $\bar{T}_Y(\cdot|\bar{s},a)$ (\eqref{eq:empiricalTabstract}) and $\bar{T}_{\omega_X}(\cdot|\bar{s},a)$  (\eqref{eq:omegaxT}).  
This theorem shows that the empirical model constructed by \gls*{mbrlao} is close to an Abstract \gls*{mdp} $\bar{M}_{\omega_X}$, and here Theorem~\ref{theorem:boundsElena} gives performance loss guarantees. By assuming that $\bar{M}_{\omega_X}$ is the results of an approximate model irrelevance abstraction, we can give end to end guarantees.
In simpler words: our result just shows that whatever $\bar{T}_{Y}$ you might end up with (indeed, regardless of changing policies, etc.), it was generated by some underlying states $X$, and the implied $\bar{T}_{\omega_X}$ will concentrate on $\bar{T}_{Y}$.

Note that, unlike in  planning with abstract \glspl*{mdp} (Section~\ref{sec:2.4}), there is no fixed set of weights $\bar{T}_{\omega_X}$ that can be used as ground truth that needs to be estimated. 
As illustrated in Section~\ref{sec:dependentsamples}, the \gls*{rlao} setting corresponds to a \gls*{pomdp}, which means that \textit{depending on the history} there would be a different distribution over the states (and thus different weights) in each abstract state (called 'the belief' in a \gls*{pomdp}). 
Instead, both $\bar{T}_{\omega_X}$ and $\bar{T}_Y$ change over time. 
We show in the proof of Theorem~\ref{theorem:thelemma} (in Appendix~\ref{app:themainresult}) that $\bar{T}_Y$ will  concentrate on $\bar{T}_{\omega_X}$ as they are intimately connected. 
This is possible because  Lemma~\ref{lemma:azumahoeffding} can be applied as long as the $Z_i$ form a martingale difference sequence, with $|Z_i| \leq b$. 
In the proof, we define a suitable $Z_i$ and show that $\bar{T}_Y$ will thus concentrate on $\bar{T}_{\omega_X}$, with high probability.

\section{An Illustration: R-MAX From Abstracted Observations~\label{sec:RMAXAbstracted}}
Here we give an illustration of how we can use Theorem~\ref{theorem:thelemma} to provide guarantees for \gls*{mbrl} methods in \gls*{rlao} with an \textit{approximate model similarity abstraction}. 
We illustrate this using the R-MAX algorithm~\citep{brafman2002r}. 
We  start with a short description of R-MAX and how it operates with abstraction. 

The R-MAX algorithm maintains a model of the environment. It uses this model to compute a policy periodically and then follows this policy for several steps.
Initially, all the state-action pairs are \textit{unknown} and the algorithm optimistically initializes their reward and transition functions: $R(s,a) = R_\text{max}$ (the maximum reward), $T(s|s,a) = 1$, and $\forall s' \neq s:  T(s'|s,a) = 0$.
This initialization means that, in the model the algorithm maintains, these unknown $(s,a)$ lead to the maximum reward, hence the name R-MAX.
A state-action pair's transition and reward function are only updated once they have been visited sufficiently often, at which point the state-action pair is considered \textit{known}.
Together, this ensures that the algorithm explores sufficiently. 
During execution, the algorithm operates in episodes of $n$-steps. 
At the start of every episode it calculates an optimal $n$-step policy and follows this for $n$ timesteps, or until a state-action pair becomes known. 
Once all the state-action pairs are known it calculates the optimal policy for the final model and then runs this indefinitely.
The algorithm has the following guarantee:
\begin{theorem}[R-MAX in \glspl*{mdp} without abstraction~\citep{brafman2002r}]~\label{theorem:rmax}
	Given an \gls*{mdp} M, with $|S|$ states and $|A|$ actions, and inputs $\epsilon$ and $\delta$. With probability of at least $1 - \delta$ the R-MAX algorithm will attain an expected average return of $\text{Opt}(\prod_M(\epsilon, T_\epsilon)) - 2\epsilon$ within a number of steps polynomial in $|S|, |A|, \frac{1}{\epsilon} \frac{1}{\delta}, T_\epsilon$. Where $T_\epsilon$ is the $\epsilon$-return mixing time of the optimal policy, the policies for $M$ whose $\epsilon$-return mixing time is $T_\epsilon$ are denoted by $\prod_M(\epsilon, T_\epsilon)$, the optimal expected $T_\epsilon$-step undiscounted average return achievable by such policies are denoted by $\text{Opt}(\prod_M(\epsilon, T_\epsilon))$.
\end{theorem}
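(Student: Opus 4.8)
The plan is to follow the standard optimism-in-the-face-of-uncertainty (PAC-MDP) template, separating the argument into a model-accuracy part, handled by concentration inequalities, and a planning part, handled by an explore-or-exploit dichotomy. First I would formalize the agent's internal model as a \emph{known-state MDP} $M_K$: a state-action pair $(s,a)$ is declared \emph{known} once it has been visited at least $m$ times, for a threshold $m$ fixed later. On known pairs $M_K$ uses the empirical estimates $T_Y, R_Y$; on unknown pairs it keeps the optimistic initialization $R(s,a)=R_\text{max}$ with the self-loop $T(s|s,a)=1$. Let $\bar\pi$ be the policy R-MAX actually runs, i.e., the policy optimal for $M_K$, recomputed whenever the known set changes.

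Second, I would establish a \emph{simulation (model-accuracy) lemma}: choose $m$ large enough (polynomial in $|S|,|A|,1/\epsilon,\log(1/\delta)$) so that, applying Lemma~\ref{lemma:weissman} to the transitions of each known pair together with Hoeffding's inequality for the rewards, the empirical model is $\epsilon'$-accurate on \emph{all} known pairs simultaneously, with probability $1-\delta/2$ after a union bound over the at most $|S||A|$ pairs. The mixing time $T_\epsilon$ enters here: a per-step accuracy of order $\epsilon/(T_\epsilon R_\text{max}|S|)$ suffices to keep the $T_\epsilon$-step value computed in $M_K$ within $\epsilon$ of the $T_\epsilon$-step value in $M$, for any policy confined to known pairs.

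Third comes the heart of the argument, the \emph{implicit explore-or-exploit lemma}. By optimism, the optimal $T_\epsilon$-step value in $M_K$ is at least $\text{Opt}(\prod_M(\epsilon,T_\epsilon))$. Running $\bar\pi$ for a block of $T_\epsilon$ steps, I would split its return in $M$ according to whether an unknown pair is encountered: either (i) the probability of escaping to an unknown pair within the block is below a threshold, in which case the accuracy lemma forces the realized $T_\epsilon$-step average return in $M$ to be within $2\epsilon$ of $\text{Opt}(\prod_M(\epsilon,T_\epsilon))$ (exploit); or (ii) the escape probability exceeds the threshold, so the block discovers a fresh visit to an unknown pair with non-negligible probability (explore). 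Each of the $|S||A|$ pairs needs only $m$ visits to become known, bounding the number of successful explorations by $|S||A|\,m$; since each exploration block succeeds with probability $\Omega(\epsilon/(T_\epsilon R_\text{max}))$, a Hoeffding bound shows that with probability $1-\delta/2$ all pairs become known after polynomially many blocks, and multiplying by $T_\epsilon$ gives the claimed polynomial step count. A final union bound over the two failure events yields overall success probability $1-\delta$, with every exploit block within $2\epsilon$ of optimal.

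The main obstacle is the dichotomy in the third step: making precise that ``not near-optimal'' forces ``high escape probability'' requires carefully relating the optimistic $T_\epsilon$-step value in $M_K$, the true value in $M$, and the trajectory mass $\bar\pi$ places on paths that reach unknown pairs, and then choosing the accuracy target $\epsilon'$, the known-threshold $m$, and the escape threshold consistently so that the mixing-time-dependent value error and the per-pair sample complexity line up. Reconciling the average-reward, mixing-time formulation of the statement with the finite-horizon simulation lemma is the delicate bookkeeping that drives the polynomial dependence on $T_\epsilon$.
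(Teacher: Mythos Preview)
The paper does not actually prove this theorem: it is stated with the citation to Brafman and Tennenholtz (2002) in the theorem header and used as background for the paper's own contribution, Theorem~\ref{theorem:rmaxabstract}. So there is no in-paper proof to compare against directly. That said, the paper's proof of Theorem~\ref{theorem:rmaxabstract} in Appendix~\ref{app:proofrmax} is explicitly built on the original R-MAX argument, and your sketch follows the same template: a known-state optimistic model, a simulation/accuracy lemma via $L_1$ concentration (Lemma~\ref{lemma:weissman}), the implicit explore-or-exploit dichotomy (the paper's Lemma~\ref{lemma:implicitexploreexploit}), and a Hoeffding bound on the number of exploration blocks.

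One small gap worth flagging. You split the failure probability into two events, model accuracy and exploration count, and then assert that on exploit blocks ``the accuracy lemma forces the \emph{realized} $T_\epsilon$-step average return in $M$ to be within $2\epsilon$ of $\text{Opt}$''. The simulation lemma only controls the \emph{expected} $T_\epsilon$-step return; the realized average return over finitely many exploit blocks can still deviate from that expectation. Both the original R-MAX proof and the paper's proof of Theorem~\ref{theorem:rmaxabstract} handle this with a \emph{third} failure event (allotting $\delta/3$ each rather than $\delta/2$): a separate Hoeffding bound over the exploit steps to ensure the empirical average return is within $\epsilon/2$ of its expectation. This is what produces the $2\epsilon$ slack in the statement rather than, say, $3\epsilon/2$. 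It is an easy fix, but without it the argument as written does not deliver a high-probability statement about the return the algorithm actually attains.
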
 
Here $T_\epsilon$ is the $\epsilon$-return mixing time of a policy $\pi$, it is the minimum number of steps needed to guarantee that the expected average return is within $\epsilon$ of the optimal expected average return~\citep{kearns2002near}. 

For R-MAX from Abstracted Observations, we make the following assumptions that stem from the original analysis: we assume that the \gls*{mdp} is ergodic \citep{puterman2014markov},~\footnote{An ergodic, or recurrent, \gls*{mdp} is an \gls*{mdp} where every state is recurrent under every stationary policy, i.e., asymptotically, every state will be visited infinitely often~\citep{puterman2014markov}.} that we know $S$ and $A$, that the reward function is deterministic, and that we know the minimum and maximum reward.
W.l.o.g., we assume the rewards are between $0$ and $R_\text{max}$, with $0 < R_\text{max} < \infty$. 
We add the assumption that the agent has access to an approximate model similarity abstraction function $\phi$ and that each state in an abstract state has the same reward function.~\footnote{Note that this is just a slight simplification as any empirical estimate $\bar{R}$ is guaranteed to be within $\eta_R$ of any $\bar{R}_\omega$, under the assumption that the rewards are deterministic.}

\begin{algorithm}[tb]
	\caption{Procedure: R-MAX from Abstracted Observations}
	\label{alg:procedureGeneral}
	\begin{algorithmic}
			\STATE {\bfseries Input:} $\phi, \delta, \epsilon, T_\epsilon$
			\FORALL{$(\bar{s},a) \in \bar{S}\times A$}
			\STATE $\bar{T}_Y(\bar{s}|\bar{s},a) = 1$
			\STATE $\bar{R}_Y(\bar{s},a) = R_\text{max}$
			\STATE $\bar{Y}_{\bar{s},a} = [ \ ]$
			\ENDFOR
			\STATE $\bar{M}_Y = \langle \bar{S}, A, \bar{T}_Y, \bar{R}_Y \rangle$
			\STATE Select $m$, the number of samples required per (abstract) state-action pair to make them known.
			\STATE // While there is still an unknown state-action pair.
			\WHILE{$\min_{(\bar{s},a)} |Y_{\bar{s},a}| < m$}
			\STATE Compute optimal $T_\epsilon$-step policy $\bar{\pi}$ in $\bar{M}_Y$ for the current abstract state. 
			\FOR{$T_\epsilon$ timesteps}
			\STATE $\bar{s} = \phi(s)$
			\STATE $a = \bar{\pi}(\bar{s})$
			\STATE $s', r = $ Step($s,a$)
			\STATE $s = s'$
			\IF{$|\bar{Y}_{\bar{s},a}| < m$}
			\STATE $\bar{Y}_{\bar{s},a}$.append($\phi(s')$)
			\IF{$|\bar{Y}_{\bar{s},a}| = m$} 
			\STATE // State-action pair has become known.
			\FORALL{$\bar{s}' \in \bar{S}$}
			\STATE $\bar{T}_Y(\bar{s}'|\bar{s},a) = \frac{1}{m} \sum_{i=1}^{m}\mathbbm{1}\{\bar{Y}_{\bar{s},a}^{(i)} = \bar{s}'\}$
			\ENDFOR
			\STATE $\bar{R}_Y(\bar{s},a) = r$
			\STATE \textbf{break}
			\ENDIF
			\ENDIF
			\ENDFOR
			\ENDWHILE
			\STATE Compute optimal policy $\bar{\pi}^*$ for $\bar{M}$ and run indefinitely.
		\end{algorithmic}
\end{algorithm}

Algorithm~\ref*{alg:procedureGeneral} shows the procedure for R-MAX from Abstracted Observations.	
It follows the same steps as the original algorithm, except that it makes use of an abstraction function $\phi$ and maintains an abstract model. 
As in the original, the input to the algorithm is the allowed failure probability $\delta$, the error bound $\epsilon$, and the $\epsilon$-return mixing time $T_\epsilon$ of an optimal policy. 
We add the abstraction function $\phi$ as a new input. 
The algorithm uses this function to observe $\phi(s)$, as in Figure~\ref*{fig:mdpdf}, and it builds an empirical (abstract) model from the observations it obtains.

Because the algorithm uses an abstraction function $\phi$, we cannot guarantee the $\epsilon$ error bound. 
However, with Theorem~\ref{theorem:rmaxabstract} we can still guarantee an error bound that is a function of $\epsilon$ and the error $\eta$ of the abstraction, thus providing the first finite-sample guarantees for \gls*{rlao}:
\begin{theorem}~\label{theorem:rmaxabstract}
	Given an \gls*{mdp} M, an approximate model similarity abstraction $\phi$, with $\eta_R$ and $\eta_T$, and inputs $|\bar{S}|, |A|, \epsilon, \delta, T_\epsilon$. With probability of at least $1 - \delta$ the R-MAX algorithm adapted to abstraction (Algorithm~\ref{alg:procedureGeneral}) will attain an expected average return of $\text{Opt}(\prod_M(\epsilon, T_\epsilon)) - 3g(\eta_T,\eta_R) - 2\epsilon$ within a number of steps polynomial in $|\bar{S}|, |A|, \frac{1}{\epsilon} \frac{1}{\delta}, T_\epsilon$. Where $T_\epsilon$ is the $\epsilon$-return mixing time of the optimal policy, the policies for $M$ whose $\epsilon$-return mixing time is $T_\epsilon$ are denoted by $\prod_M(\epsilon, T_\epsilon)$, the optimal expected $T_\epsilon$-step undiscounted average  return achievable by such policies are denoted by $\text{Opt}(\prod_M(\epsilon, T_\epsilon))$, and $$g(\eta_T,\eta_R) = T_\epsilon \eta_R + \frac{(T_\epsilon - 1)T_\epsilon}{2} \eta_T |\bar{S}|.$$
\end{theorem}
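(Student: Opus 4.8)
The plan is to show that Algorithm~\ref{alg:procedureGeneral}, viewed entirely in the abstract space, is just the original R-MAX algorithm run against the abstract \gls*{mdp} $\bar{M}_{\omega_X}$, the only change being that the model-accuracy step is supplied by Theorem~\ref{theorem:thelemma} instead of an \gls*{iid}\ concentration inequality, after which the value-loss bound of Theorem~\ref{theorem:boundsElena} converts abstract-space near-optimality into a guarantee for the true \gls*{mdp} $M$. First I would fix the threshold $m$ for declaring an abstract pair \textit{known} by inverting Theorem~\ref{theorem:thelemma}: requiring $2^{|\bar{S}|}e^{-\frac{1}{8}m(\epsilon')^2} \leq \delta/(|\bar{S}||A|)$ and solving for $m$ yields a value polynomial in $|\bar{S}|$, $1/\epsilon'$, and $\log(1/\delta)$. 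A union bound over the $|\bar{S}||A|$ abstract pairs then guarantees that, with probability at least $1-\delta$, every known pair simultaneously satisfies $||\bar{T}_Y(\cdot|\bar{s},a) - \bar{T}_{\omega_X}(\cdot|\bar{s},a)||_1 \leq \epsilon'$. The structural point that makes this meaningful, already noted after \eqref{eq:omegaxT2}, is that $\omega_X$ is a valid weighting at every timestep, so $\bar{M}_{\omega_X}$ is a genuine abstract \gls*{mdp} in the sense of Definition~\ref{def:constructabstract} and can play the role of the ``true'' object that $\bar{M}_Y$ estimates.

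Conditioned on this event, I would invoke the R-MAX analysis behind Theorem~\ref{theorem:rmax} in the abstract space. Its ingredients port directly: optimism holds because unknown abstract pairs are initialized to $R_\text{max}$; the simulation lemma holds because known abstract transitions are $\epsilon'$-accurate relative to $\bar{M}_{\omega_X}$; and the explore-or-exploit counting argument only tracks visit counts to the finitely many abstract pairs, with ergodicity of $M$ ensuring that reachable unknown pairs are eventually visited. This delivers, within a number of steps polynomial in $|\bar{S}|,|A|,1/\epsilon,1/\delta,T_\epsilon$, an expected average return within $2\epsilon$ of the optimum of $\bar{M}_{\omega_X}$. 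To transfer this to $M$, I would apply Theorem~\ref{theorem:boundsElena} in its finite-horizon form with $h = T_\epsilon$, which bounds the gap between $T_\epsilon$-step returns in $\bar{M}_{\omega_X}$ and in $M$ by $g(\eta_T,\eta_R)$; chaining this bound across the comparisons needed to relate the algorithm's measured return in $M$ to $\text{Opt}(\prod_M(\epsilon,T_\epsilon))$ — relating the true optimum to the optimum of $\bar{M}_{\omega_X}$, and the algorithm's return in $\bar{M}_{\omega_X}$ back to its return in $M$ — accumulates the factor of three in the $-3g(\eta_T,\eta_R)$ term, while the R-MAX $-2\epsilon$ slack carries through unchanged; I would verify the exact bookkeeping of these applications in the calculation.

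The step I expect to be the main obstacle is justifying the R-MAX machinery against $\bar{M}_{\omega_X}$ despite the fact, emphasized after Theorem~\ref{theorem:thelemma}, that $\bar{M}_{\omega_X}$ is \emph{not} fixed a priori but drifts with the history-dependent weights $\omega_X$. The resolution rests on two facts acting together: Theorem~\ref{theorem:thelemma} concentrates $\bar{T}_Y$ on $\bar{T}_{\omega_X}$ for the realized trajectory \emph{regardless} of how the underlying states were generated, and Theorem~\ref{theorem:boundsElena} holds \emph{uniformly} over all valid weightings. Thus whatever frozen weighting a known pair ends up with, both the accuracy bound and the value-loss bound apply to exactly that weighting, so the chain goes through without ever needing $\bar{M}_{\omega_X}$ to be specified in advance. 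A secondary subtlety to handle carefully is that the accounting must be in terms of $T_\epsilon$-step average-return blocks, matching R-MAX's mixing-time analysis, rather than a single finite-horizon value comparison.
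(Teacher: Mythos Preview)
Your plan is close to the paper's, but the clean split into ``R-MAX in the abstract space delivers return within $2\epsilon$ of the optimum of $\bar M_{\omega_X}$'' followed by ``transfer to $M$ via Theorem~\ref{theorem:boundsElena}'' does not go through. The agent's trajectory and the rewards it accrues live in $M$, so the quantity the analysis controls is $V_M^{\bar\pi}$, not $V_{\bar M_{\omega_X}}^{\bar\pi}$. The explore-or-exploit step (the Implicit Explore or Exploit lemma) must therefore be applied in $M$: it gives $V_M^{\bar\pi}\ge V_{M_L}^{\bar\pi}-T_\epsilon R_{\max}\Pr(A_M)$, where $M_L$ agrees with $M$ on the known abstract pairs. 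There is no self-contained abstract-space R-MAX statement available here, because the visit counts you correctly say we track are counts in $M$'s trajectory, and the return those episodes produce is return in $M$. Your third paragraph identifies the drift of $\omega_X$ as the main obstacle, but the resolution you offer (Theorem~\ref{theorem:thelemma} concentrates regardless; Theorem~\ref{theorem:boundsElena} holds for any weighting) handles model accuracy and value comparison, not this mismatch between where the trajectory lives and where you want the R-MAX conclusion.

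What the paper actually does is interleave the abstraction error into the R-MAX chain rather than bracket it: $V_M^{\bar\pi}\ge V_{M_L}^{\bar\pi}-T_\epsilon\epsilon_1 \ge V_{\bar M_{\omega_X,L}}^{\bar\pi}-T_\epsilon\epsilon_1-g(\eta_T,\eta_R)\ge V_{\bar M_{Y,L}}^{\bar\pi}-T_\epsilon\epsilon_1-g(\epsilon_2)-g(\eta_T,\eta_R)\ge V_{\bar M_Y}^{*}-\cdots\ge V_M^{*}-T_\epsilon\epsilon_1-g(\epsilon_2)-g(\eta_T,\eta_R)-2g(\eta_T+\epsilon_2,\eta_R)$. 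The factor three thus does not arise from three symmetric applications of Theorem~\ref{theorem:boundsElena}; it is one $g(\eta_T,\eta_R)$ from Lemma~\ref{lemma:valueunderpolicy} moving $M_L$ into the abstract space, plus $2g(\eta_T+\epsilon_2,\eta_R)$ from Lemma~\ref{lemma:valuebound} (which uses both halves of Theorem~\ref{theorem:boundsElena}) moving $\bar M_Y^{*}$ back to $M^{*}$, with the simulation step and optimism in between. The model-accuracy parameter $\epsilon_2$ and the exploration slack $\epsilon_1$ are then set so that their total contribution is $\tfrac{3}{2}\epsilon$, with the remaining $\tfrac{1}{2}\epsilon$ spent on a separate Hoeffding step controlling the gap between actual and expected return over the exploitation episodes.
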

The proof can be found in Appendix~\ref{app:proofrmax} and follows the line of the original R-MAX proof, using the assumptions mentioned at the start of Section~\ref{sec:abstractedmbrl}. 
To translate the results to the \gls*{rlao} setting, we first use the Abstract L1 inequality (Theorem~\ref{theorem:thelemma}) to show that the empirical abstract model is accurate with high probability. 
Then the performance bounds from Theorem~\ref{theorem:boundsElena} can be used to bound the loss in performance by using an abstract policy based on the empirical abstract model in the \gls*{mdp} $M$ instead of the optimal (ground) policy $\pi^*$. 
These bounds hold for \textit{any} $\omega_X$ as long as $\omega_X$ is a valid weighting function.
That $\omega_X$ will be a valid weighting function follows from its definition in \eqref{eq:omegax}. 
Because Theorem~\ref{theorem:boundsElena} allows us to directly bound the loss in performance for using an abstract policy, based on an abstract empirical model, in the original problem $M$, the amount of steps is polynomial only in $|\bar{S}|$ instead of $|S|$. 

As is typical with abstraction, there is a trade-off between the performance and the required number of steps: a coarser abstract model can potentially learn much faster but could sacrifice optimality, while a non-abstract model might have the best performance in the limit of infinite experience.
We can see this trade-off in the results of Theorems~\ref{theorem:rmax} and~\ref{theorem:rmaxabstract}.  
When we directly model $M$ without abstraction, Theorem~\ref{theorem:rmax} shows that the algorithm will attain an expected return of $\text{Opt}(\prod_M(\epsilon, T_\epsilon)) - 2\epsilon$ within a number of steps polynomial in $|S|, |A|, \frac{1}{\epsilon} \frac{1}{\delta}, T_\epsilon$.
Theorem~\ref{theorem:rmaxabstract} shows that, when we use an approximate model similarity abstraction to learn an abstract model,  this leads to an additional performance loss of $3g(\eta_T,\eta_R)$ due to the approximation. 
However, the advantage of using the abstraction is that the number of steps within which this is achieved is polynomial only in the size of the abstract space $|\bar{S}|$ rather than the (larger) original state space $|S|$. 
Thus, these results show that the performance is not arbitrarily bad with approximate model similarity abstraction. 
Moreover, when the abstraction errors ($\eta_T$ and $\eta_R$) are small and the reduction in state space is large, abstraction helps to reach near-optimal performance significantly faster.


\section{Related Work~\label{sec:relatedWork}}
The problem we resolved in this paper may seem intuitive, but as we will make clear here, it is a fundamental problem in rich literature. 
Many studies have considered the combination of abstraction with either planning or \gls*{rl}.
Some studies avoid, or ignore, the issue of dependency by simply assuming that samples are independent~\citep{paduraru2008model, ortner2014regret, jiang2015abstraction,badings2022sampling}. 
Others avoid it by looking at convergence in the limit~\citep{singh1995reinforcement, hutter2016extreme, majeed2018q} or by assuming access to an \gls*{mdp} model~{\citep{hallak2013model, maillard2013optimal, ortner2019regret}. 
	
	\paragraph{\gls*{rl} With Abstraction.}
	A negative result has been provided in the \gls*{rlao} setting, showing that R-MAX~\citep{brafman2002r} no longer maintains its guarantees when paired with any state abstraction function~\citep{abel2018state}. 
	For this negative result, they give an example that uses approximate $Q^*$ similarity abstractions~\citep{abel2016near}. 
	Our counterexample is more powerful: indicating problems with the analysis even for approximate model similarity abstractions (an approximate model similarity abstraction is also an approximate $Q^*$ abstraction, but not the other way around). 
	Nevertheless, our second result shows that it is still possible to give guarantees in \gls*{rlao} for R-MAX-like algorithms when we use an approximate model similarity abstraction and take the $\eta_R$ and $\eta_T$ inaccuracies into account.
	
	Another study considered a setting related to abstraction, where the transition and reward functions can change over time, either abruptly or gradually~\citep{ortner2020variational}. 
	The reward and transition probabilities depend on the timestep $t$, so $T(s'|s,a,t)$ instead of $T(s'|s,a)$. 
	They bound the variation in the reward and transition functions over time. 
	By taking the variation over time into account they are able to give results. 
	In their setting, the \gls*{mdp} is fixed given the timestep. 
	However, in \gls*{rlao} this is not fixed. 
	Each time the transition function at a timestep $t$ could be different.
	
	Recently regret guarantees have been found for the episodic (continuous) RL setting~\citep{sinclair2022adaptive}.
	Their abstraction method learns an abstraction adaptively. 
	The model-based version of their algorithm requires that the state and actions  spaces are embedded in compact metric spaces.
	In this way, they can define a measure of the difference between state-action pairs in these metric spaces.
	However, they require oracle access to this metric. In our setting this would require knowing the transition and reward functions, which we assume we do not.
	
	\paragraph{Abstraction Selection.}
	There are quite a few studies in the area of \emph{abstraction selection}, where the agent has access to a set of abstraction functions (state representations)\citep{hallak2013model, maillard2013optimal,lattimore2013sample, ortner2014selecting,ortner2019regret}. 
	Several studies assume that the given set of state representations contains at least one Markov model \citep{hallak2013model, maillard2013optimal, ortner2019regret}. 
	One study gives asymptotic guarantees for selecting the correct model and building an exact \gls*{mdp} model~\citep{hallak2013model}. 
	The assumption that an \gls*{mdp} model exists in the given set of representations is crucial in their analysis since the samples are \gls*{iid}\ for this \gls*{mdp} model. 
	Similarly, other studies also assume that the given set of state representations contains a Markov model~\citep{maillard2013optimal,ortner2019regret}.
	They create an algorithm for which they obtain regret bounds, and their analysis also uses the Markov representation.
	
	Some studies in abstraction selection do not assume the given abstraction functions contain a Markov model~\citep{lattimore2013sample, ortner2014selecting}. 
	\citet{lattimore2013sample} deal with a more general setting where the problem may be non-Markovian. 
	Instead of assuming access to a set of abstractions, they assume access to a set of models, including a model of the true environment.
	Since they are given models, they do not focus on learning them, making it very different from our setting.
	By observing the rewards obtained while executing a policy they are able to exclude unlikely models, and eventually find the true model of the environment.
	The other study~\citep{ortner2014selecting} uses Theorem 2.1 from \citet{weissman2003inequalities}, which requires \gls*{iid}\ samples. 
	We have shown that independent samples \textit{cannot} be guaranteed in \gls*{rlao}.
	
	\paragraph{MDPs With Rich Observations.}
	Other related work is in \glspl*{mdp} with rich observations or block structure \citep{azizzadenesheli2016reinforcement, du2019provably,allen2021learning}. 
	In that setting, each observation can only be generated from a \textit{single} hidden state, which means that the issue of non-\gls*{iid}\ data due to abstraction does not arise. 
	We can view the rich observation setting as an aggregation problem, where the observations can be aggregated to form a small (latent) \gls*{mdp}~\citep{azizzadenesheli2016reinforcement}.
	Their setting is related to exact model similarity (or bisimulation)~\citep{du2019provably}. 
	In contrast, in \gls*{rlao}, each observation can be generated from multiple hidden states, and we do not try to learn the \gls*{mdp}, as it is not small. 
	Furthermore, we focus on approximate model similarity, which introduces the problems as described in Section~\ref{sec:dependence}.
	
	\paragraph{I.I.D. Samples.}
	One way to avoid the issue of dependent samples is by assuming that samples are obtained independently~\citep{paduraru2008model,ortner2014regret, jiang2015abstraction,badings2022sampling}. 
	One study considers the setting with a continuous domain where we are given a data set with \gls*{iid}\ samples~\citep{paduraru2008model}. 
	They use discretization to aggregate states into abstract states and give a guarantee that, with a high probability, the model will be  $\epsilon$-accurate given a fixed data set. 
	While they assume that the data has been gathered \gls*{iid}, our results show that martingale concentration inequalities could be used to extend their results to the online data collection in the \gls*{rlao} setting.
	Discretization has been used in another study in a continuous space~\citep{badings2022sampling}.
	They search for a solution for a linear  dynamical system, where the transitions are deterministic, except for an additive noise component. 
	They assume this noise is distributed \gls*{iid}\ and try to learn the resulting abstract transition functions by iteratively sampling $N$ samples per abstract-state action pair until a threshold is reached.
	They assume that samples can be obtained cheaply, e.g., through a simulator, whereas we focus on the exploration problem where data has to be collected online and is expensive.
	Another study operates in the abstraction selection setting~\citep{jiang2015abstraction}. 
	While they do not assume that a Markov model exists in the given set of abstraction functions, they assume a given data set, with \gls{iid}\ data. 
	They give a bound on how accurate the Q-values based on the (implicitly) learned model will be rather than on the accuracy of the model itself. 
	As we showed, the assumption that the data is \gls*{iid}\ is not trivial since it means the data cannot just be collected online. 
	Another study's primary focus is on bandits but also gives results for \glspl*{mdp} with a coloring function~\citep{ortner2014regret}. 
	We can view state aggregation as a special case of this coloring. 
	They extend the results from UCRL2~\citep{jaksch2010near} to the setting with a coloring function. 
	They use the Azuma-Hoeffding inequality for the transition function, which holds for weakly dependent samples. 
	However, they assume the samples are independent and do not show the martingale difference sequence property for the (actually dependent) samples.
	
	\paragraph{Asymptotic Results.}
	Another way to deal with dependence between samples is by looking at convergence in the limit~\citep{singh1995reinforcement, hutter2016extreme, majeed2018q}.
	One study gives an asymptotic result for convergence of Q-learning and TD(0)  in \glspl*{mdp} with soft state aggregation~\citep{singh1995reinforcement}. 
	In soft state aggregation, a state $s$ belongs to a cluster $x$ with some probability $P(x|s)$, which means a state $s$ can belong to several clusters.
	Their result requires an ergodic \gls*{mdp} and  a stationary policy that assigns a non-zero probability to every action. 
	Together these imply a limiting state distribution, and they use this to show convergence asymptotically. 
	Another study gives multiple results focusing on approximate and exact abstractions in  environments without \gls*{mdp} assumptions~\citep{hutter2016extreme}. 
	Several of these results are in the planning setting, similar to other planning results for approximate abstractions~\citep{abel2016near}. 
	Most relevant for us is their Theorem 12, which for online \gls*{rl} shows convergence in the limit of the empirical transition function under weak conditions, e.g., when the abstract process is an \gls*{mdp}.
	Under this condition, however, the problem reduces to \gls*{rl} in an (abstract) \gls*{mdp} rather than \gls*{rlao}.
	Follow-up work builds on some of these results and  focuses on the combination of model-free \gls*{rl} and exact abstraction, also without \gls*{mdp} assumptions~\citep{majeed2018q}. 
	They define and operate in a Q-Value Uniform Decision Process, with a mapping from histories to (non-Markovian) states and a ``state-uniformity condition''.
	The state-uniformity means that if two histories map to the same state $s$, their optimal Q-values are also the same.
	They show that, under state-uniformity, Q-learning converges in the limit to the optimal solution. 
	In contrast to our setting, they used an exact abstraction and left extending the results to approximate abstraction as an open question. 
	
	\paragraph{Planning and Abstraction.}
	For planning in abstract \glspl*{mdp}, there are results for exact state abstractions~\citep{li2006towards} and approximate state abstractions~\citep{abel2016near}. 
	The results for approximate state abstractions allow for quantifying an upper bound on performance for the optimal policy of an abstract \gls*{mdp}, e.g., as in Theorem~\ref{theorem:boundsElena} for approximate model similarity in Section~\ref*{sec:background_stateAbs}.
	A study built on these results by giving a result for performing \gls*{rl} interacting with an explicitly constructed abstract \gls*{mdp}~\citep{taiga2018approximate}; since the abstract \gls*{mdp} is still an \gls*{mdp}, this is different from \gls*{rlao}. 
	
	\paragraph{\gls*{mbrl} Using I.I.D. Bounds.}
	Concentration inequalities for \gls*{iid} samples, such as the result from~\citet{weissman2003inequalities}, are often directly applied to the empirical transition function \citep{brafman2002r, jaksch2010near, fruit2018efficient, bourel2020tightening}, without mentioning that these samples in a simple RL trajectory may not be independent as shown for instance by \citet{strehl2008analysis} in a non-communicating \gls*{mdp}.~\footnote{An \gls*{mdp} is communicating if, for all $s_1, s_2 \in S$, a deterministic policy exists that eventually leads from $s_1$ to $s_2$~\citep{puterman2014markov}.} 
	\citet{strehl2008analysis} show that there dependence is not a problem because it is still possible to use a concentration inequality for independent samples, e.g., Hoeffding's inequality, as an upper bound, which implies that derived performance loss bounds are valid.
	However, their proof uses that transitions and rewards are identically distributed, which is not guaranteed in \gls*{rlao}.
	
	\paragraph{\gls*{rl} Using Martingale Bounds.}
	Martingale concentration inequalities have been used regularly in online \gls{rl} analysis~\citep{strehl2008analysis,li2009unifying,jaksch2010near, lattimore2013sample, maillard2013optimal,ortner2014regret,azizzadenesheli2016reinforcement, ortner2019regret,ortner2020variational}. 
	Our novelty is in using it in \gls*{rlao}, where we use it to show that we can learn an accurate model and provide performance guarantees in this setting.
	Several works that employ martingale concentration inequalities are not in the~\gls{rlao} setting or do not use them for the transition model, and instead apply them to other parts of the analysis such as bounding the difference between the actual and expected returns~\citep{strehl2008analysis,jaksch2010near, lattimore2013sample, maillard2013optimal, ortner2014selecting, ortner2019regret}.
	Other works do use martingales for a transition model~\citep{li2009unifying,ortner2014regret,azizzadenesheli2016reinforcement,ortner2020variational}. 
	However, these either (implicitly or explicitly) assume samples to be independent~\citep{ortner2014regret,ortner2020variational} or identically distributed~\citep{li2009unifying,azizzadenesheli2016reinforcement}, unlike our analysis. 
	Since, as we detailed in Section~\ref{sec:abstractedmbrl}, independent nor identically distributed samples cannot be guaranteed in \gls*{rlao}, their analyses do not extend to this setting. 
	
	\section{Discussion and Future Work~\label{sec:discussion}}
	Some assumptions we made, i.e., that the reward function is deterministic and each state in an abstract state has the same reward function, can be relaxed. 
	To accurately learn an abstract reward function, one should define a suitable martingale difference sequence, after which Lemma~\ref{lemma:azumahoeffding} can be used.
	We considered approximate model similarity abstraction and used the properties of this abstraction to establish an upper bound on the difference in value between the original \gls*{mdp} and an abstract \gls*{mdp} under any abstract policy.
	This bound was imperative for our results. 
	We established this bound by proof of induction on the difference in value for a horizon $n$. 
	This technique could be used to establish similar bounds and extend our results for other abstractions, e.g., approximate $Q^*$ similarity abstractions~\citep{abel2016near}.
	
	Our analysis showed how to extend the results of R-MAX~\citep{brafman2002r} to \gls*{rlao}. 
	Extending results of other algorithms, e.g., MBIE~\citep{strehl2008analysis} and UCRL2~\citep{jaksch2010near}, requires adapting to slightly different assumptions. 
	For instance, R-MAX assumes ergodicity, while UCRL2 and MBIE assume the problem is communicating and non-communicating, respectively. 
	Other algorithms sometimes use concentration inequalities other than Hoeffding's Inequality, e.g., the empirical Bernstein inequality~\citep{audibert2007tuning, maurer2009empirical} or the Chernoff bound. 
	To adapt these, we could, for instance, use Bernstein-type inequalities for martingales~\citep{dzhaparidze2001bernstein}.	
	
	Theorem~\ref{theorem:rmaxabstract} shows that, despite problems with dependence, we can give finite-sample guarantees when combining approximate model similarity abstractions with \gls*{mbrl}. 
	For good abstraction functions, i.e., when $\eta_R$ and $\eta_T$ are small and $|S| \gg |\bar{S}|$, this leads to near-optimal solutions while needing fewer samples, compared to learning without abstraction. 
	Practically, for tabular methods, these results mostly mean that concentration inequalities for independent samples have to be replaced in \gls*{rlao}, for example by concentration inequalities based on martingales, as we have shown here.  
	In deep model-based RL, several recent empirical works have shown promising results by focusing on learning exact abstractions~\citep{biza2019online, van2020plannable}. 
	An interesting direction is adapting these methods to learn approximate abstractions instead of exact abstractions. 
	Since, compared to exact model similarity abstractions, approximate model similarity abstraction generally results in a smaller (abstract) state space; this could lead to faster learning.
	
	Our results shed further light on the observation from \citet{abel2018state} that \gls*{rlao} is different from performing RL in an MDP constructed with abstraction.
	As our observations show, in \gls*{rlao} the transition functions are not static, the samples are not identically distributed, and cannot be guaranteed to be independent. 
	This could mean that in situations where we want to learn an abstraction, the behavior is also not quite as expected. 
	In such situations, similar approaches that we applied here may prove useful, as many situations in RL have already been shown to not be independent processes. 
	While our results hold for approximate model similarity models, there could be even more compact representations for which our techniques could lead to similar results. 
	One clear example would be abstractions that focus not on state abstraction, but rather on state-action abstraction, of which state abstraction is simply a special case.
	
	People have been applying \glspl*{mdp} and \gls*{rl} to all kinds of problems, even though we know that the Markov property very rarely holds. 
	Given that almost all theory of \gls*{rl} critically depends on this property, one could wonder why these things even work? 
	Intuitively, we expect that the states in these successful applications are somehow ``Markovian enough''.
	In this work, we provide an understanding of this vague concept. 
	Specifically, we show that an existing criterion of state representations (approximate model similarity) in fact is a formal notion of ``Markovian enough'' in \gls*{mbrl}. 
	Thus, it provides critical insight into under what circumstances (and therefore in what applications) \gls*{mbrl} methods are expected to work.
	
	\section{Conclusion~\label{sec:conclusions}}
	We analyzed \gls*{rlao}: online \gls*{mbrl} combined with state abstraction when the model of the \gls*{mdp} is unavailable. 
	Via a counterexample, we showed that it cannot be guaranteed that samples obtained online in \gls*{rlao} are independent. 
	Many current guarantees from \gls*{mbrl} methods use concentration results that assume \gls*{iid}\ samples, e.g., Theorem 2.1 from \citet{weissman2003inequalities}, the empirical Bernstein inequality~\citep{audibert2007tuning, maurer2009empirical}, or the Chernoff bound. 
	Because they use these concentration inequalities, their guarantees do not hold in \gls*{rlao}. 
	In fact, none of the existing analyses of \gls*{mbrl} apply to \gls*{rlao}. 
	We showed that samples in \gls*{rlao} are only weakly dependent and that concentration inequalities for (weakly) dependent variables, such as Lemma~\ref{lemma:azumahoeffding}, are a viable alternative through which we can come to guarantees on the empirical model. 
	We used this result to present the first sample efficient learning results for \gls*{rlao}, thus showing it is possible to combine the benefits of abstraction and \gls*{mbrl}. 
	These results showcase under what circumstances performance guarantees in \gls*{mbrl} can be transferred to settings with abstraction.

\subsubsection*{Acknowledgments}
\begin{wrapfigure}{r}{0.2\linewidth}
	\vspace{-15pt}
	\centering
	\includegraphics[width=1\linewidth]{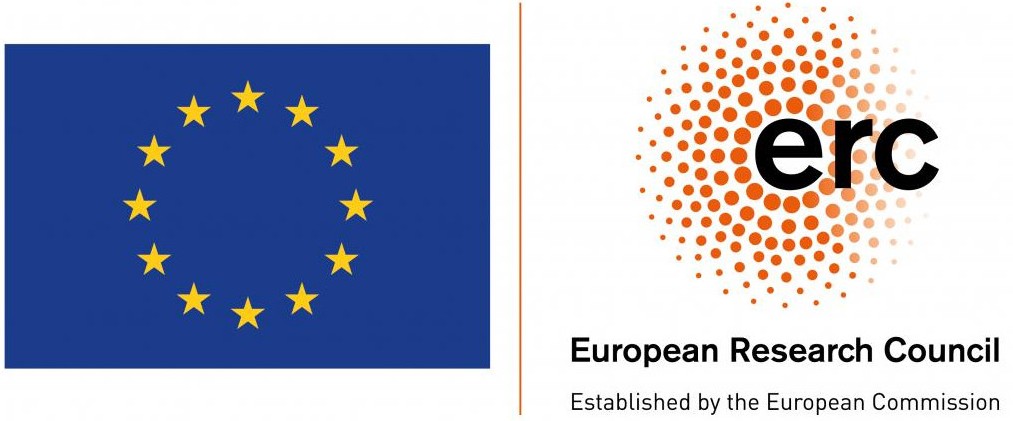}
	\vspace{-15pt}
\end{wrapfigure}
This project received funding from the European
Research Council (ERC)  
under the European Union's
Horizon 2020 research 
and innovation programme (grant agreement No.~758824 \textemdash INFLUENCE).

	\bibliography{main}
	\bibliographystyle{tmlr}
	
	\clearpage
	
	\appendix
	\section{Well Known Results}\label{app:wellknown}
	We restate some well-known results that we use in the proofs in the other sections.
	\subsection{Hoeffding's Inequality}
	Hoeffding's inequality can tell us the probability  that the average of $m$ random independent (but not necessarily identically distributed) samples deviates more than $\epsilon$ from its expectation. 
	
	Let $Z^{(1)},Z^{(2)},\cdots,Z^{(m)}$ be bounded independent random variables, and let $\bar{Z}$ and  $\mu$ be defined as
	\begin{align}
		\bar{Z} &\triangleq \frac{Z^{(1)} + \cdots + Z^{(m)}}{m}, \label{eq:hoeffdingX} \\
		\mu \triangleq E[\bar{Z}] &= \frac{E[Z^{(1)} + \cdots + Z^{(m)}]}{m} \label{eq:hoeffdingMu}.
	\end{align}
	Then Hoeffding's inequality states:
	\setcounter{lemma}{3}
	\begin{lemma}[Hoeffding's inequality \citep{hoeffding1963probability}]\label{lemma:hoeffding}
		If $Z^{(1)},Z^{(2)},\cdots,Z^{(m)}$ are independent and $0 \leq Z^{(i)} \leq 1$ for $i = 1, \cdots, m$, then for $0 < \epsilon < 1 - \mu$ we have the following inequalities
		\begin{align}
			\Pr(\bar{Z} - \mu \geq \epsilon) &\leq e^{-2m\epsilon^2}, \\
			\Pr(|\bar{Z} - \mu| \geq \epsilon) &\leq 2e^{-2m\epsilon^2}, \\
			\Pr(\sum_{i=1}^m (Z^{(i)} - \mu) \geq \epsilon) &\leq e^{-2\frac{\epsilon^2}{m}}, \\
			\Pr(|\sum_{i=1}^m (Z^{(i)} - \mu)| \geq \epsilon) &\leq 2e^{-2\frac{\epsilon^2}{m}}.
		\end{align}
		
	\end{lemma}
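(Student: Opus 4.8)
The plan is to derive all four displayed inequalities from a single one-sided tail bound on the centered sum, and then obtain the remaining three by elementary reductions. Write $\mu_i = E[Z^{(i)}]$ and $S_m = \sum_{i=1}^m (Z^{(i)} - \mu_i)$. Since $m\mu = \sum_{i=1}^m \mu_i$ we have $S_m = \sum_{i=1}^m (Z^{(i)} - \mu)$ and $\bar{Z} - \mu = S_m / m$, so it suffices to establish the third inequality, $\Pr(S_m \geq \epsilon) \leq e^{-2\epsilon^2/m}$. The first inequality then follows by applying this bound with $m\epsilon$ in place of $\epsilon$, using $\{\bar{Z}-\mu \geq \epsilon\} = \{S_m \geq m\epsilon\}$ and $e^{-2(m\epsilon)^2/m} = e^{-2m\epsilon^2}$. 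The two-sided versions (the second and fourth inequalities) follow by applying the one-sided bound to both $S_m$ and $-S_m$ and taking a union bound, which accounts for the factor of $2$; the lower tail is handled by the identical argument applied to the variables $-W_i$, which again have mean zero and range of length $1$. I note in passing that the stated constraint $0 < \epsilon < 1-\mu$ is not needed for the proof and merely guarantees the upper-tail event can be nonempty, as $\bar{Z} \leq 1$ forces $\bar{Z}-\mu \leq 1-\mu$.

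For the one-sided bound I would use the Chernoff method. For any $s > 0$, monotonicity of $x \mapsto e^{sx}$ together with Markov's inequality gives $\Pr(S_m \geq \epsilon) = \Pr(e^{sS_m} \geq e^{s\epsilon}) \leq e^{-s\epsilon}\, E[e^{sS_m}]$. Writing $W_i = Z^{(i)} - \mu_i$, the variables $W_i$ are independent and mean-zero, so the moment generating function factorizes as $E[e^{sS_m}] = \prod_{i=1}^m E[e^{sW_i}]$.

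The crucial estimate is a per-variable bound on $E[e^{sW_i}]$, namely Hoeffding's lemma: if $W$ is mean-zero and supported on an interval $[a,b]$, then $E[e^{sW}] \leq e^{s^2(b-a)^2/8}$. Each $W_i$ lies in $[-\mu_i, 1-\mu_i]$, an interval of length exactly $1$, so this yields $E[e^{sW_i}] \leq e^{s^2/8}$ and hence $E[e^{sS_m}] \leq e^{ms^2/8}$. Combining with the Chernoff step gives $\Pr(S_m \geq \epsilon) \leq e^{ms^2/8 - s\epsilon}$ for every $s > 0$. Minimizing the exponent over $s$ (the minimizer is $s^\ast = 4\epsilon/m$) produces the exponent $m(s^\ast)^2/8 - s^\ast\epsilon = -2\epsilon^2/m$, which is precisely the claimed bound.

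I expect the main obstacle to be Hoeffding's lemma itself, since everything else is routine. I would prove it by setting $\psi(s) = \log E[e^{sW}]$ and verifying $\psi(0) = 0$ and $\psi'(0) = E[W] = 0$. The second derivative $\psi''(s)$ is the variance of $W$ under the exponentially tilted law proportional to $e^{sW}$, and any random variable supported on $[a,b]$ has variance at most $(b-a)^2/4$ by Popoviciu's inequality, so $\psi''(s) \leq (b-a)^2/4$ for all $s$. A second-order Taylor expansion with remainder then gives $\psi(s) \leq s^2 (b-a)^2/8$, which exponentiates to the desired moment generating function bound and completes the argument.
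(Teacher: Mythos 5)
Your proof is correct: the paper states this lemma without proof, as a classical result cited to \citet{hoeffding1963probability}, and your argument---reducing all four displayed inequalities to the one-sided tail bound on the centered sum $S_m$, then proving that bound by the Chernoff method with Hoeffding's lemma (via the tilted-measure variance bound and Popoviciu, optimized at $s^\ast = 4\epsilon/m$)---is exactly the standard derivation behind that citation, with all reductions (the $\epsilon \mapsto m\epsilon$ substitution, the union bound for the two-sided versions, and the length-$1$ range $[-\mu_i, 1-\mu_i]$ of each $W_i$) carried out correctly. Your side remark is also accurate: the restriction $0 < \epsilon < 1-\mu$ is an artifact of the sharper relative-entropy form in Hoeffding's original paper and is not needed for these exponential bounds, which hold for all $\epsilon > 0$.
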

	
	\subsection{Union Bound}
	Given that we have a set of events, the union bound allows us to upper bound the probability that at least one of the events happens, even when these events are not independent.
	\begin{lemma}[Union Bound \citep{boole1854investigation}]\label{lemma:unionbound}
	For a countable set of events $A_1, A_2, A_3, \cdots$, we have
	\begin{align}
		\Pr(\cup_i A_i) \leq \sum_i \Pr(A_i).
	\end{align}
\end{lemma}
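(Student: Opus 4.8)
The plan is to prove Boole's inequality by the standard \emph{disjointification} trick, which reduces the claim to countable additivity of the underlying probability measure. First I would replace the (possibly overlapping) events $A_1, A_2, \ldots$ by a pairwise disjoint sequence with the same union. Define $B_1 \triangleq A_1$ and, for $i \geq 2$,
\begin{align}
    B_i \triangleq A_i \setminus \bigcup_{j=1}^{i-1} A_j.
\end{align}
By construction the $B_i$ are pairwise disjoint, each satisfies $B_i \subseteq A_i$, and $\bigcup_i B_i = \bigcup_i A_i$, since every outcome in the union first enters at some smallest index $i$ and there lies in $B_i$.

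Next I would invoke the two defining properties of a probability measure. Applying countable additivity to the disjoint sequence $(B_i)$ and then monotonicity of the measure gives
\begin{align}
    \Pr\Big(\bigcup_i A_i\Big) = \Pr\Big(\bigcup_i B_i\Big) = \sum_i \Pr(B_i) \leq \sum_i \Pr(A_i),
\end{align}
where the final inequality is termwise, using $\Pr(B_i) \leq \Pr(A_i)$ for each $i$ because $B_i \subseteq A_i$. This is exactly the asserted bound.

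The only subtlety, and the part I would be most careful about, is the passage from finitely many events to a countable collection. For a finite index set the result follows by a short induction on the number of events using only finite additivity and $\Pr(A \cup B) = \Pr(A) + \Pr(B) - \Pr(A \cap B) \leq \Pr(A) + \Pr(B)$. Extending to the countable case is precisely where countable additivity is required, equivalently continuity of the measure from below applied to the increasing sets $\bigcup_{j \leq i} A_j$; the disjointification above is what lets countable additivity be applied cleanly rather than having to pass a limit through an inclusion--exclusion expression. No convergence issues arise, because all terms are nonnegative, so both series are well defined in $[0, \infty]$ and the termwise comparison is unconditionally valid.
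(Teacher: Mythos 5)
Your proof is correct and complete: the disjointification $B_i \triangleq A_i \setminus \bigcup_{j<i} A_j$, followed by countable additivity and monotonicity, is the canonical argument for Boole's inequality, and your remark on where countable (as opposed to finite) additivity enters is accurate. The paper itself states this lemma without proof, citing it as a classical result, so there is nothing to compare against; your argument is exactly the standard one and fills that gap correctly.
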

I.e., the probability that at least one of the events happens is, at most, the sum of the probabilities of the individual events.

\subsection{Value Bounds for Abstract and True Models~\label{app:valuebounds}}
Here we give upper bounds on the difference in value between the real \gls*{mdp} and an abstract \gls*{mdp} under various policies. 
We will use these bounds in Appendix~\ref{app:adaptrmax} to adapt the results of R-MAX~\citep{brafman2002r} to \gls*{rlao}.
These bounds and proofs are very similar to existing bounds~\citep{brafman2002r,strehl2008analysis,abel2016near,taiga2018approximate}. 
Here we repeat these for abstract models in the undiscounted finite horizon and in the discounted infinite horizon.

We define the finite horizon value function $\forall s \in S$:
\begin{align}
	V^{\pi, n}(s) &= R(s, \pi(s)) + \sum_{s' \in S} T(s'|s,\pi(s)) V^{\pi, n-1}(s), \\
	V^{\pi, 1}(s) &= R(s, \pi(s)). 
\end{align}
We use $V^{\bar{\pi}, n}$ to denote the value in $M$ under policy $\bar{\pi}$ and $\bar{V}^{\bar{\pi}, n}$ to denote the value in $\bar{M}$ under policy $\bar{\pi}$. 

\textbf{Theorem~\ref{theorem:boundsElena}.}
\textit{Let $M = \langle S, A, T, R \rangle$ be an \gls*{mdp} and $\bar{M} = \langle \bar{S}, A, \bar{T}, \bar{R} \rangle$ an abstract \gls{mdp}, for some defined abstract transitions and rewards. We assume that 
	\begin{align}
		\begin{aligned}
			\forall  \bar{s},\bar{s}' \in \bar{S}, \ s \in \bar{s}, \ a \in A: |\bar{T}(\bar{s}'|\bar{s},a) - \Pr(\bar{s}'|s,a)| &\leq \eta_T \\
			\text{and } |\bar{R}(\bar{s},a) - R(s,a)| &\leq \eta_R. \label{eq:assumption}
		\end{aligned}
	\end{align}
	Then, for a finite horizon problem with horizon $h$ we have:
	\begin{align}
		V^*(s) - V^{\bar{\pi}^*}(s) \leq 2h\eta_R + (h+1)h \eta_T |\bar{S}| R_\text{max}.
	\end{align}
	And for a discounted infinite horizon problem with discount $\gamma$ we have:
	\begin{align}
		V^*(s) - V^{\bar{\pi}^*}(s) \leq \frac{2\eta_R}{1 - \gamma} + \frac{2\gamma\eta_T |\bar{S}| R_\text{max}}{(1-\gamma)^2}.
\end{align}}
We will use the following two Lemmas to proof the Theorem.
\begin{lemma}~\label{lemma:valueunderpolicy}
	Under the assumption of~\eqref{eq:assumption} and for every abstract policy $\bar{\pi}$ and for every state $s \in \bar{s}$, we have:
	for a finite horizon problem with horizon $h$:
	\begin{align}
		|V^{\bar{\pi},h}(s) - \bar{V}^{\bar{\pi},h}(s)| \leq h\eta_R + \frac{(h-1)h}{2} \eta_T |\bar{S}| R_\text{max},
	\end{align}
	and for a discounted infinite horizon problem with discount $\gamma$:
	\begin{align}
		|V^{\bar{\pi}}(s) - \bar{V}^{\bar{\pi}}(s)| \leq \frac{\eta_R}{1 - \gamma} + \frac{\gamma\eta_T |\bar{S}| R_\text{max}}{(1-\gamma)^2}.
	\end{align}
\end{lemma}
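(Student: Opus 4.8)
The plan is to prove both inequalities by comparing the two value recursions one step at a time, isolating the reward error from the transition error. For the finite-horizon bound I would induct on the horizon $n$. The base case $n=1$ is immediate: since $s\in\bar{s}$ means $\phi(s)=\bar{s}$ and hence both value functions use the same action $a=\bar{\pi}(\bar{s})$, we have $|V^{\bar{\pi},1}(s)-\bar{V}^{\bar{\pi},1}(\bar{s})|=|R(s,a)-\bar{R}(\bar{s},a)|\le\eta_R$, which matches the claimed bound at $n=1$.

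For the inductive step, I would expand both one-step recursions and write the difference as the reward gap $R(s,a)-\bar{R}(\bar{s},a)$ plus the gap between the two expected next-state values. The central manoeuvre is to insert the intermediate quantity $\sum_{\bar{s}'}T(\bar{s}'|s,a)\,\bar{V}^{\bar{\pi},n-1}(\bar{s}')$, where $T(\bar{s}'|s,a)=\sum_{s'\in\bar{s}'}T(s'|s,a)$ as in Equation~\ref{eq:TtoAbstract}. Adding and subtracting this term splits the next-state gap into (i) a value-difference term $\sum_{\bar{s}'}\sum_{s'\in\bar{s}'}T(s'|s,a)\bigl(V^{\bar{\pi},n-1}(s')-\bar{V}^{\bar{\pi},n-1}(\bar{s}')\bigr)$, which the induction hypothesis bounds by $(n-1)\eta_R+\tfrac{(n-2)(n-1)}{2}\eta_T|\bar{S}|R_\text{max}$ because the transition weights sum to one, and (ii) a transition-difference term $\sum_{\bar{s}'}\bigl(T(\bar{s}'|s,a)-\bar{T}(\bar{s}'|\bar{s},a)\bigr)\bar{V}^{\bar{\pi},n-1}(\bar{s}')$, which the assumption $|T(\bar{s}'|s,a)-\bar{T}(\bar{s}'|\bar{s},a)|\le\eta_T$ together with the crude value bound $\bar{V}^{\bar{\pi},n-1}\le(n-1)R_\text{max}$ bounds by $|\bar{S}|\eta_T(n-1)R_\text{max}$. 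Summing the reward gap $\eta_R$ with these two contributions and simplifying $\tfrac{(n-2)(n-1)}{2}+(n-1)=\tfrac{(n-1)n}{2}$ yields exactly the target bound, closing the induction.

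For the discounted infinite-horizon bound I would run the same one-step decomposition on the stationary value functions, so that the two sums now carry a factor $\gamma$. Setting $\Delta\triangleq\sup_{s}|V^{\bar{\pi}}(s)-\bar{V}^{\bar{\pi}}(\phi(s))|$, which is finite because both values are bounded by $R_\text{max}/(1-\gamma)$, the value-difference term becomes at most $\gamma\Delta$ and the transition-difference term at most $\gamma|\bar{S}|\eta_T R_\text{max}/(1-\gamma)$. This gives the self-referential inequality $\Delta\le\eta_R+\gamma\Delta+\gamma|\bar{S}|\eta_T R_\text{max}/(1-\gamma)$, and solving for $\Delta$ produces the stated bound.

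The step I expect to require the most care is the grouping of true next-states $s'$ into their abstract classes $\bar{s}'$ together with the introduction of the intermediate term: this is precisely what separates the accumulated value error (controlled recursively) from the per-step transition error (controlled by the abstraction assumption), and getting the index bookkeeping and the telescoping of the $\eta_T$ coefficient right is the crux of matching the exact $\tfrac{(n-1)n}{2}$ constant.
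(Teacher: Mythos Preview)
Your finite-horizon argument is essentially identical to the paper's: same induction on $n$, same add-and-subtract of $\sum_{\bar{s}'}T(\bar{s}'|s,a)\bar{V}^{\bar{\pi},n-1}(\bar{s}')$, same use of the crude bound $\bar{V}^{\bar{\pi},n-1}\le(n-1)R_\text{max}$, and the same arithmetic $\tfrac{(n-2)(n-1)}{2}+(n-1)=\tfrac{(n-1)n}{2}$.

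For the discounted infinite horizon you take a genuinely different route. The paper repeats the finite-horizon induction but now carrying the discount, proving
\[
|V^{\bar{\pi},n}(s)-\bar{V}^{\bar{\pi},n}(\bar{s})|\le \eta_R\gamma^{n-1}+\sum_{i=0}^{n-2}\gamma^{i}\Bigl(\eta_R+\tfrac{\gamma\eta_T|\bar{S}|R_\text{max}}{1-\gamma}\Bigr),
\]
and then letting $n\to\infty$. Your approach instead works directly with the stationary value functions: define $\Delta=\sup_s|V^{\bar{\pi}}(s)-\bar{V}^{\bar{\pi}}(\phi(s))|$, do the one-step split once, and solve the resulting contraction inequality $\Delta\le\eta_R+\gamma\Delta+\gamma|\bar{S}|\eta_T R_\text{max}/(1-\gamma)$. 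Both are correct; your fixed-point argument is shorter and avoids the auxiliary finite-horizon sequence, while the paper's limit argument has the virtue of making explicit how the discounted bound arises as the limit of the finite-horizon bounds.
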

\begin{proof}
	The proof follows the same steps for both the discounted infinite horizon and the undiscounted finite horizon. For completeness, we show them both here.
	
	First, for the undiscounted finite horizon.
	By induction, we will show that for $n \geq 1$
	\begin{align}
		\forall \bar{s} \in \bar{S}, s \in \bar{s} : |V^{\bar{\pi},n}(s) - \bar{V}^{\bar{\pi},n}(s)| \leq n \eta_R + \frac{(n - 1)n}{2} \eta_T |\bar{S}|R_\text{max}.\label{eq:inductionH1}
	\end{align}
	For $n = 1$, we have
	\begin{align}
		|V^{\bar{\pi},1}(s) - \bar{V}^{\bar{\pi},1}(s)| = |R(s, \pi(\bar{s})) - \bar{R}(\bar{s}, \bar{\pi}(\bar{s}))| \leq \eta_R.
	\end{align}
	Now assume that the induction hypothesis, \eqref{eq:inductionH1}, holds for $n - 1$, then
	\begin{align}
		&|V^{\bar{\pi},n}(s) - \bar{V}^{\bar{\pi},n}(s)| = |R(s, \bar{\pi}(\bar{s})) - \bar{R}(\bar{s}, \bar{\pi}(\bar{s})) + \sum_{s' \in S} T(s'|s,\bar{\pi}(s))V^{\bar{\pi},n-1}(s') - \sum_{\bar{s}' \in \bar{S}} \bar{T}(\bar{s}'|\bar{s},\bar{\pi}(\bar{s}))\bar{V}^{\bar{\pi},n-1}(\bar{s}')| \\
		&\leq |R(s, \bar{\pi}(\bar{s})) - \bar{R}(\bar{s}, \bar{\pi}(\bar{s}))|  + |\sum_{\bar{s}' \in \bar{S}}\sum_{s' \in \bar{s}'} T(s'|s,\bar{\pi}(s))V^{\bar{\pi},n-1}(s') - \sum_{\bar{s}' \in \bar{S}} \bar{T}(\bar{s}'|\bar{s},\bar{\pi}(\bar{s}))\bar{V}^{\bar{\pi},n-1}(\bar{s}')| \label{eq:step1} \\
		&\leq \eta_R + |\sum_{\bar{s}' \in \bar{S}}\sum_{s' \in \bar{s}'} T(s'|s,\bar{\pi}(s))V^{\bar{\pi},n-1}(s') - \sum_{\bar{s}' \in \bar{S}} \bar{V}^{\bar{\pi},n-1}(\bar{s}') \sum_{s' \in \bar{s}'}T(s'|s, \bar{\pi}(\bar{s}))| \nonumber \\
		& + |\sum_{\bar{s}' \in \bar{S}} \bar{V}^{\bar{\pi},n-1}(\bar{s}') \sum_{s' \in \bar{s}'}T(s'|s, \bar{\pi}(\bar{s})) - \sum_{\bar{s}' \in \bar{S}} \bar{T}(\bar{s}'|\bar{s},\bar{\pi}(\bar{s}))\bar{V}^{\bar{\pi},n-1}(\bar{s}')| \label{eq:step2} \\
		&\leq \eta_R + |\sum_{\bar{s}' \in \bar{S}}\sum_{s' \in \bar{s}'} T(s'|s,\bar{\pi}(s)) \big[V^{\bar{\pi},n-1}(s') -  \bar{V}^{\bar{\pi},n-1}(\bar{s}')\big]| + |\sum_{\bar{s}' \in \bar{S}}
		\big[\bar{T}(\bar{s}'|\bar{s},\bar{\pi}(\bar{s})) -  
		\sum_{s' \in \bar{s}'}
		T(s'|s, \bar{\pi}(\bar{s}))\big]
		\bar{V}^{\bar{\pi},n-1}(\bar{s}')|  \label{eq:step3} \\
		&\leq \eta_R + (n-1)\eta_R + \frac{(n - 1 -1)(n - 1)}{2}\eta_T|\bar{S}|R_\text{max}  + \eta_T |\bar{S}| (n-1)R_\text{max} \label{eq:step4} \\
		&= n \eta_R + \frac{(n - 2)(n - 1)}{2}\eta_T|\bar{S}|R_\text{max}  + \eta_T |\bar{S}| (n-1)R_\text{max} \\
		&= n \eta_R + \frac{(n - 1)n}{2}\eta_T|\bar{S}|R_\text{max}.
	\end{align}
	For the step from \eqref{eq:step1} to \eqref{eq:step2}, we subtract and add $\sum_{\bar{s}' \in \bar{S}} \bar{V}^{\bar{\pi},n-1}(\bar{s}') \sum_{s' \in \bar{s}'}T(s'|s, \bar{\pi}(\bar{s}))$, and from \eqref{eq:step3} to \eqref{eq:step4}, we use the inductive hypothesis and the fact that $(n-1)R_\text{max}$ is an upper bound on  $\bar{V}^{\bar{\pi},n-1}(\bar{s}')$ since the maximum reward per timestep is $R_\text{max}$.
	
	Now, for the discounted infinite horizon.
	By induction, we will show that for $n \geq 1$
	\begin{align}
		\forall \bar{s} \in \bar{S}, s \in \bar{s} : |V^{\bar{\pi},n}(s) - \bar{V}^{\bar{\pi},n}(s)| \leq \eta_R\gamma^{n-1} + \sum_{i=0}^{n-2}\gamma^i(\eta_R + \frac{\gamma\eta_T|\bar{S}| R_\text{max}}{1-\gamma}).\label{eq:inductionInf}
	\end{align}
	For $n = 1$, we have
	\begin{align}
		|V^{\bar{\pi},1}(s) - \bar{V}^{\bar{\pi},1}(s)| = |R(s, \pi(\bar{s})) - \bar{R}(\bar{s}, \bar{\pi}(\bar{s}))| \leq \eta_R.
	\end{align}
	Now assume that the induction hypothesis, \eqref{eq:inductionInf}, holds for $n - 1$, then
	\begin{align}
		&|V^{\bar{\pi},n}(s) - \bar{V}^{\bar{\pi},n}(s)| = |R(s, \bar{\pi}(\bar{s})) - \bar{R}(\bar{s}, \bar{\pi}(\bar{s})) + \gamma (\sum_{s' \in S} T(s'|s,\bar{\pi}(s))V^{\bar{\pi},n-1}(s') - \sum_{\bar{s}' \in \bar{S}} \bar{T}(\bar{s}'|\bar{s},\bar{\pi}(\bar{s}))\bar{V}^{\bar{\pi},n-1}(\bar{s}'))| \\
		&\leq |R(s, \bar{\pi}(\bar{s})) - \bar{R}(\bar{s}, \bar{\pi}(\bar{s}))|  + \gamma|\sum_{\bar{s}' \in \bar{S}}\sum_{s' \in \bar{s}'} T(s'|s,\bar{\pi}(s))V^{\bar{\pi},n-1}(s') - \sum_{\bar{s}' \in \bar{S}} \bar{T}(\bar{s}'|\bar{s},\bar{\pi}(\bar{s}))\bar{V}^{\bar{\pi},n-1}(\bar{s}')| \label{eq:step1Inf} \\
		&\leq \eta_R + \gamma|\sum_{\bar{s}' \in \bar{S}}\sum_{s' \in \bar{s}'} T(s'|s,\bar{\pi}(s))V^{\bar{\pi},n-1}(s') - \sum_{\bar{s}' \in \bar{S}} \bar{V}^{\bar{\pi},n-1}(\bar{s}') \sum_{s' \in \bar{s}'}T(s'|s, \bar{\pi}(\bar{s}))| \nonumber \\
		& + \gamma|\sum_{\bar{s}' \in \bar{S}} \bar{V}^{\bar{\pi},n-1}(\bar{s}') \sum_{s' \in \bar{s}'}T(s'|s, \bar{\pi}(\bar{s})) - \sum_{\bar{s}' \in \bar{S}} \bar{T}(\bar{s}'|\bar{s},\bar{\pi}(\bar{s}))\bar{V}^{\bar{\pi},n-1}(\bar{s}')| \label{eq:step2Inf} \\
		&\leq \eta_R + \gamma|\sum_{\bar{s}' \in \bar{S}}\sum_{s' \in \bar{s}'} T(s'|s,\bar{\pi}(s)) \big[V^{\bar{\pi},n-1}(s') -  \bar{V}^{\bar{\pi},n-1}(\bar{s}')\big]| + \gamma|\sum_{\bar{s}' \in \bar{S}}
		\big[\bar{T}(\bar{s}'|\bar{s},\bar{\pi}(\bar{s})) -  
		\sum_{s' \in \bar{s}'}
		T(s'|s, \bar{\pi}(\bar{s}))\big]
		\bar{V}^{\bar{\pi},n-1}(\bar{s}')|  \label{eq:step3Inf} \\
		&\leq \eta_R + \gamma (\eta_R\gamma^{n-2} + \sum_{i=0}^{n-3}\gamma^i(\eta_R + \frac{\gamma\eta_T|\bar{S}| R_\text{max}}{1-\gamma}))  + \gamma \eta_T |\bar{S}| \frac{R_\text{max}}{1 - \gamma} \label{eq:step4Inf} \\
		&= \eta_R + \eta_R\gamma^{n-1} + \sum_{i=1}^{n-2}\gamma^i(\eta_R + \frac{\gamma\eta_T|\bar{S}| R_\text{max}}{1-\gamma})  + \gamma \eta_T |\bar{S}| \frac{R_\text{max}}{1 - \gamma} \\
		&= \eta_R\gamma^{n-1} + \sum_{i=0}^{n-2}\gamma^i(\eta_R + \frac{\gamma\eta_T|\bar{S}| R_\text{max}}{1-\gamma}).
	\end{align}
	For the step from \eqref{eq:step1Inf} to \eqref{eq:step2Inf}, we subtract and add $\sum_{\bar{s}' \in \bar{S}} \bar{V}^{\bar{\pi},n-1}(\bar{s}') \sum_{s' \in \bar{s}'}T(s'|s, \bar{\pi}(\bar{s}))$, and from \eqref{eq:step3Inf} to \eqref{eq:step4Inf}, we use the inductive hypothesis and the fact that $\frac{R_\text{max}}{1-\gamma}$ is an upper bound on  $\bar{V}^{\bar{\pi},n-1}(\bar{s}')$.
	
	Finally, taking the limit for $n \rightarrow \infty$, we get:
	\begin{align*}
		|V^{\bar{\pi}}(s) - \bar{V}^{\bar{\pi}}(s)| &\leq 
		\eta_R \times 0 + \frac{1}{1 - \gamma} (\eta_R + \frac{\gamma\eta_T|\bar{S}| R_\text{max}}{1-\gamma}) \\
		&= \frac{\eta_R}{1 - \gamma} + \frac{\gamma\eta_T |\bar{S}| R_\text{max}}{(1-\gamma)^2}.
	\end{align*}
\end{proof}

\begin{lemma}\label{lemma:valueunderopt}
	Under the assumption of~\eqref{eq:assumption} and for every state $s \in \bar{s}$, we have:
	for a finite horizon problem with horizon $h$:
	\begin{align}
		|V^{*,h}(s) - \bar{V}^{*,h}(s)| \leq h\eta_R + \frac{(h-1)h}{2} \eta_T |\bar{S}| R_\text{max},
	\end{align}
	and for a discounted infinite horizon problem with discount $\gamma$:
	\begin{align}
		|V^{*}(s) - \bar{V}^{*}(s)| \leq \frac{\eta_R}{1 - \gamma} + \frac{\gamma\eta_T |\bar{S}| R_\text{max}}{(1-\gamma)^2}.
	\end{align}
\end{lemma}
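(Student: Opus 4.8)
The plan is to prove Lemma~\ref{lemma:valueunderopt} by the same induction on the horizon that drives Lemma~\ref{lemma:valueunderpolicy}, but working from the Bellman \emph{optimality} equations rather than a fixed-policy evaluation. Concretely, I would use $V^{*,n}(s) = \max_{a} \bigl[ R(s,a) + \sum_{s'} T(s'|s,a) V^{*,n-1}(s') \bigr]$ for the true \gls*{mdp} and $\bar{V}^{*,n}(\bar{s}) = \max_{a} \bigl[ \bar{R}(\bar{s},a) + \sum_{\bar{s}'} \bar{T}(\bar{s}'|\bar{s},a) \bar{V}^{*,n-1}(\bar{s}') \bigr]$ for the abstract \gls*{mdp}, and show by induction on $n$ that $|V^{*,n}(s) - \bar{V}^{*,n}(\bar{s})| \leq n\eta_R + \frac{(n-1)n}{2}\eta_T|\bar{S}|R_\text{max}$ for every $s \in \bar{s}$. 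The one genuinely new ingredient relative to Lemma~\ref{lemma:valueunderpolicy} is disposing of the two maximizations, for which I would invoke the elementary inequality $|\max_a f(a) - \max_a g(a)| \leq \max_a |f(a) - g(a)|$; this collapses the comparison of optimal values into a per-action comparison whose structure is identical to the fixed-policy case already handled.

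For the base case $n=1$, where $V^{*,1}(s) = \max_a R(s,a)$ and $\bar{V}^{*,1}(\bar{s}) = \max_a \bar{R}(\bar{s},a)$, the max-inequality together with \eqref{eq:assumption} yields $|V^{*,1}(s) - \bar{V}^{*,1}(\bar{s})| \leq \max_a |R(s,a) - \bar{R}(\bar{s},a)| \leq \eta_R$. For the inductive step I would apply the max-inequality to the two Bellman backups, fix an arbitrary action $a$ inside the absolute value, and then bound the resulting expression exactly as in the chain from \eqref{eq:step1} to \eqref{eq:step4}: split off the reward term (contributing $\eta_R$), add and subtract $\sum_{\bar{s}'}\bar{V}^{*,n-1}(\bar{s}')\sum_{s'\in\bar{s}'}T(s'|s,a)$, and then use the induction hypothesis on $|V^{*,n-1}(s') - \bar{V}^{*,n-1}(\bar{s}')|$ for the first group and the transition closeness $|\bar{T}(\bar{s}'|\bar{s},a) - \sum_{s'\in\bar{s}'}T(s'|s,a)| \leq \eta_T$ together with $\bar{V}^{*,n-1}(\bar{s}') \leq (n-1)R_\text{max}$ for the second. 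Since every action yields the same bound $n\eta_R + \frac{(n-1)n}{2}\eta_T|\bar{S}|R_\text{max}$, the bound passes through the outer $\max_a$ unchanged and closes the induction.

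The discounted infinite-horizon claim would follow the identical template against the discounted Bellman optimality equations, proving by induction the intermediate bound $\eta_R\gamma^{n-1} + \sum_{i=0}^{n-2}\gamma^i(\eta_R + \frac{\gamma\eta_T|\bar{S}|R_\text{max}}{1-\gamma})$ appearing in \eqref{eq:inductionInf}, and then letting $n \to \infty$ to obtain $\frac{\eta_R}{1-\gamma} + \frac{\gamma\eta_T|\bar{S}|R_\text{max}}{(1-\gamma)^2}$. I expect the only real subtlety to be the handling of the two maximizations: one must check that the per-action bound is uniform in $a$ so that it survives $\max_a$, and in particular that the optimizing action need not coincide in $M$ and $\bar{M}$. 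The max-inequality is precisely what sidesteps this, and it is also why a direct sandwiching via Lemma~\ref{lemma:valueunderpolicy} is awkward: the optimal policy $\pi^*$ of $M$ need not be an abstract policy, so it cannot be fed into that lemma, whereas the Bellman-optimality induction compares the two optimal value functions without ever referring to a (possibly non-abstract) optimal policy.
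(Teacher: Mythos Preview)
Your proposal is correct and essentially identical to the paper's own proof: both proceed by induction on the horizon using the Bellman optimality equations, invoke the inequality $|\max_a f(a) - \max_a g(a)| \leq \max_a |f(a) - g(a)|$ to reduce to a per-action comparison, and then reuse the add/subtract decomposition of Lemma~\ref{lemma:valueunderpolicy} together with the bounds $|\bar{T}(\bar{s}'|\bar{s},a) - \sum_{s'\in\bar{s}'}T(s'|s,a)| \leq \eta_T$ and $\bar{V}^{*,n-1}(\bar{s}') \leq (n-1)R_\text{max}$. The discounted case is handled in the same way in both, proving the intermediate bound of \eqref{eq:inductionInf} and letting $n\to\infty$.
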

\begin{proof}
	First, we define 
	\begin{align}
		\forall \bar{s} \in \bar{S}, s \in S : V^{*,n}(s) &= \max_{a \in A}\big[R(s,a) + \gamma \sum_{s' \in S} T(s'|s,a)V^{*,n-1}(s') \big], \\
		\bar{V}^{*,n}(\bar{s}) &= \max_{a \in A}\big[\bar{R}(\bar{s},a) + \gamma  \sum_{\bar{s}' \in \bar{S}} \bar{T}(\bar{s}'|s,a)\bar{V}^{*,n-1}(\bar{s}') \big].
	\end{align}
	For the undiscounted case $\gamma = 1$, so we can drop $\gamma$ from the notation.
	
	The proof follows the same steps as the proof of Lemma~\ref{lemma:valueunderpolicy}.
	We start again with the undiscounted finite horizon.
	
	By induction, we will show that for $n \geq 1$
	\begin{align}
		\forall \bar{s} \in \bar{S}, s \in \bar{s} : |V^{*,n}(s) - \bar{V}^{*,n}(\bar{s})| \leq n \eta_R + \frac{(n - 1)n}{2} \eta_T |\bar{S}|R_\text{max}.\label{eq:inductionH2}
	\end{align}
	Making use of the fact that $|\max f - \max g| \leq \max |f-g|$, we have for $n=1$ 
	\begin{align}
		|V^{*,1}(s) - \bar{V}^{*,1}(s)| = |\max_{a \in A} R(s, a) - \max_{a \in A} \bar{R}(\bar{s}, a)| \leq \max_{a \in A} |R(s, a) - \bar{R}(\bar{s}, a)| \leq \eta_R.
	\end{align}
	Now assume that the induction hypothesis, \eqref{eq:inductionH2}, holds for $n - 1$, then
	\begin{align}
		&|V^{*,n}(s) - \bar{V}^{*,n}(\bar{s})| = \max_{a \in A}|R(s, a) - \bar{R}(\bar{s}, a) + \sum_{s' \in S} T(s'|s,a)V^{*,n-1}(s') - \sum_{\bar{s}' \in \bar{S}} \bar{T}(\bar{s}'|\bar{s},a)\bar{V}^{*,n-1}(\bar{s}')| \\
		&\leq \max_{a \in A}|R(s, a) - \bar{R}(\bar{s}, a)|  + \max_{a \in A}|\sum_{\bar{s}' \in \bar{S}}\sum_{s' \in \bar{s}'} T(s'|s,a)V^{*,n-1}(s') - \sum_{\bar{s}' \in \bar{S}} \bar{T}(\bar{s}'|\bar{s},a)\bar{V}^{*,n-1}(\bar{s}')| \label{eq:step5} \\
		&\leq \eta_R + \max_{a \in A}|\sum_{\bar{s}' \in \bar{S}}\sum_{s' \in \bar{s}'} T(s'|s,a)V^{*,n-1}(s') - \sum_{\bar{s}' \in \bar{S}} \bar{V}^{*,n-1}(\bar{s}')\sum_{s' \in \bar{s}'}T(s'|s,a)| \nonumber \\
		& + \max_{a \in A}|\sum_{\bar{s}' \in \bar{S}} \bar{V}^{*,n-1}(\bar{s}')\sum_{s' \in \bar{s}'}T(s'|s,a) - \sum_{\bar{s}' \in \bar{S}} \bar{T}(\bar{s}'|\bar{s},a)\bar{V}^{*,n-1}(\bar{s}')| \label{eq:step6} \\
		&\leq \eta_R + \max_{a \in A}|\sum_{\bar{s}' \in \bar{S}}\sum_{s' \in \bar{s}'} T(s'|s,a) \big[V^{*,n-1}(s') -  \bar{V}^{*,n-1}(\bar{s}')\big]| + \max_{a \in A}|\sum_{\bar{s}' \in \bar{S}}
		\big[\bar{T}(\bar{s}'|\bar{s},a) -  
		\sum_{s' \in \bar{s}'}
		T(s'|s, a)\big]
		\bar{V}^{*,n-1}(\bar{s}')|  \label{eq:step7} \\
		&\leq \eta_R + (n-1)\eta_R + \frac{(n - 1 -1)(n - 1)}{2}\eta_T|\bar{S}|R_\text{max}  + \eta_T (n-1) |\bar{S}|R_\text{max} \label{eq:step8} \\
		&= n \eta_R + \frac{(n - 1)n}{2}\eta_T|\bar{S}|R_\text{max}.
	\end{align}
	For the step from \eqref{eq:step5} to \eqref{eq:step6}, we subtract and add $\sum_{\bar{s}' \in \bar{S}} \bar{V}^{*,n-1}(\bar{s}')\sum_{s' \in \bar{s}'}T(s'|s,a)$, and from \eqref{eq:step7} to \eqref{eq:step8}, we use the inductive hypothesis and again the fact that $(n-1)R_\text{max}$ is an upper bound on $\bar{V}^{*,n-1}(\bar{s}')$ since the maximum reward per timestep is $R_\text{max}$.
	
	Now, for the discounted infinite horizon.
	By induction, we will show that for $n \geq 1$
	\begin{align}
		\forall \bar{s} \in \bar{S}, s \in \bar{s} : |V^{\bar{\pi},n}(s) - \bar{V}^{\bar{\pi},n}(s)| \leq \eta_R\gamma^{n-1} + \sum_{i=0}^{n-2}\gamma^i(\eta_R + \frac{\gamma\eta_T|\bar{S}| R_\text{max}}{1-\gamma}).\label{eq:inductionInf2}
	\end{align}
	For $n = 1$, we have
	\begin{align}
		|V^{\bar{\pi},1}(s) - \bar{V}^{\bar{\pi},1}(s)| = |\max_{a \in A} R(s, a) - \max_{a \in A} \bar{R}(\bar{s}, a)| \leq \max_{a \in A} |R(s, a) - \bar{R}(\bar{s}, a)| \leq \eta_R.
	\end{align}
	Now assume that the induction hypothesis, \eqref{eq:inductionInf2}, holds for $n - 1$, then
	\begin{align}
		&|V^{*,n}(s) - \bar{V}^{*,n}(\bar{s})| = \max_{a \in A}|R(s, a) - \bar{R}(\bar{s}, a) + \gamma \sum_{s' \in S} T(s'|s,a)V^{*,n-1}(s') - \gamma \sum_{\bar{s}' \in \bar{S}} \bar{T}(\bar{s}'|\bar{s},a)\bar{V}^{*,n-1}(\bar{s}')| \\
		&\leq \max_{a \in A}|R(s, a) - \bar{R}(\bar{s}, a)|  + \max_{a \in A}\gamma|\sum_{\bar{s}' \in \bar{S}}\sum_{s' \in \bar{s}'}  T(s'|s,a)V^{*,n-1}(s') - \sum_{\bar{s}' \in \bar{S}}  \bar{T}(\bar{s}'|\bar{s},a)\bar{V}^{*,n-1}(\bar{s}')| \label{eq:step1Infopt} \\
		&\leq \eta_R + \max_{a \in A}\gamma|\sum_{\bar{s}' \in \bar{S}}\sum_{s' \in \bar{s}'} T(s'|s,a)V^{*,n-1}(s') - \sum_{\bar{s}' \in \bar{S}} \bar{V}^{*,n-1}(\bar{s}')\sum_{s' \in \bar{s}'}T(s'|s,a)| \nonumber \\
		& + \max_{a \in A}\gamma|\sum_{\bar{s}' \in \bar{S}} \bar{V}^{*,n-1}(\bar{s}')\sum_{s' \in \bar{s}'}T(s'|s,a) - \sum_{\bar{s}' \in \bar{S}} \bar{T}(\bar{s}'|\bar{s},a)\bar{V}^{*,n-1}(\bar{s}')| \label{eq:step2Infopt} \\
		&\leq \eta_R + \max_{a \in A}\gamma|\sum_{\bar{s}' \in \bar{S}}\sum_{s' \in \bar{s}'} T(s'|s,a) \big[V^{*,n-1}(s') -  \bar{V}^{*,n-1}(\bar{s}')\big]| + \max_{a \in A}\gamma|\sum_{\bar{s}' \in \bar{S}}
		\big[\bar{T}(\bar{s}'|\bar{s},a) -  
		\sum_{s' \in \bar{s}'}
		T(s'|s, a)\big]
		\bar{V}^{*,n-1}(\bar{s}')|  \label{eq:step3Infopt} \\
		&\leq \eta_R + \gamma (\eta_R\gamma^{n-2} + \sum_{i=0}^{n-3}\gamma^i(\eta_R + \frac{\gamma\eta_T|\bar{S}| R_\text{max}}{1-\gamma}))  + \gamma \eta_T |\bar{S}| \frac{R_\text{max}}{1 - \gamma} \label{eq:step4Infopt} \\
		&= \eta_R + \eta_R\gamma^{n-1} + \sum_{i=1}^{n-2}\gamma^i(\eta_R + \frac{\gamma\eta_T|\bar{S}| R_\text{max}}{1-\gamma})  + \gamma \eta_T |\bar{S}| \frac{R_\text{max}}{1 - \gamma} \\
		&= \eta_R\gamma^{n-1} + \sum_{i=0}^{n-2}\gamma^i(\eta_R + \frac{\gamma\eta_T|\bar{S}| R_\text{max}}{1-\gamma}).
	\end{align}
	For the step from \eqref{eq:step1Infopt} to \eqref{eq:step2Infopt}, we subtract and add $\sum_{\bar{s}' \in \bar{S}} \bar{V}^{\bar{\pi},n-1}(\bar{s}') \sum_{s' \in \bar{s}'}T(s'|s, \bar{\pi}(\bar{s}))$, and from \eqref{eq:step3Infopt} to \eqref{eq:step4Infopt}, we use the inductive hypothesis and the fact that $\frac{R_\text{max}}{1-\gamma}$ is an upper bound on  $\bar{V}^{*,n-1}(\bar{s}')$.
	
	Finally, taking the limit for $n \rightarrow \infty$, we get:
	\begin{align*}
		|V^{*}(s) - \bar{V}^{*}(s)| &\leq 
		\eta_R \times 0 + \frac{1}{1 - \gamma} (\eta_R + \frac{\gamma\eta_T|\bar{S}| R_\text{max}}{1-\gamma}) \\
		&= \frac{\eta_R}{1 - \gamma} + \frac{\gamma\eta_T |\bar{S}| R_\text{max}}{(1-\gamma)^2}.
	\end{align*}
\end{proof}

\begin{proof}[Proof of Theorem~\ref{theorem:boundsElena}]
	We can now proof Theorem~\ref{theorem:boundsElena}, by using the triangle inequality and the results of Lemmas~\ref{lemma:valueunderpolicy} and~\ref{lemma:valueunderopt}.
	For the undiscounted finite horizon:
	\begin{align*}
		|V^{*,h}(s) - V^{\bar{\pi}^{*},h}(s)| &\leq |V^{*,h}(s) - \bar{V}^{*,h}(s)| + |\bar{V}^{*,h}(s) - V^{\bar{\pi}^{*},h}(s)|
		\\
		&=  |V^{*,h}(s) - \bar{V}^{*,h}(s)| + |\bar{V}^{\bar{\pi}^{*},h}(s) - V^{\bar{\pi}^{*},h}(s)|\\
		&\leq h\eta_R + \frac{(h-1)h}{2} \eta_T |\bar{S}| R_\text{max} + h\eta_R + \frac{(h-1)h}{2} \eta_T |\bar{S}| R_\text{max} \\
		&= 2h\eta_R + (h-1)h \eta_T |\bar{S}| R_\text{max}.
	\end{align*}
	For the discounted infinite horizon:
	\begin{align*}
		|V^{*}(s) - V^{\bar{\pi}^{*}}(s)| &\leq |V^{*}(s) - \bar{V}^{*}(s)| + |\bar{V}^{*}(s) - V^{\bar{\pi}^{*}}(s)|
		\\
		&=  |V^{*}(s) - \bar{V}^{*}(s)| + |\bar{V}^{\bar{\pi}^{*}}(s) - V^{\bar{\pi}^{*}}(s)| \\
		&\leq \frac{\eta_R}{1 - \gamma} + \frac{\gamma\eta_T |\bar{S}| R_\text{max}}{(1-\gamma)^2} + \frac{\eta_R}{1 - \gamma} + \frac{\gamma\eta_T |\bar{S}| R_\text{max}}{(1-\gamma)^2}\\
		&= \frac{2\eta_R}{1 - \gamma} + \frac{2\gamma\eta_T |\bar{S}| R_\text{max}}{(1-\gamma)^2}.
	\end{align*}
\end{proof}

\subsubsection{Value Difference for Similar \glspl*{mdp}}
Finally, we give a simulation lemma for two \glspl*{mdp} on the same state-action space.
\begin{lemma}\label{lemma:simulationlemma}
	Let $M$ and $M'$ be two \glspl*{mdp} on the same state-action space, with 
	\begin{align}
		\forall s,a \in S \times A : \ |R_M(s,a) - R_{M'}(s,a)| &\leq \eta_R, \\
		\forall s,a,s' \in S \times A \times S : \ |T_M(s'|s,a) - T_{M'}(s'|s,a)| &\leq \eta_T.
	\end{align}
	Then, for every policy $\pi$ and for every state $s \in S$ we have:
	\begin{align}
		|V_M^{\pi,n}(s) - V_{M'}^{\pi,n}(s)| \leq n \eta_R + \frac{(n - 1)n }{2} \eta_T |S|R_\text{max}. 
	\end{align}
\end{lemma}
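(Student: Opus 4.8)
The plan is to prove the bound by induction on the horizon $n$, following closely the argument used for Lemma~\ref{lemma:valueunderpolicy} but in the simpler setting where both MDPs share the full state space $S$, so that no aggregation over abstract states is needed. Throughout I would use the finite-horizon recursion $V^{\pi,n}(s) = R(s,\pi(s)) + \sum_{s'} T(s'|s,\pi(s)) V^{\pi,n-1}(s')$ for each of $M$ and $M'$.

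For the base case $n=1$, the value functions reduce to the one-step rewards, $V_M^{\pi,1}(s) = R_M(s,\pi(s))$ and $V_{M'}^{\pi,1}(s) = R_{M'}(s,\pi(s))$, so the reward-closeness assumption gives $|V_M^{\pi,1}(s) - V_{M'}^{\pi,1}(s)| \leq \eta_R$, which matches the claimed bound at $n=1$ since the $\eta_T$ term carries the factor $(n-1)n/2 = 0$.

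For the inductive step, I would assume the bound holds for $n-1$ and expand both value functions via their Bellman recursions. After applying the triangle inequality to peel off the reward difference (contributing $\eta_R$), the key move is to add and subtract the cross term $\sum_{s'} T_M(s'|s,\pi(s)) V_{M'}^{\pi,n-1}(s')$, splitting the remaining transition term into two pieces. The first piece, $\sum_{s'} T_M(s'|s,\pi(s))[V_M^{\pi,n-1}(s') - V_{M'}^{\pi,n-1}(s')]$, is controlled using the inductive hypothesis together with the fact that $\sum_{s'} T_M(s'|s,\pi(s)) = 1$. The second piece, $\sum_{s'}[T_M(s'|s,\pi(s)) - T_{M'}(s'|s,\pi(s))]V_{M'}^{\pi,n-1}(s')$, is bounded by noting that each transition difference is at most $\eta_T$, that there are $|S|$ terms, and that $V_{M'}^{\pi,n-1}(s') \leq (n-1)R_\text{max}$ since at most $n-1$ rewards of size at most $R_\text{max}$ accumulate; this yields $(n-1)\eta_T|S|R_\text{max}$.

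Collecting these contributions gives $\eta_R + (n-1)\eta_R + \frac{(n-2)(n-1)}{2}\eta_T|S|R_\text{max} + (n-1)\eta_T|S|R_\text{max}$, and the only piece of algebra to check is the identity $\frac{(n-2)(n-1)}{2} + (n-1) = \frac{(n-1)n}{2}$, which closes the induction at exactly $n\eta_R + \frac{(n-1)n}{2}\eta_T|S|R_\text{max}$. I do not anticipate any real obstacle: the argument is a direct simplification of Lemma~\ref{lemma:valueunderpolicy}, where the double sum over $\bar{s}'$ and over $s' \in \bar{s}'$ collapses to a single sum over $s'$ and $|\bar{S}|$ is replaced by $|S|$. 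The one thing to be careful about is using the per-state transition bound $|T_M(s'|s,a) - T_{M'}(s'|s,a)| \leq \eta_T$ summed over all $|S|$ next states, rather than an aggregated abstract bound.
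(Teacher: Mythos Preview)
Your proposal is correct and follows essentially the same approach as the paper: induction on $n$, the same base case, the same add-and-subtract of the cross term $\sum_{s'} T_M(s'|s,\pi(s)) V_{M'}^{\pi,n-1}(s')$, and the same bookkeeping to reach $n\eta_R + \frac{(n-1)n}{2}\eta_T|S|R_\text{max}$. The paper also explicitly treats this as the single-sum analogue of Lemma~\ref{lemma:valueunderpolicy}, exactly as you anticipated.
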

\begin{proof}
	By induction, we will show that for $n \geq 1$
	\begin{align}
		\forall s \in S : \ |V_M^{\pi,n}(s) - V_{M'}^{\pi,n}(s)| \leq n \eta_R + \frac{(n - 1)n}{2} \eta_T |S|R_\text{max}.\label{eq:inductionH3}
	\end{align}
	For $n = 1$, we have
	\begin{align}
		|V_M^{\pi,1}(s) - V_{M'}^{\pi,1}(s)| = |R_M(s, \pi(s)) - R_{M'}(s, \pi(s))| \leq \eta_R.
	\end{align}
	Now assume that the induction hypothesis, \eqref{eq:inductionH3}, holds for $n - 1$, then
	\begin{align}
		&|V_M^{\pi,n}(s) - V_{M'}^{\pi,n}(s)| = |R_M(s, \pi(\bar{s})) - R_{M'}(s, \pi(s)) + \sum_{s' \in S} T_M(s'|s,\pi(s))V_M^{\pi,n-1}(s') - \sum_{s' \in S} T_{M'}(s'|s,\pi(s)))V_{M'}^{\pi,n-1}(s')| \\
		&\leq |R_M(s, \pi(s)) - R_{M'}(s, \pi(s))|  + |\sum_{s' \in S} T_M(s'|s,\pi(s))V_M^{\pi,n-1}(s') - \sum_{s' \in S} T_{M'}(s'|s,\pi(s))V_{M'}^{\pi,n-1}(s')| \label{eq:step1a} \\
		&\leq \eta_R + |\sum_{s' \in S} T_M(s'|s,\pi(s))V_M^{\pi,n-1}(s') - \sum_{s' \in S} T_M(s'|s, \pi(s))V_{M'}^{\pi,n-1}(s')| \nonumber \\
		& + |\sum_{s' \in S}T_M(s'|s, \pi(s))V_{M'}^{\pi,n-1}(s') - \sum_{s' \in S} T_{M'}(s'|s,\pi(s))V_{M'}^{\pi,n-1}(s')| \label{eq:step2a} \\
		&\leq \eta_R + |\sum_{s' \in S} T_M(s'|s,\pi(s)) \big[V_M^{\pi,n-1}(s') -  V_{M'}^{\pi,n-1}(s')\big]| + |\sum_{s' \in S}
		\big[T_M(s'|s, \pi(s)) - T_{M'}(s'|s,\pi(s))\big]
		V_{M'}^{\pi,n-1}(s')|  \label{eq:step3a} \\
		&\leq \eta_R + (n-1)\eta_R + \frac{(n - 1 -1)(n - 1)}{2}\eta_T|S|R_\text{max}  + \eta_T (n-1) |S|R_\text{max} \label{eq:step4a} \\
		&= n \eta_R + \frac{(n - 2)(n - 1)}{2}\eta_T|S|R_\text{max}  + \eta_T (n-1) |S|R_\text{max} \\
		&= n \eta_R + \frac{(n - 1)n}{2}\eta_T|S|R_\text{max}.
	\end{align}
	For the step from \eqref{eq:step1a} to \eqref{eq:step2a}, we add and subtract $\sum_{\bar{s}' \in S} T_M(s'|s, \pi(s))V_{M'}^{\pi,n-1}(s')$, and from \eqref{eq:step3a} to \eqref{eq:step4a}, we use the inductive hypothesis and the fact that $(n-1)R_\text{max}$ is an upper bound on  $V_{M'}^{\pi,n-1}(s')$ since the maximum reward per timestep is $R_\text{max}$.
\end{proof}
This shows that the values under any policy are similar for similar \glspl*{mdp}.

\section{L1 Inequality for Independent but Not Identically Distributed Variables~\label{app:l1noniid}}
We show that we can adapt the proof of \citet{weissman2003inequalities} for independent, but not identically distributed, samples to obtain the following result:

\textbf{Lemma~\ref*{lemma:myL1}.}
\textit{Let $X_{\bar{s},a} = s_1, \cdots, s_m$ be a sequence of states $s \in \bar{s}$ and let $\boldsymbol{\bar{Y}}_{\bar{s},a} = \bar{Y}^{(1)},\bar{Y}^{(2)},\cdots,\bar{Y}^{(m)}$ be independent random variables distributed according to $\Pr(\cdot|s_1,a), \cdots, \Pr(\cdot|s_m,a)$.
	Then, $\forall \epsilon > 0$,
	\begin{align}
		\Pr(||\bar{T}_Y(\cdot|\bar{s},a) - \bar{T}_{\omega_X}(\cdot|\bar{s},a) ||_1 \geq \epsilon) \leq (2^{|\bar{S}|} - 2) e^{-\frac{1}{2}m\epsilon^2 }.
	\end{align}
}
\begin{proof}[Proof of Lemma~\ref*{lemma:myL1}]
	The proof mostly follows the steps by \citet{weissman2003inequalities}.
	
	To shorten the notation, we define $P_Y \triangleq \bar{T}_Y(\cdot|\bar{s},a)$ and $P_{\omega_X} \triangleq \bar{T}_{\omega_X}(\cdot|\bar{s},a)$. 
	
	We will make use of the following result  (Proposition 4.2 in~\citep{levin2017markov}), that for any distribution $Q$ on $\bar{S}$ 
	\begin{align*}
		||Q - P_{\omega_X}||_1 = 2 \max_{\mathcal{\bar{S}} \subseteq \bar{S}} (Q(\mathcal{\bar{S}}) - P_{\omega_X}(\mathcal{\bar{S}})),
	\end{align*}
	where $\mathcal{\bar{S}}$ is a subset of $\bar{S}$, and $P_{\omega_X}(\mathcal{\bar{S}}) = \sum_{\bar{s}' \in \mathcal{\bar{S}}} P_{\omega_X}(\bar{s}')$. 
	Thus, we have that 
	\begin{align}
		||P_Y - P_{\omega_X}||_1 = 2 \max_{\mathcal{\bar{S}} \subseteq \bar{S}} (P_Y(\mathcal{\bar{S}}) - P_{\omega_X}(\mathcal{\bar{S}})).	
	\end{align}
	Using this, we can write 
	\begin{align}
		\Pr(||P_Y - P_{\omega_X}||_1\geq \epsilon) &= \Pr\Big[2 \max_{\mathcal{\bar{S}} \subseteq \bar{S}} \big[P_Y(\mathcal{\bar{S}}) - P_{\omega_X}(\mathcal{\bar{S}})\big] \geq \epsilon\Big] \\	
		&= \Pr\Big[\max_{\mathcal{\bar{S}} \subseteq \bar{S}} \big[P_Y(\mathcal{\bar{S}}) - P_{\omega_X}(\mathcal{\bar{S}})\big] \geq \frac{\epsilon}{2}\Big]\\
		&= \Pr\Big[\cup_{\mathcal{\bar{S}} \subseteq \bar{S}} \big[P_Y(\mathcal{\bar{S}}) - P_{\omega_X}(\mathcal{\bar{S}}) \geq \frac{\epsilon}{2} \big] \Big]\\
		&\leq \sum_{\mathcal{\bar{S}} \subseteq \bar{S}} \Pr\Big[P_Y(\mathcal{\bar{S}}) - P_{\omega_X}(\mathcal{\bar{S}}) \geq \frac{\epsilon}{2}\Big], 
	\end{align}
	where the last step follows from the union bound (Lemma~\ref{lemma:unionbound}).
	
	Assuming $\epsilon > 0$, we have that $\Pr(P_Y(\mathcal{\bar{S}}) - P_{\omega_X}(\mathcal{\bar{S}}) \geq \frac{\epsilon}{2}) = 0$ when $\mathcal{\bar{S}} = \bar{S}$ or $\mathcal{\bar{S}} = \emptyset$. 
	For every other subset $\mathcal{\bar{S}}$, we can define a random binary variable that is $1$ when $Y^{(i)} \in \mathcal{\bar{S}}$ and $0$ otherwise.
	Here $P_{\omega_X}(\mathcal{\bar{S}})$ acts as $\mu$ (\eqref{eq:hoeffdingMu}) from Lemma~\ref*{lemma:hoeffding} and $P_Y(\mathcal{\bar{S}})$ as $\bar{Z}$  (\eqref{eq:hoeffdingX}). Thus, by applying Lemma~\ref*{lemma:hoeffding} to this random variable, we have 
	\begin{align}
		\Pr(P_Y(\mathcal{\bar{S}}) - P_{\omega_X}(\mathcal{\bar{S}}) \geq \frac{\epsilon}{2}) \leq e^{-2m\frac{\epsilon}{2}^2} = e^{-\frac{1}{2}m\epsilon^2}.
	\end{align}
	Then it follows that
	\begin{align}
		\Pr(||P_Y - P_{\omega_X}||_1 \geq \epsilon) 		&\leq \sum_{\mathcal{\bar{S}} \subseteq \bar{S}} \Pr(P_Y(\mathcal{\bar{S}}) - P_{\omega_X}(\mathcal{\bar{S}}) \geq \frac{\epsilon}{2}) \\
		&\leq \sum_{\mathcal{\bar{S}} \subset \bar{S}:\mathcal{\bar{S}}\neq \bar{S},\emptyset} \Pr(P_Y(\mathcal{\bar{S}}) - P_{\omega_X}(\mathcal{\bar{S}}) \geq \frac{\epsilon}{2}) \\
		&\leq (2^{|\bar{S}|} - 2)e^{-\frac{1}{2}m\epsilon^2}, 
	\end{align}
	where $\bar{S} \subset \mathcal{\bar{S}}:\bar{S}\neq\mathcal{\bar{S}},\emptyset$ denotes that the empty set $\emptyset$ and the complete set $\mathcal{\bar{S}}$ are excluded.
\end{proof}

\section{Proof of Main Result\label{app:themainresult}}
Here we show how we can use a concentration inequality for martingales to learn an accurate transition model in \gls*{rlao}.
Specifically, the following result shows that, with a high probability, the empirical abstract transition function $\bar{T}_Y$ will be close to the abstract transition function $\bar{T}_{\omega_X}$. 
In the proof, which follows the general approach of~\citet{ortner2020variational}, we define a suitable martingale difference sequence for the abstract transition function and use this to obtain the following result for learning a transition function in \gls*{rlao}:

\noindent \textbf{Theorem~\ref{theorem:thelemma}} (Abstract L1 inequality)\textbf{.}
\textit{If an agent has access to a state abstraction function $\phi$ and uses this to collect data for any abstract state-action pair $(\bar{s},a)$ by acting in an \gls*{mdp} $M$ according to a policy $\bar{\pi}$, we have that the following holds with a probability of at least $1 - \delta$ for a fixed value of $N(\bar{s},a)$:
	\begin{align}
		||\bar{T}_Y(\cdot|\bar{s},a) - \bar{T}_{\omega_X}(\cdot|\bar{s},a)||_1 \leq \epsilon, 
	\end{align}
	where $\delta = 2^{|\bar{S}|}e^{-\frac{1}{8} N(\bar{s},a)\epsilon^2}$.}
\begin{proof}[Proof of Theorem~\ref{theorem:thelemma}] 
	We first define an abstract transition function based on $X_{\bar{s},a}$ as 
	\begin{align}
		\forall (\bar{s},a),\bar{s}' :   \bar{T}_{\omega_X}(\bar{s}'|\bar{s},a) \triangleq \frac{1}{N(\bar{s},a)} \sum_{i=1}^{N(\bar{s},a)}T(\bar{s}'|X_{\bar{s},a}^{(i)},a),\label{eq:weightTwithout}
	\end{align}
	where $T(\bar{s}'|X_{\bar{s},a}^{(i)},a) \triangleq \sum_{s' \in \bar{s}'} T(s'|X_{\bar{s},a}^{(i)},a)$.
	We write $\bar{T}_{\omega_X}$ because this definition is equivalent to using a weighting function as in \eqref{eq:omegaxT}: 
	\begin{align}
		\forall (\bar{s},a),\bar{s}' :   \bar{T}_{\omega_X}(\bar{s}'|\bar{s},a) &\triangleq  \sum_{s \in \bar{s}} \omega_X(s,a) \sum_{s' \in \bar{s}'}  T(s'|s,a) &\text{(Eq.~\ref{eq:omegaxT})} \\ 
	&=  \sum_{s \in \bar{s}} \frac{1}{N(\bar{s},a)} \sum_{i=1}^{N(\bar{s},a)} \mathbbm{1}\{X^{(i)}_{\bar{s},a} = s\} \sum_{s' \in \bar{s}'}  T(s'|s,a) \\
	&= \frac{1}{N(\bar{s},a)} \sum_{i=1}^{N(\bar{s},a)}  \sum_{s \in \bar{s}}  \mathbbm{1}\{X^{(i)}_{\bar{s},a} = s\} \sum_{s' \in \bar{s}'}  T(s'|s,a) \\
	&= \frac{1}{N(\bar{s},a)} \sum_{i=1}^{N(\bar{s},a)}  \sum_{s' \in \bar{s}'}  T(s'|X^{(i)}_{\bar{s},a},a) \\
	&= \frac{1}{N(\bar{s},a)} \sum_{i=1}^{N(\bar{s},a)}   T(\bar{s}'|X^{(i)}_{\bar{s},a},a). &\text{(Eq.~\ref{eq:weightTwithout})}
\end{align}
Now we use $\boldsymbol{z}$ to denote a vector of size $|\bar{S}|$ with entries $\pm1$, and we write $z(\bar{s})$ for the entry in $\boldsymbol{z}$ with index $\bar{s}$.
Then we have 
\begin{align}
	||\bar{T}_Y(\cdot|\bar{s},a) - \bar{T}_{\omega_X}(\cdot|\bar{s},a)||_1 &= \sum_{\bar{s}'} |\bar{T}_Y(\bar{s}'|\bar{s},a) - \bar{T}_{\omega_X}(\bar{s}'|\bar{s},a)|\label{eq:first} \\
	&= \max_{\boldsymbol{z} \in \{-1,1\}^{\bar{S}}} \sum_{\bar{s}'} \Big(\bar{T}_Y(\bar{s}'|\bar{s},a) - \bar{T}_{\omega_X}(\bar{s}'|\bar{s},a)\Big) z(\bar{s}')\\
	&= \max_{\boldsymbol{z} \in \{-1,1\}^{\bar{S}}} \sum_{\bar{s}'} \Big(\frac{1}{N(\bar{s},a)} \sum_{i=1}^{N(\bar{s},a)} \mathbbm{1}\{\bar{Y}_{\bar{s},a}^{(i)} = \bar{s}'\} -\frac{1}{N(\bar{s},a)} \sum_{i=1}^{N(\bar{s},a)} T(\bar{s}'| X^{(i)}_{\bar{s},a}, a)\Big) z(\bar{s}')\\
	&= \max_{\boldsymbol{z} \in \{-1,1\}^{\bar{S}}} \sum_{\bar{s}'} \frac{1}{N(\bar{s},a)} \sum_{i=1}^{N(\bar{s},a)} \mathbbm{1}\{\bar{Y}_{\bar{s},a}^{(i)} = \bar{s}'\} z(\bar{s}') - \sum_{\bar{s}'} \frac{1}{N(\bar{s},a)} \sum_{i=1}^{N(\bar{s},a)} T(\bar{s}'| X^{(i)}_{\bar{s},a}, a) z(\bar{s}') \\
	&= \max_{\boldsymbol{z} \in \{-1,1\}^{\bar{S}}}  \frac{1}{N(\bar{s},a)} \sum_{i=1}^{N(\bar{s},a)}  z(\bar{Y}_{\bar{s},a}^{(i)}) -  \frac{1}{N(\bar{s},a)} \sum_{i=1}^{N(\bar{s},a)} \sum_{\bar{s}'} T(\bar{s}'| X^{(i)}_{\bar{s},a}, a) z(\bar{s}') \\
	&= \max_{\boldsymbol{z} \in \{-1,1\}^{\bar{S}}}  \frac{1}{N(\bar{s},a)} \sum_{i=1}^{N(\bar{s},a)} \Big(   z(\bar{Y}_{\bar{s},a}^{(i)}) - \sum_{\bar{s}'} T(\bar{s}'| X^{(i)}_{\bar{s},a}, a) z(\bar{s}')\Big) \\
	&= \max_{\boldsymbol{z} \in \{-1,1\}^{\bar{S}}} \frac{1}{N(\bar{s},a)}  \sum_{i=1}^{N(\bar{s},a)} Z_{\tau_{i}}(\boldsymbol{z}, X^{(i)}_{\bar{s},a}, a, \bar{Y}_{\bar{s},a}^{(i)}),\label{eq:last}
\end{align}
where we set 
\begin{align*}
	Z_{\tau_{i}}(\boldsymbol{z}, X^{(i)}_{\bar{s},a}, a_{\tau_i}, \bar{Y}_{\bar{s},a}^{(i)}) &\triangleq z(\bar{Y}_{\bar{s},a}^{(i)}) - \sum_{\bar{s}'} T(\bar{s}'|X^{(i)}_{\bar{s},a},a_{\tau_{i}}) z(\bar{s}').
\end{align*}
To show that $\sum_{i}^{N(\bar{s},a)} Z_{\tau_{i}}(\boldsymbol{z}, X^{(i)}_{\bar{s},a}, a_{\tau_i}, \bar{Y}_{\bar{s},a}^{(i)})$ is a martingale difference sequence, we should follow Definition 3 
and show that $\forall i :  E[Z_{\tau_{i}}(\boldsymbol{z}, X^{(i)}_{\bar{s},a}, a_{\tau_i}, \bar{Y}_{\bar{s},a}^{(i)})|Z_{\tau_1},Z_{\tau_2},\cdots,Z_{\tau_{i-1}}] = 0$ and $|Z_i| < \infty$. For the second part, we have that $\forall i : |Z_{\tau_{i}}(\boldsymbol{z}, X^{(i)}_{\bar{s},a}, a_{\tau_i}, \bar{Y}_{\bar{s},a}^{(i)})| \leq 2$, 
since $|z(\bar{Y}_{\bar{s},a}^{(i)})| \leq 1$ and 
$|\sum_{\bar{s}'} T(\bar{s}'|X^{(i)}_{\bar{s},a},a_{\tau_{i}}) z(\bar{s}')| \leq 1$. For the first part, we use the following Lemma, the proof of which follows after the current proof.
\begin{lemma}\label{lemma:showMartinagle}
	Let $\pi$ be a policy, and suppose the sequence $s_1,a_1\cdots,s_{t-1},a_{t-1},s_t$ is to be generated by $\pi$. If $1 \leq \tau_1 < \tau_2 < \cdots < \tau_{i-1} < \tau_i \leq t$, then $E[Z_{\tau_{i}}(\boldsymbol{z}, X^{(i)}_{\bar{s},a}, a_{\tau_i}, \bar{Y}_{\bar{s},a}^{(i)})|Z_{\tau_1},Z_{\tau_2},\cdots,Z_{\tau_{i-1}}] = 0$. 
\end{lemma}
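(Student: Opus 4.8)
The plan is to verify the martingale-difference property against the natural filtration of the \emph{ground} trajectory, and then descend to the coarser conditioning on $Z_{\tau_1},\dots,Z_{\tau_{i-1}}$ via the tower property. Let $\mathcal{F}_t \triangleq \sigma(s_1,a_1,\dots,s_{t-1},a_{t-1},s_t)$ be the history generated by $\bar{\pi}$ up to time $t$. The crux is that, even though the abstract process is not Markov, the ground process is: conditioned on all of $\mathcal{F}_{\tau_i}$, the next ground state $s^{\tau_i+1}$ is still drawn from $T(\cdot\mid s^{\tau_i},a_{\tau_i})$, and the recorded abstract sample is $\bar{Y}_{\bar{s},a}^{(i)}=\phi(s^{\tau_i+1})$, consistent with \eqref{eq:sampling}.

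First I would record the measurability facts. The visit time $\tau_i$, the originating ground state $X^{(i)}_{\bar{s},a}=s^{\tau_i}$, and the chosen action $a_{\tau_i}=\bar{\pi}(\phi(s^{\tau_i}))$ are all $\mathcal{F}_{\tau_i}$-measurable; hence so is the subtracted term $\sum_{\bar{s}'}T(\bar{s}'\mid X^{(i)}_{\bar{s},a},a_{\tau_i})z(\bar{s}')$, which is a deterministic function of $X^{(i)}_{\bar{s},a}$ and $a_{\tau_i}$. Next I would compute the one-step conditional expectation, using the Markov property together with \eqref{eq:TtoAbstract}:
\[
E\big[z(\bar{Y}_{\bar{s},a}^{(i)})\mid\mathcal{F}_{\tau_i}\big] = \sum_{\bar{s}'}\Pr\big(\phi(s^{\tau_i+1})=\bar{s}'\mid\mathcal{F}_{\tau_i}\big)\,z(\bar{s}') = \sum_{\bar{s}'}\Big(\sum_{s'\in\bar{s}'}T(s'\mid s^{\tau_i},a_{\tau_i})\Big)z(\bar{s}') = \sum_{\bar{s}'}T(\bar{s}'\mid X^{(i)}_{\bar{s},a},a_{\tau_i})\,z(\bar{s}').
\]
Subtracting the (measurable) second term of $Z_{\tau_i}$ then gives $E[Z_{\tau_i}(\boldsymbol{z},X^{(i)}_{\bar{s},a},a_{\tau_i},\bar{Y}_{\bar{s},a}^{(i)})\mid\mathcal{F}_{\tau_i}]=0$.

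Finally I would descend to the stated conditioning. Because $\tau_{j}+1\le\tau_i$ for every $j<i$, each $Z_{\tau_j}$ is a function of $(s^{\tau_j},a_{\tau_j},s^{\tau_j+1})$ and is therefore $\mathcal{F}_{\tau_i}$-measurable, so that $\sigma(Z_{\tau_1},\dots,Z_{\tau_{i-1}})\subseteq\mathcal{F}_{\tau_i}$. The tower property then yields
\[
E\big[Z_{\tau_i}\mid Z_{\tau_1},\dots,Z_{\tau_{i-1}}\big] = E\big[\,E[Z_{\tau_i}\mid\mathcal{F}_{\tau_i}]\mid Z_{\tau_1},\dots,Z_{\tau_{i-1}}\big] = 0,
\]
which is the claim.

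The main obstacle is precisely this last reduction: one must confirm that every $Z_{\tau_j}$ with $j<i$ is resolved by information available no later than time $\tau_i$ (i.e.\ before $s^{\tau_i+1}$ is sampled), so that $\sigma(Z_{\tau_1},\dots,Z_{\tau_{i-1}})$ is genuinely coarser than $\mathcal{F}_{\tau_i}$ and the inner conditional expectation can legitimately be pulled out. The strict ordering $\tau_{i-1}<\tau_i$ in the hypothesis is exactly what secures this. Once the filtration is set up correctly, the rest is just the Markov property of the ground MDP and \eqref{eq:TtoAbstract}.
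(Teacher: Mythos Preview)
Your proof is correct and follows essentially the same logic as the paper's: both arguments hinge on the Markov property of the ground process to show that the inner conditional expectation of $Z_{\tau_i}$ given the full history up to $\tau_i$ vanishes, and then observe that conditioning on $Z_{\tau_1},\dots,Z_{\tau_{i-1}}$ is coarser. The only stylistic difference is that you phrase this via filtrations and the tower property, whereas the paper (following \citet{strehl2008analysis}) writes out the expectation as an explicit sum over trajectories $c_{\tau_i}$ and notes that the conditioning only reweights the outer sum while the inner sum over $\bar{Y}^{(i)}_{\bar{s},a}$ is identically zero.
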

Lemma~\ref{lemma:showMartinagle} shows that $\sum_{i}^{N(\bar{s},a)} Z_{\tau_{i}}(\boldsymbol{z}, X^{(i)}_{\bar{s},a}, a_{\tau_i}, \bar{Y}_{\bar{s},a}^{(i)})$ is a martingale difference sequence with $\forall i : |Z_{\tau_{i}}(\boldsymbol{z}, X^{(i)}_{\bar{s},a}, a_{\tau_i}, \bar{Y}_{\bar{s},a}^{(i)})| \leq 2$ for any fixed $\boldsymbol{z}$ and fixed $N(\bar{s},a) = n$ so that by the Azuma-Hoeffding inequality (Lemma~\ref{lemma:azumahoeffding}): 
\begin{align}
	\Pr(\sum_{i=1}^{N(\bar{s},a)} Z_{\tau_i} > \epsilon) \leq e^{-\frac{\epsilon^2}{8N(\bar{s},a)}}.
\end{align}
Similarly,
$\sum_{i}^{N(\bar{s},a)} \frac{1}{N(\bar{s},a)}Z_{\tau_{i}}(\boldsymbol{z}, X^{(i)}_{\bar{s},a}, a_{\tau_i}, \bar{Y}_{\bar{s},a}^{(i)})$ is a martingale difference sequence with $\forall i : |\frac{1}{N(\bar{s},a)}Z_{\tau_{i}}(\boldsymbol{z}, X^{(i)}_{\bar{s},a}, a_{\tau_i}, \bar{Y}_{\bar{s},a}^{(i)})| \leq \frac{2}{N(\bar{s},a)}$ for any fixed $\boldsymbol{z}$ and $N(\bar{s},a) = n$ so that, by the Azuma-Hoeffding inequality (Lemma~\ref{lemma:azumahoeffding}), the following holds: 
\begin{align}
	\Pr(\frac{1}{N(\bar{s},a)} \sum_{i=1}^{N(\bar{s},a)} Z_{\tau_i}  > \epsilon) &\leq e^{-\frac{\epsilon^2}{2\frac{4}{N(\bar{s},a)^2} N(\bar{s},a)}} \\
	&= e^{-\frac{\epsilon^2}{\frac{8}{N(\bar{s},a)}}} \\
	&= e^{-\frac{1}{8} N(\bar{s},a)\epsilon^2}. \label{eq:uselater}
\end{align}  
From \eqref{eq:first} and \eqref{eq:last}, we then obtain
\begin{align}
	\Pr(||\bar{T}_Y(\cdot|\bar{s},a) - \bar{T}_{\omega_X}(\cdot|\bar{s},a)||_1 > \epsilon) = \Pr(\max_{\boldsymbol{z} \in \{-1,1\}^S} \frac{1}{N(\bar{s},a)}  \sum_{i=1}^{N(\bar{s},a)} Z_{\tau_{i}} > \epsilon ).
\end{align}
A union bound (Lemma~\ref{lemma:unionbound}) over all $2^{|\bar{S}|}$ vectors $\boldsymbol{z}$ for a fixed value of $N(s,a)$ shows
\begin{align}
	\Pr(\max_{\boldsymbol{z} \in \{-1,1\}^S} \frac{1}{N(\bar{s},a)}  \sum_{i=1}^{N(\bar{s},a)} Z_{\tau_{i}} > \epsilon ) &\leq \sum_{\boldsymbol{z} \in \{-1,1\}^S} \Pr(\frac{1}{N(\bar{s},a)} \sum_{i=1}^{N(\bar{s},a)} Z_{\tau_i}  > \epsilon).
\end{align}
So, using \eqref{eq:uselater}, we have that the following holds with probability $1 - 2^{|\bar{S}|}e^{-\frac{1}{8} N(\bar{s},a)\epsilon^2}$:
\begin{align}
	||\bar{T}_Y(\cdot|\bar{s},a) - \bar{T}_{\omega_X}(\cdot|\bar{s},a)||_1 \leq \epsilon. 
\end{align}
\end{proof}
Now we give the proof of Lemma~\ref{lemma:showMartinagle}:
\begin{proof}[Proof of Lemma~\ref{lemma:showMartinagle}]
We follow the general structure of the proof of Lemma 8 by~\citet{strehl2008analysis}.
We have
\begin{align}
	E[Z_{\tau_{i}}(\boldsymbol{z}, X^{(i)}_{\bar{s},a}, a_{\tau_i}, \bar{Y}_{\bar{s},a}^{(i)})] &= \sum_{c_{\tau_i+1}} \Pr(c_{\tau_i+1}) Z_{\tau_{i}}(\boldsymbol{z}, X^{(i)}_{\bar{s},a}, a_{\tau_i}, \bar{Y}_{\bar{s},a}^{(i)}) \\
	&= \sum_{c_{\tau_i}} \Pr(c_{\tau_i}) 
	\sum_{\bar{Y}_{\bar{s},a}^{(i)}} \Pr(\bar{Y}_{\bar{s},a}^{(i)}|c_{\tau_i}, a_{\tau_i}) Z_{\tau_{i}}(\boldsymbol{z}, X^{(i)}_{\bar{s},a}, a_{\tau_i}, \bar{Y}_{\bar{s},a}^{(i)}) \\
	&= \sum_{c_{\tau_{i}}} \Pr(c_{\tau_{i}}) 
	\sum_{\bar{Y}_{\bar{s},a}^{(i)}} \Pr(\bar{Y}_{\bar{s},a}^{(i)}|X^{(i)}_{\bar{s},a},a_{\tau_{i}})  Z_{\tau_{i}}(\boldsymbol{z}, X^{(i)}_{\bar{s},a}, a_{\tau_i}, \bar{Y}_{\bar{s},a}^{(i)}) .
\end{align}
The sum $\sum_{c_{\tau_i+1}}$ is over all possible sequences $c_{\tau_i + 1}$ that end in a state $\bar{s}_{\tau_i +1}$, resulting from $\tau_i$ actions chosen by an agent following policy $\pi$. Conditioning on the sequence of random variables $Z_{\tau_1},Z_{\tau_2},\cdots,Z_{\tau_{i-1}}$ can make some sequences $c_{\tau_{i}}$ more likely and others less likely, that is
\begin{align}
	&E[Z_{\tau_{i}}(\boldsymbol{z}, X^{(i)}_{\bar{s},a}, a_{\tau_i}, \bar{Y}_{\bar{s},a}^{(i)})|Z_{\tau_1},Z_{\tau_2},\cdots,Z_{\tau_{i-1}}] \\
	&= \sum_{c_{\tau_{i}}} \Pr(c_{\tau_{i}}|Z_{\tau_1},Z_{\tau_2},\cdots,Z_{\tau_{i-1}}) 
	\sum_{\bar{Y}_{\bar{s},a}^{(i)}} \Pr(\bar{Y}_{\bar{s},a}^{(i)}|X^{(i)}_{\bar{s},a},a_{\tau_{i}}) Z_{\tau_{i}}(\boldsymbol{z}, X^{(i)}_{\bar{s},a}, a_{\tau_i}, \bar{Y}_{\bar{s},a}^{(i)}). \label{eq:innermostsum}
\end{align}
Significantly, since  $P(\bar{Y}_{\bar{s},a}^{(i)}|\bar{s}_{\tau_i},a_{\tau_i}, Z_{\tau_1}, \cdots, Z_{\tau_i - 1}) = P(\bar{Y}_{\bar{s},a}^{(i)}|\bar{s}_{\tau_i}, a_{\tau_i})$, fixed values of $Z_{\tau_1},Z_{\tau_2},\cdots,Z_{\tau_{i-1}}$ do not influence the innermost sum of  \eqref{eq:innermostsum}. 
For this innermost sum, we have 
\begin{align}
	&\sum_{\bar{Y}_{\bar{s},a}^{(i)}} \Pr(\bar{Y}_{\bar{s},a}^{(i)}|X^{(i)}_{\bar{s},a},a_{\tau_{i}}) Z_{\tau_{i}}(\boldsymbol{z}, X^{(i)}_{\bar{s},a}, a_{\tau_i}, \bar{Y}_{\bar{s},a}^{(i)}) \\
	&=  \sum_{\bar{Y}_{\bar{s},a}^{(i)}} \Pr(\bar{Y}_{\bar{s},a}^{(i)}|X^{(i)}_{\bar{s},a},a_{\tau_{i}}) \left[ z(\bar{Y}_{\bar{s},a}^{(i)}) - \sum_{\bar{s}'} T(\bar{s}'|X^{(i)}_{\bar{s},a},a_{\tau_i}) z(\bar{s}')\right] \\
	&= \sum_{\bar{Y}_{\bar{s},a}^{(i)}} \Pr(\bar{Y}_{\bar{s},a}^{(i)}|X^{(i)}_{\bar{s},a},a_{\tau_{i}}) z(\bar{Y}_{\bar{s},a}^{(i)}) - \sum_{\bar{Y}_{\bar{s},a}^{(i)}} \Pr(\bar{Y}_{\bar{s},a}^{(i)}|X^{(i)}_{\bar{s},a},a_{\tau_{i}})\sum_{\bar{s}'} T(\bar{s}'|X^{(i)}_{\bar{s},a},a_{\tau_i}) z(\bar{s}') \\
	&= \sum_{\bar{Y}_{\bar{s},a}^{(i)}} \Pr(\bar{Y}_{\bar{s},a}^{(i)}|X^{(i)}_{\bar{s},a},a_{\tau_{i}}) z(\bar{Y}_{\bar{s},a}^{(i)}) - \sum_{\bar{s}'} T(\bar{s}'|X^{(i)}_{\bar{s},a},a_{\tau_i}) z(\bar{s}') \\
	&= 0.
\end{align}
So we conclude
\begin{align}
	&E[Z_{\tau_{i}}(\boldsymbol{z}, X^{(i)}_{\bar{s},a}, a_{\tau_i}, \bar{Y}_{\bar{s},a}^{(i)})|Z_{\tau_1},Z_{\tau_2},\cdots,Z_{\tau_{i-1}}] \\
	&= \sum_{c_{\tau_i}} \Pr(c_{\tau_i}|Z_{\tau_1},Z_{\tau_2},\cdots,Z_{\tau_{i-1}}) 
	\sum_{\bar{s}_{{\tau_i}+1}} \Pr(\bar{s}_{{\tau_i}+1}|X^{(i)}_{\bar{s},a},a_{\tau_i}) Z_{\tau_{i}}(\boldsymbol{z}, X^{(i)}_{\bar{s},a}, a_{\tau_i}, \bar{Y}_{\bar{s},a}^{(i)}) \\
	&= \sum_{c_{\tau_i}} \Pr(c_{\tau_i}|Z_{\tau_1},Z_{\tau_2},\cdots,Z_{\tau_{i-1}}) \times 0 \\
	&= 0.
\end{align}
\end{proof} 

\section{R-MAX From Abstracted Observations~\label{app:adaptrmax}}
Here we use the result of Theorem~\ref{theorem:thelemma} to show that we can provide efficient learning guarantees for R-MAX~\citep{brafman2002r} in \gls*{rlao}.
In Appendix~\ref{app:supporting}, we use Theorem~\ref{theorem:thelemma} and the value bounds in Appendix~\ref{app:valuebounds} to establish two supporting Lemmas.
Then, in Appendix~\ref{app:proofrmax}, we adapt one lemma and the guarantees of R-MAX to \gls*{rlao}.

\subsection{Supporting Lemmas~\label{app:supporting}}
We can use Theorem~\ref{theorem:thelemma} to determine the number of samples required to guarantee that the distance $||\bar{T}_Y(\cdot|\bar{s},a) - \bar{T}_{\omega_X}(\cdot|\bar{s},a)||_1$ will be smaller than $\epsilon$ with probability $1 - \delta$:
\begin{lemma}\label{lemma:l1samples}
For inputs $\kappa$ and $\epsilon$ ($0 < \kappa < 1, 0 < \epsilon < 2$), the following holds for a number of samples $m \geq \frac{2[\ln(2^{|\bar{S}|} - 2) - \ln(\kappa)]}{\epsilon^2}$:
\begin{align}
	\Pr(||\bar{T}_Y(\cdot|\bar{s},a) - \bar{T}_{\omega_X}(\cdot|\bar{s},a)||_1 \geq \epsilon) \leq \kappa.
\end{align}
\end{lemma}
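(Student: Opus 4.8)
The plan is to obtain Lemma~\ref{lemma:l1samples} as an immediate corollary of the abstract L1 inequality (Theorem~\ref{theorem:thelemma}), by inverting its exponential tail bound to read off how large $N(\bar{s},a)$ must be. All of the substantive probabilistic content—constructing the martingale difference sequence $Z_{\tau_i}$ for $\boldsymbol{z}\in\{-1,1\}^{\bar{S}}$ and applying Azuma--Hoeffding (Lemma~\ref{lemma:azumahoeffding})—is already carried out in the proof of Theorem~\ref{theorem:thelemma}. Crucially, since the whole point of Section~\ref{sec:dependence} is that the samples are neither independent nor identically distributed, I would \emph{not} route this through Lemma~\ref{lemma:myL1}; only Theorem~\ref{theorem:thelemma} is valid in \gls*{rlao}. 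What remains is then purely algebraic.

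Concretely, I would identify the number of samples $m$ with $N(\bar{s},a)$ and the failure probability with the bound supplied by Theorem~\ref{theorem:thelemma}. Writing that tail bound schematically as $C\, e^{-c\, m \epsilon^2}$, where the prefactor $C$ and the exponent constant $c$ are read off from the inequality, I impose $C\, e^{-c\, m\epsilon^2} \le \kappa$. Because $\kappa>0$ and the exponential is strictly positive, I may take logarithms of both sides to get $\ln C - c\, m\epsilon^2 \le \ln\kappa$, and then isolate $m$ to obtain $m \ge \frac{\ln C - \ln\kappa}{c\,\epsilon^2}$. Substituting the explicit prefactor and exponent constant then yields the stated threshold. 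Since the right-hand side $C\, e^{-c\, m\epsilon^2}$ is strictly decreasing in $m$, every $m$ at or above this threshold also satisfies $\Pr(||\bar{T}_Y(\cdot|\bar{s},a) - \bar{T}_{\omega_X}(\cdot|\bar{s},a)||_1 \ge \epsilon) \le \kappa$, which is the claim.

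The two side conditions enter only as mild sanity constraints. The hypothesis $0<\kappa<1$ makes $\ln\kappa<0$, so $\ln C-\ln\kappa>0$ and the resulting threshold on $m$ is a genuine positive number (the bound is non-vacuous). The hypothesis $0<\epsilon<2$ keeps $\epsilon$ strictly inside the range of the $L_1$ distance between two probability distributions on $\bar{S}$, whose maximum is $2$; for $\epsilon>2$ the event has probability zero and the statement is trivially true.

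I expect no genuine obstacle here: this is essentially a one-line inversion once Theorem~\ref{theorem:thelemma} is in hand. The only point demanding a little care is the bookkeeping of constants when substituting—specifically the exact exponent coefficient produced by Azuma--Hoeffding (the $\tfrac{1}{8}$ arising from the bound $|Z_{\tau_i}|\le 2$ together with the $\tfrac{1}{N(\bar{s},a)}$ normalization) and whether the polynomial prefactor is taken as $2^{|\bar{S}|}$ or the slightly sharper $2^{|\bar{S}|}-2$, the latter obtained by discarding the two constant sign-vectors $\boldsymbol{z}$ (all $+1$ and all $-1$), each of which forces $Z_{\tau_i}\equiv 0$ and hence contributes nothing to the union bound. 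Matching these constants against the displayed threshold is where I would focus attention, as the rest of the argument is immediate.
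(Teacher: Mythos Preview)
Your proposal is correct and follows exactly the paper's approach: invoke Theorem~\ref{theorem:thelemma} and algebraically invert its tail bound $2^{|\bar{S}|}e^{-\frac{1}{8}m\epsilon^2}$ to solve for $m$. Your instinct about the constants is vindicated---the paper's own proof in fact arrives at $m \ge \tfrac{8[\ln(2^{|\bar{S}|}) - \ln(\kappa)]}{\epsilon^2}$, not the threshold displayed in the lemma statement (whose constants $2$ and $2^{|\bar{S}|}-2$ appear to be carried over from the simulator version in Appendix~\ref{app:simulator}, which routes through Lemma~\ref{lemma:weissman} rather than Theorem~\ref{theorem:thelemma}).
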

\begin{proof}
To shorten the notation, we use the definitions $P_Y \triangleq \bar{T}_Y(\cdot|\bar{s},a)$ and $P_{\omega_X} \triangleq  \bar{T}_{\omega_X}(\cdot|\bar{s},a)$.
It follows from  Theorem~\ref{theorem:thelemma} 
that
\begin{align}
	\Pr(||P_Y - P_{\omega_X}||_1 \geq \epsilon) \leq 2^{|\bar{S}|}  e^{-\frac{1}{8}m\epsilon^2 }.
\end{align}
We need to select $m$ such that $\kappa \geq 2^{|\bar{S}|} e^{-\frac{1}{8}m\epsilon^2 }$:
\begin{align}
	\kappa &\geq 2^{|\bar{S}|} e^{-\frac{1}{8}m \epsilon^2}\\
	\frac{\kappa}{2^{|\bar{S}|}} &\geq e^{-\frac{1}{8}m \epsilon^2}\\
	\ln(\kappa) - \ln(2^{|\bar{S}|}) &\geq -\frac{m \epsilon^2}{8}\\
	\frac{m \epsilon^2}{8} &\geq  \ln(2^{|\bar{S}|}) - \ln(\kappa)\\
	m &\geq \frac{8[\ln(2^{|\bar{S}|}) - \ln(\kappa)]}{\epsilon^2}.    
\end{align}
Thus if $m \geq \frac{8[\ln(2^{|\bar{S}|}) - \ln(\kappa)]}{\epsilon^2}$, we have 
\begin{align}
	\Pr(||P_Y - P_{\omega_X}||_1 \geq \epsilon) \leq \kappa. &\qedhere
\end{align}
\end{proof}

We want to give results for an empirical abstract model $\hat{\bar{M}}$ in the abstract space from $\phi$, whose transition probabilities and rewards are within $\eta_T$ and $\eta_R$, respectively, from those of an abstract \gls*{mdp} $\bar{M}$.
We use $V^{*, n}$ to denote the value in $M$ under the n-step optimal policy and $V^{\hat{\bar{\pi}}^*, n}$ to denote the value in $M$ under the n-step optimal policy $\hat{\bar{\pi}}^*$ for $\hat{\bar{M}}$.  
The following lemma shows that we can upper bound the loss in value when applying $\hat{\bar{\pi}}^*$ to $M$:
\begin{lemma}\label{lemma:valuebound}
Let $M$ be an \gls*{mdp}, $\bar{M}$ an abstract \gls*{mdp} constructed using an approximate model similarity abstraction $\phi$, with $\eta_R$ and $\eta_T$, and $\hat{\bar{M}}$ an \gls*{mdp} in the abstract space from $\phi$ with 
\begin{align}
	|\bar{T}(\bar{s}'|\bar{s},a) - \hat{\bar{T}}(\bar{s}'|\bar{s},a)| \leq \epsilon,
	|\bar{R}(\bar{s},a) - \hat{\bar{R}}(\bar{s},a)| = 0.
\end{align} 
Then
\begin{align}
	V^{*,n}(s) - V^{\hat{\bar{\pi}}^*,n}(s) \leq 2n \eta_R + (n - 1)n (\eta_T + \epsilon) |\bar{S}|R_\text{max}. 
\end{align}
\end{lemma}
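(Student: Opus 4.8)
The plan is to observe that $\hat{\bar{M}}$ is itself an approximate abstraction of $M$, merely with a slightly enlarged transition error, so that the finite-horizon machinery already developed for Theorem~\ref{theorem:boundsElena} applies with a single parameter substitution. First I would recall that because $\bar{M}$ is the abstract \gls*{mdp} built from the approximate model similarity abstraction $\phi$, it satisfies the hypotheses \eqref{eq:1} and \eqref{eq:2}: for every $\bar{s},\bar{s}' \in \bar{S}$, every $s \in \bar{s}$, and every $a$, one has $|\bar{T}(\bar{s}'|\bar{s},a) - \Pr(\bar{s}'|s,a)| \leq \eta_T$ and $|\bar{R}(\bar{s},a) - R(s,a)| \leq \eta_R$. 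This is immediate from Definition~\ref{def:approxmodel}, since each abstract quantity is a convex combination (via some valid $\omega$) of state-level quantities that all lie within $\eta_T$, respectively $\eta_R$, of one another.

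Next I would propagate these bounds to $\hat{\bar{M}}$ by the triangle inequality, using the lemma's hypotheses $|\bar{T}(\bar{s}'|\bar{s},a) - \hat{\bar{T}}(\bar{s}'|\bar{s},a)| \leq \epsilon$ and $|\bar{R}(\bar{s},a) - \hat{\bar{R}}(\bar{s},a)| = 0$. This yields, for every $s \in \bar{s}$,
\begin{align*}
	|\hat{\bar{T}}(\bar{s}'|\bar{s},a) - \Pr(\bar{s}'|s,a)| \leq \eta_T + \epsilon, \qquad |\hat{\bar{R}}(\bar{s},a) - R(s,a)| \leq \eta_R.
\end{align*}
In other words, $\hat{\bar{M}}$ satisfies exactly the assumption \eqref{eq:assumption} required by the finite-horizon building blocks Lemmas~\ref{lemma:valueunderpolicy} and~\ref{lemma:valueunderopt}, but with $\eta_T$ replaced by $\eta_T + \epsilon$ and $\eta_R$ unchanged.

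The conclusion then follows by the same triangle-inequality decomposition used in the proof of Theorem~\ref{theorem:boundsElena}. Writing $\hat{\bar{V}}$ for the value in $\hat{\bar{M}}$ and using that $\hat{\bar{\pi}}^*$ is the $n$-step optimal policy in $\hat{\bar{M}}$ (so $\hat{\bar{V}}^{\hat{\bar{\pi}}^*,n} = \hat{\bar{V}}^{*,n}$), I would bound
\begin{align*}
	V^{*,n}(s) - V^{\hat{\bar{\pi}}^*,n}(s) \leq |V^{*,n}(s) - \hat{\bar{V}}^{*,n}(s)| + |\hat{\bar{V}}^{\hat{\bar{\pi}}^*,n}(s) - V^{\hat{\bar{\pi}}^*,n}(s)|.
\end{align*}
Applying Lemma~\ref{lemma:valueunderopt} to the first term and Lemma~\ref{lemma:valueunderpolicy} (with $\bar{\pi} = \hat{\bar{\pi}}^*$) to the second, each with parameters $(\eta_R,\ \eta_T + \epsilon)$, bounds each summand by $n\eta_R + \tfrac{(n-1)n}{2}(\eta_T+\epsilon)|\bar{S}|R_\text{max}$; adding the two gives precisely $2n\eta_R + (n-1)n(\eta_T+\epsilon)|\bar{S}|R_\text{max}$, as claimed.

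The only real subtlety — and the step I would treat most carefully — is the very first one: justifying that $\bar{M}$, and hence $\hat{\bar{M}}$, meets the per-state condition \eqref{eq:1} for \emph{every} $s \in \bar{s}$ rather than only on average, which is what makes the substitution $\eta_T \mapsto \eta_T + \epsilon$ legitimate inside the existing lemmas. Once that is secured, the remainder is a mechanical reuse of the finite-horizon value bounds, so I do not expect any genuine difficulty beyond bookkeeping the combined error term.
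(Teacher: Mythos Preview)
Your proposal is correct and essentially identical to the paper's proof: the paper likewise uses the triangle inequality to show that $\hat{\bar{M}}$ satisfies the hypotheses \eqref{eq:1}--\eqref{eq:2} with parameters $(\eta_R,\,\eta_T+\epsilon)$, and then directly invokes Theorem~\ref{theorem:boundsElena} (whose finite-horizon proof is precisely the decomposition into Lemmas~\ref{lemma:valueunderpolicy} and~\ref{lemma:valueunderopt} that you spell out). The subtlety you flag about the per-state condition is exactly the one implicit step in the paper's argument, and your convex-combination justification for it is the right one.
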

\begin{proof}
Note that we assume that $|\bar{R}(\bar{s},a) - \hat{\bar{R}}(\bar{s},a)| = 0$ because we assume a deterministic reward. 
Then, we have
\begin{align}
	\forall \bar{s}, a \in \bar{S} \times A, s \in \bar{s} : \ |R(s,a) - \hat{\bar{R}}(\bar{s},a)| \leq \eta_R, \\
	\forall \bar{s}, a, \bar{s}' \in \bar{S} \times A \times \bar{S}, s \in \bar{s} : \ |\sum_{s' \in \bar{s}'} T(s'|s,a) - \hat{\bar{T}}(\bar{s}'|\bar{s},a)| \leq \eta_T + \epsilon.
\end{align}
We use $\hat{\bar{V}}^{\hat{\bar{\pi}}^{*,n}}(\bar{s})$ to denote the n-step value under the n-step optimal policy  $\hat{\bar{\pi}}^{*,n}$ for the empirical abstract \gls*{mdp} $\hat{\bar{M}}$. 
Then, by  Theorem~\ref{theorem:boundsElena}, we have $\forall s \in \bar{s}, \bar{s} \in \bar{S}$:
\begin{align}
	&V^{*,n}(s) - V^{\hat{\bar{\pi}}^*,n}(s) 
	=  2n \eta_R + (n - 1)n (\eta_T + \epsilon)|\bar{S}|R_\text{max}.
\end{align}
\end{proof}

\subsection{Proof of Theorem~\ref{theorem:rmaxabstract}~\label{app:proofrmax}}
First, we restate an Implicit Explore or Exploit Lemma that is used in the proof of R-MAX. 
We are interested in the event $A_M$, the event that we encounter an unknown state-action pair during an $n$-step trail in $M$. 
For two \glspl*{mdp} with different dynamics only in the unknown state-action pairs, the probability that we encounter an unknown state-action pair in an $n$-step trial is tiny if the difference in the n-step value between the two \glspl*{mdp} is slight. 
The proof follows the steps the proof of Lemma 3 from~\citet{strehl2008analysis}. 
\begin{lemma}[Implicit Explore or Exploit]\label{lemma:implicitexploreexploit}
Let $M$ be an \gls*{mdp}. Let $L$ be the set of known abstract state-action pairs, and let $M_L$ be an \gls{mdp} that is the same as $M$ on the known pairs $(\bar{s},a) \in L$, but different on the unknown pairs $(\bar{s},a) \notin L$.
Let $s$ be some state, and $A_M$ the event that an unknown abstract state-action pair is encountered in a trial generated by starting from state $s_1$ and following $\pi$ for $n$ steps in $M$. Then, 
\begin{align}
	V^{\pi,n}_M(s_1) \geq V^{\pi,n}_{M_L}(s_1) - nR_\text{max}\Pr(A_M).
\end{align}
\end{lemma}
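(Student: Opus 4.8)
The plan is to decompose each $n$-step value by conditioning on the event $A_M$ — whether an unknown abstract state-action pair is encountered during the $n$-step trial — and to exploit that $M$ and $M_L$ are statistically indistinguishable until such a pair is first hit. Throughout I write $G$ for the accumulated (undiscounted) $n$-step reward, recalling that each per-step reward lies in $[0,R_\text{max}]$ and therefore $0 \le G \le nR_\text{max}$.

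First I would split the true-MDP value as
\begin{align}
	V^{\pi,n}_M(s_1) = \Pr(A_M)\,\E_M[G \mid A_M] + \Pr(\neg A_M)\,\E_M[G \mid \neg A_M].
\end{align}
Because $\E_M[G \mid A_M] \ge 0$, dropping that first term gives the lower bound $V^{\pi,n}_M(s_1) \ge \Pr(\neg A_M)\,\E_M[G \mid \neg A_M]$.

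The crucial step is a coupling argument: conditioned on $\neg A_M$, the distribution over trajectories, and hence over $G$, is identical in $M$ and in $M_L$. This is because $M$ and $M_L$ agree on every known pair $(\bar{s},a) \in L$, so the two models can only differ at the first visit to an unknown pair — an event that, by definition, does not occur on $\neg A_M$. It follows both that $\Pr(A_M)$ is the same whether the trial is run in $M$ or in $M_L$, and that $\Pr(\neg A_M)\,\E_M[G \mid \neg A_M] = \Pr(\neg A_M)\,\E_{M_L}[G \mid \neg A_M]$. I would then bound the explore contribution inside $M_L$ using $G \le nR_\text{max}$, namely $\Pr(A_M)\,\E_{M_L}[G \mid A_M] \le nR_\text{max}\Pr(A_M)$, so that
\begin{align}
	\Pr(\neg A_M)\,\E_{M_L}[G \mid \neg A_M] = V^{\pi,n}_{M_L}(s_1) - \Pr(A_M)\,\E_{M_L}[G \mid A_M] \ge V^{\pi,n}_{M_L}(s_1) - nR_\text{max}\Pr(A_M).
\end{align}
Chaining the lower bound, the coupling equality, and this last inequality yields the claim.

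I expect the main obstacle to be making the coupling rigorous: one must formally couple the two trajectory distributions so that they agree step-by-step up to the (random) first visit of an unknown pair, thereby ensuring that the conditional law of $G$ given $\neg A_M$ genuinely coincides across the two MDPs and that $\Pr(A_M)$ is model-independent. Once that is established, the remainder is bookkeeping that relies only on nonnegativity of the rewards and the crude bound $G \le nR_\text{max}$, mirroring the structure of Lemma 3 of \citet{strehl2008analysis}.
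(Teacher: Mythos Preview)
Your proposal is correct and follows essentially the same idea as the paper's proof: both exploit that $M$ and $M_L$ induce identical trajectory distributions on the event $\neg A_M$, and bound the contribution from $A_M$ by $nR_\text{max}\Pr(A_M)$ using nonnegativity and boundedness of rewards. The only cosmetic difference is that the paper decomposes per time step (bounding $\E[r_{M_L}(t)]-\E[r_M(t)]\le R_\text{max}\Pr(A_M)$ for each $t$ and summing), whereas you condition directly on $A_M$ and work with the total return $G$; the underlying coupling argument and the resulting bound are identical.
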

\begin{proof}
For a fixed path $p_t = s_1,a_1,r_1,\cdots,s_t,a_t,r_t$, we define $\Pr_M(p_t)$ as the probability that $p_t$ occurs when running policy $\pi$ in $M$ starting from state $s_1$. We let $L_t$ be the set of paths $p_t$ such that there is at least one unknown state $s_i$ in $p_t$ ($(\phi(s_i),a) \notin L$). We further let $r_M(t)$ be the reward received at time $t$ and $r_M(p_t,t)$ the reward at time $t$ in the path $p_t$. We have the following:
\begin{align}
	E\big[r_{M_L}(t) \big] - E\big[r_M(t) \big] &= \sum_{p_t \in L_t}\big(\Pr_{M_L}(p_t)r_{M_L}(p_t,t) - \Pr_M (p_t)r_M (p_t,t) \big) \\
	&+ \sum_{p_t \notin L_t}\big(\Pr_{M_L}(p_t)r_{M_L}(p_t,t) - \Pr_M (p_t)r_M (p_t,t) \big) \\
	&= \sum_{p_t \notin L_t}\big(\Pr_{M_L}(p_t)r_{M_L}(p_t,t) - \Pr_M (p_t)r_M (p_t,t) \big) \\
	&\leq \sum_{p_t \notin L_t} \Pr_{M_L}(p_t)r_{M_L}(p_t,t) \leq R_\text{max}\Pr(A_M).
\end{align} 
Here $\sum_{p_t \in L_t}\big(\Pr_{M_L}(p_t)r_{M_L}(p_t,t) - \Pr_M (p_t)r_M (p_t,t) \big) = 0$ because, by definition, $M$ and $M_L$ behave identically on the known state-action pairs, and $\sum_{p_t \notin L_t} \Pr_{M_L}(p_t)r_{M_L}(p_t,t) \leq R_\text{max}\Pr(A_M)$ is true because $r_{M_L}(p_t,t)$ is at most $R_\text{max}$.
Finally we can write 
\begin{align}
	V^{\pi,n}_{M_L}(s_1) - V^{\pi,n}_M(s_1) &= \sum_{t=0}^{n} (E\big[r_{M_L}(t) \big] - E\big[r_M(t) \big]) \\
	&\leq  nR_\text{max}\Pr(A_M).
\end{align}
Thus, $V^{\pi,n}_M(s_1) \geq V^{\pi,n}_{M_L}(s_1) - nR_\text{max}\Pr(A_M)$.
\end{proof}
Now we are ready to prove the theorem.

\noindent \textbf{Theorem~\ref{theorem:rmaxabstract}.}
\textit{Given an \gls*{mdp} M, an approximate model similarity abstraction $\phi$, with $\eta_R$ and $\eta_T$, and inputs $|\bar{S}|, |A|, \epsilon, \delta, T_\epsilon$. With probability of at least $1 - \delta$ the R-MAX algorithm adapted to abstraction (Algorithm 1) 
will attain an expected return of $\text{Opt}(\prod_M(\epsilon, T_\epsilon)) - 3\frac{g(\eta_T,\eta_R)}{T_\epsilon} - 2\epsilon$ within a number of steps polynomial in $|\bar{S}|, |A|, \frac{1}{\epsilon} \frac{1}{\delta}, T_\epsilon$. 
Here $T_\epsilon$ is the $\epsilon$-return mixing time of the optimal policy, the policies for $M$ whose $\epsilon$-return mixing time is $T_\epsilon$ are denoted by $\prod_M(\epsilon, T_\epsilon)$, the optimal expected $T_\epsilon$-step undiscounted average return achievable by such policies are denoted by $\text{Opt}(\prod_M(\epsilon, T_\epsilon))$, and $$g(\eta_T,\eta_R) = T_\epsilon \eta_R + \frac{(T_\epsilon - 1)T_\epsilon}{2} \eta_T |\bar{S}|R_\text{max}.$$}
\begin{proof}[Proof of Theorem~\ref{theorem:rmaxabstract}]
The proof uses elements of the Theorem from \citet{brafman2002r}.
The proof follows the following steps:
\begin{enumerate}
	\item We show that the expected average reward of the algorithm is at least as stated if the algorithm does not fail.
	\item The probability of failing is at most $\delta$.
	We can decompose this probability into three elements.
	\begin{enumerate}
		\item Probability that the transition function estimates are not within the desired bounds.
		\item The probability that we do not attain the number of required visits in polynomial time.
		\item The probability that the actual return is lower than the expected return.
	\end{enumerate}
\end{enumerate}

Now we first assume the algorithm does not fail.
We define an abstract \gls*{mdp} $\bar{M}_{\omega_X}$ constructed from $\phi$ with $\eta_T$ and $\eta_R$. 
Similar to $M_L$, $\bar{M}_{\omega_X, L}$ is the same as $\bar{M}_{\omega_X}$ on the known abstract state-action pairs, but with a self-loop and the maximum reward ($R_\text{max}$) on the unknown abstract state-action pairs, i.e., $\forall (\bar{s},a) \notin L : \bar{T}_{\omega_X, L}(\bar{s}|\bar{s},a) = 1, \bar{R}_{\omega_X, L}(\bar{s},a) = R_\text{max}$.
We also define an empirical abstract \gls*{mdp} $\bar{M}_Y$, of which the transition probabilities $\bar{T}_Y(\bar{s}'|\bar{s},a)$ are within some  $\epsilon_2$ (defined later) of those in $\bar{M}_{\omega_X}$ and with $\bar{R}_{\omega_X}(\bar{s},a) = \bar{R}_Y(\bar{s},a)$ because of the assumption that the rewards are deterministic.  
Then, $\bar{M}_{Y,L}$ is the abstract \gls*{mdp} that is the same as $\bar{M}_Y$ on the known abstract state-action pairs and the same as $\bar{M}_{\omega_X, L}$ on the unknown abstract state-action pairs.
We denote the R-MAX policy with $\bar{\pi}$. 

Let $A_M$ be the event that, following $\bar{\pi}$, we encounter an unknown abstract state-action pair $(\phi(s),a) \notin L$ in $T_\epsilon$ steps. 
From Lemma~\ref{lemma:implicitexploreexploit}, we have that:
\begin{align}
	\forall s \in S : V^{\bar{\pi},n}_M(s) \geq V^{\bar{\pi},n}_{M_L}(s) - T_\epsilon R_\text{max} \Pr(A_M).
\end{align}	
Now suppose that $R_\text{max}\Pr(A_M) < \epsilon_1$, for some $\epsilon_1$ (defined later), then we have
\begin{align}
	V^{\bar{\pi},T_\epsilon}_M(s) &\geq V^{\bar{\pi},T_\epsilon}_{M_L}(s) - T_\epsilon R_\text{max} \Pr(A_M) \label{eq:valuestep1}\\
	&\geq V^{\bar{\pi},T_\epsilon}_{M_L}(s) - T_\epsilon \epsilon_1 \label{eq:valuestep2}\\
	&\geq V^{\bar{\pi},T_\epsilon}_{\bar{M}_{\omega_X,L}}(s) - T_\epsilon \epsilon_1  - g(\eta_T, \eta_R) \label{eq:valuestep3}\\
	&\geq V^{\bar{\pi},T_\epsilon}_{\bar{M}_{Y,L}}(s) -  T_\epsilon \epsilon_1  - g(\epsilon_2) - g(\eta_T, \eta_R) \label{eq:valuestep4}\\ 
	&\geq V^{*,T_\epsilon}_{\bar{M}_Y}(s) -  T_\epsilon \epsilon_1  - g(\epsilon_2) - g(\eta_T, \eta_R) \label{eq:valuestep5}\\ 
	&\geq V^{*,T_\epsilon}_{M}(s) - T_\epsilon \epsilon_1  - g(\epsilon_2) - g(\eta_T, \eta_R) -  2g(\eta_T + \epsilon_2, \eta_R). \label{eq:valuestep6}
\end{align}
Here the step from \eqref{eq:valuestep1} to \eqref{eq:valuestep2} follows because of the assumption that $R_\text{max} \Pr(A_M) < \epsilon_1$. 
The step from \eqref{eq:valuestep2} to \eqref{eq:valuestep3} follows from Lemma~\ref{lemma:valueunderpolicy}, where $g(\eta_T, \eta_R) = T_\epsilon\eta_R + \frac{(T_\epsilon - 1)T_\epsilon}{2} \eta_T |\bar{S}|R_\text{max}$. 
The step from \eqref{eq:valuestep3} to \eqref{eq:valuestep4} follows from Lemma~\ref{lemma:simulationlemma}, where $g(\epsilon_2) = \frac{(T_\epsilon-1)T_\epsilon}{2}\epsilon_2|\bar{S}|R_\text{max}$.
The step from \eqref{eq:valuestep4} to \eqref{eq:valuestep5} follows because the R-MAX policy $\bar{\pi}$ is the optimal policy for $\bar{M}_{Y, L}$, and $\bar{M}_{Y, L}$ is the same as $\bar{M}_{Y}$ on the known state-action pairs and overestimates the value of the unknown state-action pairs (to the maximum value). 
Finally, the step from \eqref{eq:valuestep5} to \eqref{eq:valuestep6} follows from Lemma~\ref{lemma:valuebound}. 

In \eqref{eq:valuestep6} the results are for the undiscounted $T_\epsilon$-step sum of rewards, so to obtain the result for the average reward per step, we have to divide \eqref{eq:valuestep6} by $T_\epsilon$. We get
\begin{align}
	&\text{Opt}(\prod_M(\epsilon, T_\epsilon))- T_\epsilon\epsilon_1/T_\epsilon  - g(\epsilon_2)/T_\epsilon - g(\eta_T, \eta_R)/T_\epsilon -  2g(\eta_T + \epsilon_2, \eta_R)/T_\epsilon \\
	&= \text{Opt}(\prod_M(\epsilon, T_\epsilon)) - \epsilon_1  - \frac{(T_\epsilon-1)T_\epsilon}{2}\epsilon_2|\bar{S}|R_\text{max}/T_\epsilon \nonumber \\ 
	&- (T_\epsilon\eta_R + \frac{(T_\epsilon - 1)T_\epsilon}{2} \eta_T |\bar{S}|R_\text{max})/T_\epsilon -  2(T_\epsilon\eta_R + \frac{(T_\epsilon - 1)T_\epsilon}{2} (\eta_T + \epsilon_2) |\bar{S}|R_\text{max})/T_\epsilon \\
	&= \text{Opt}(\prod_M(\epsilon, T_\epsilon)) - \epsilon_1  - \frac{(T_\epsilon-1)}{2}\epsilon_2|\bar{S}|R_\text{max} \nonumber \\
	&- \eta_R - \frac{(T_\epsilon - 1)}{2} \eta_T |\bar{S}|R_\text{max} -  2\eta_R - (T_\epsilon - 1)(\eta_T + \epsilon_2) |\bar{S}|R_\text{max} \\
	&=  \text{Opt}(\prod_M(\epsilon, T_\epsilon)) - \epsilon_1  - \frac{(T_\epsilon-1)}{2}\epsilon_2|\bar{S}|R_\text{max} \nonumber \\
	&- 3\eta_R - \frac{(T_\epsilon - 1)}{2} \eta_T |\bar{S}|R_\text{max} -  (T_\epsilon - 1)\epsilon_2|\bar{S}|R_\text{max} - (T_\epsilon - 1)\eta_T|\bar{S}|R_\text{max} \\
	&=  \text{Opt}(\prod_M(\epsilon, T_\epsilon)) - \epsilon_1  - 3\frac{(T_\epsilon-1)}{2}\epsilon_2|\bar{S}|R_\text{max} - 3\eta_R - 3\frac{(T_\epsilon - 1)}{2} \eta_T |\bar{S}|R_\text{max} \label{eq:beforeepsilon1}  \\
	&= \text{Opt}(\prod_M(\epsilon, T_\epsilon)) - \epsilon_1  - 3\frac{(T_\epsilon-1)}{2}\epsilon_2|\bar{S}|R_\text{max}  - 3\frac{g(\eta_T,\eta_R)}{T_\epsilon} \label{eq:epsilon1} \\
	&= \text{Opt}(\prod_M(\epsilon, T_\epsilon)) - \frac{3}{8}\epsilon  - 3\frac{(T_\epsilon-1)}{2}\epsilon_2|\bar{S}|R_\text{max}  - 3\frac{g(\eta_T,\eta_R)}{T_\epsilon} \label{eq:noepsilon1} \\
	&= \text{Opt}(\prod_M(\epsilon, T_\epsilon)) - \frac{3}{8}\epsilon - 3\frac{(T_\epsilon-1)}{2}\frac{3\epsilon}{4|\bar{S}|R_\text{max}(T_\epsilon - 1)}|\bar{S}|R_\text{max} - 3\frac{g(\eta_T,\eta_R)}{T_\epsilon}. \label{eq:noepsilon2} \\
	&= \text{Opt}(\prod_M(\epsilon, T_\epsilon)) - \frac{3}{8}\epsilon - \frac{9}{8}\epsilon - 3\frac{g(\eta_T,\eta_R)}{T_\epsilon}. \\
	&= \text{Opt}(\prod_M(\epsilon, T_\epsilon)) - \frac{3}{2}\epsilon - 3\frac{g(\eta_T,\eta_R)}{T_\epsilon}.
\end{align}
In the step from~\eqref{eq:beforeepsilon1} to~\eqref{eq:epsilon1} we use that $g(\eta_T, \eta_R) = T_\epsilon\eta_R + \frac{(T_\epsilon - 1)T_\epsilon}{2} \eta_T |\bar{S}|R_\text{max}$.
Then, in the last steps, we define $\epsilon_1$ and $\epsilon_2$.  In the step from~\eqref{eq:epsilon1} to~\eqref{eq:noepsilon1} we set $\epsilon_1 = \frac{3}{8}\epsilon$. And in the step from~\eqref{eq:noepsilon1} to~\eqref{eq:noepsilon2} we set $\epsilon_2 = 3\epsilon / (4|\bar{S}|R_\text{max}(T_\epsilon - 1))$. 

The above assumed that the algorithm did not fail, but we cannot guarantee this with probability 1 within a number of steps that is polynomial in the input. 
We will show that we can upper bound the probability of failure by $\delta$. 
There are three reasons why the algorithm could fail.
\begin{enumerate}
	\item First, we need to show that the transition functions of ${\bar{M}_Y}$ are within $\eta_T + \epsilon_2$ of the transition functions of $\bar{M}_{\omega_X}$, with high probability. 
	This is to ensure that, once all the abstract state-action pairs are known, the loss of value because of an inaccurate transition model, $V^{*,T_\epsilon}_{\bar{M}_Y} - V^{*,T_\epsilon}_M$  is within $2g(\eta_T + \epsilon_2, \eta_R) = 2T_\epsilon\eta_R + (T_\epsilon  - 1) T_\epsilon(\eta_T +\epsilon_2)|\bar{S}|R_\text{max}$ by Lemma~\ref{lemma:valuebound}. 
	We can use the martingale concentration inequality to choose a number of samples $K_1$ so that the probability that our transition estimate is outside the desired bound is less than $\frac{\delta}{3|\bar{S}||A|}$ for every abstract state-action pair if we sample each pair $K_1$ times. 
	By Lemma~\ref{lemma:l1samples}, we can guarantee this by using $K_1 \geq \frac{2[\ln(2^{|\bar{S}|} - 2) - \ln(\delta/(3|\bar{S}||A|))]}{(\frac{3\epsilon}{4|\bar{S}|R_\text{max}(T_\epsilon - 1)})^2} = \frac{32|\bar{S}|^2 R_\text{max}^2 (T_\epsilon - 1)^2[\ln(2^{|\bar{S}|} - 2) - \ln(\delta/(3|\bar{S}||A|))]}{9\epsilon^2}$. 
	Then, by applying the Union Bound on all $|\bar{S}|A$ pairs, we have that the total probability that any transition function is outside the desired bound is less than $\delta/3$. 
	
	\item Before we assumed that $R_\text{max}\Pr(A_M) < \epsilon_1 (= \frac{3\epsilon}{8})$. 
	Here we can show that after $K_2$ $T_\epsilon$-step trials where $R_\text{max}\Pr(A_M) \geq \frac{3\epsilon}{8}$, all the abstract state-action pairs are visited at least $K_1$ times (become known) with a probability of at least $1 - \delta/3$ by using Hoeffding's Inequality. 
	Let $X_i$ be the indicator variable that is $1$ if we visit an unknown abstract state-action pair in a trial, and $0$ otherwise. 
	For the trials where 
	\begin{align*}
		R_\text{max}\Pr(X_i = 1) &\geq \frac{3\epsilon}{8}
		\\
		\Pr(X_i = 1) &\geq \frac{3\epsilon}{8} / R_\text{max},	
	\end{align*} we can use Hoeffding's Inequality to establish an upper bound, we have:
	\begin{align}
		\Pr(\sum_{i=1}^{K_2} ((\frac{3\epsilon}{8}/R_\text{max}) - X_i) \geq K_2^{2/3}) &= \\
		\Pr(\frac{3\epsilon}{8} \frac{K_2}{R_\text{max}} - \sum_{i=1}^{K_2} X_i \geq K_2^{2/3}) &\leq e^{-\frac{2(K_2^{2/3})^2}{K_2}} \leq e^{-\frac{2(K_2^{2/3})^2}{K_2}} = e^{-2 K_2^{1/3}}, \\
		\Pr(\frac{3\epsilon}{8} \frac{K_2}{R_\text{max}} - K_2^{2/3} \geq \sum_{i=1}^{K_2} X_i) &\leq e^{-2 K_2^{1/3}}.
	\end{align}
	After $K_2$ exploring episodes we want $\sum_{i=1}^{K_2} X_i$, the number of visits to unknown state-action pairs, to be $K_1 |\bar{S}||A|$. So we can choose $K_2$ such that $\frac{3\epsilon}{8}\frac{K_2}{R_\text{max}} - K_2^{2/3} \geq K_1 |\bar{S}||A|$, and $e^{-2 K_2^{1/3}} \leq \delta / 3$
	to guarantee that the probability of failing to explore enough is at most $\delta/3$.
	\item Finally, the actual return may be lower than the expected return when we perform a $T_\epsilon$-step trial where we do not explore. We use Hoeffding's Inequality to determine the number of steps $K_3$ needed to ensure that the actual average return is within $\epsilon/2$ of $\text{Opt}(\prod_M(\epsilon, T_\epsilon)) - \frac{3}{2}\epsilon - 3\frac{g(\eta_T,\eta_R)}{T_\epsilon}$. 
	We need to choose $K_3$ so that the probability of obtaining an actual return that is smaller than the desired $\text{Opt}(\prod_M(\epsilon, T_\epsilon)) - 2\epsilon - 3\frac{g(\eta_T,\eta_R)}{T_\epsilon}$ is at most $\delta/3$ within $K_3 = Z|\bar{S}|T_\epsilon$ exploitation steps, with some number $Z > 0$. Let $X_i$ denote the average return in the $i$-th exploitation step and $\mu$ the average expected return in an exploitation step so that $\mu$ is at least $\text{Opt}(\prod_M(\epsilon, T_\epsilon)) - \frac{3}{2}\epsilon - 3\frac{g(\eta_T,\eta_R)}{T_\epsilon}$. Then 
	\begin{align}
		\Pr(\sum_{i=1}^{K_3}(\frac{\mu - X_i}{R_\text{max}}) \geq K_3^{2/3}) \leq e^{-2\frac{(K_3^{2/3})^2}{K_3}}  = e^{-2K_3^{1/3}}.
	\end{align}  
	This means that, with a probability of at most $e^{-2K_3^{1/3}}$, the average return for $K_3$ exploitation steps is more than $\frac{R_\text{max}}{K_3^{1/3}}$ lower than $\mu$:
	\begin{align}
		\Pr(\sum_{i=1}^{K_3}(\frac{\mu - X_i}{R_\text{max}}) \geq K_3^{2/3}) \leq e^{-2K_3^{1/3}}, \\
		\Pr(K_3\frac{\mu}{R_\text{max}} - \sum_{i=1}^{K_3}\frac{X_i}{R_\text{max}} \geq K_3^{2/3}) \leq e^{-2K_3^{1/3}}, \\
		\Pr(K_3 \mu - \sum_{i=1}^{K_3} X_i \geq R_\text{max} K_3^{2/3}) \leq e^{-2K_3^{1/3}}, \\
		\Pr(\mu - \sum_{i=1}^{K_3} \frac{X_i}{K_3} \geq \frac{R_\text{max}}{K_3^{1/3}}) \leq e^{-2K_3^{1/3}}.
	\end{align}
	We can now choose $Z$, so that $\epsilon/2 \leq \frac{R_\text{max}}{(Z|\bar{S}|T_\epsilon)^{\frac{1}{3}}}$ and $e^{-2(Z|\bar{S}|T_\epsilon)^{1/3}} \leq \delta/3$, to get the desired result: with probability at most $\delta/3$ the obtained value will be more than $\epsilon/2$ lower than the expected value.	
\end{enumerate}
The probability of failure is thus at most $3* \delta/3 = \delta$, and an average return at most $2\epsilon + 3\frac{g(\eta_T,\eta_R)}{T_\epsilon}$ lower than $\text{Opt}(\prod_M(\epsilon, T_\epsilon))$ will be obtained with a probability of at least $1 - \delta$.
\end{proof}

\section{Simulator Data Collection~\label{app:simulator}}
Here we assume that we have access to a simulator and use this in our procedure to give a guarantee in the form of the abstract L1 inequality from \eqref{eq:inequalityMDPabstracted}.
To some extent, this is not surprising, but to the best of our knowledge, this is the first work that explicitly shows how to combine \gls*{mbrl} and abstraction using a simulator. We assume that the simulator allows us to select (or move to) any state and draw a sample from its transition function, which we call the independent samples assumption:
\begin{assumption}[Independent samples]\label{assump:2independent}
We assume we can obtain independent samples, e.g., for any state-action pair $(s,a)$, we can draw samples directly from its transition function $T(\cdot|s,a)$.
\end{assumption} 
\begin{minipage}[t]{0.5\linewidth}
	\centering
	\begin{algorithm}[H]
		\caption{Procedure: \gls*{mbrlao}}
		\label{alg:procedureGeneral2}
		\begin{algorithmic}
				\STATE {\bfseries Input:} $M, \phi, \delta, \epsilon, \pi$
				\STATE $\bar{Y}$ = \textsc{CollectSamples}($M, \phi, \delta, \epsilon, \pi$)
				\STATE The sampling results in sequences $\bar{Y}_{\bar{s},a}$, one for every pair $(\bar{s},a)$: 
				\STATE $\bar{Y}_{\bar{s},a} = \phi(s'^{(1)}),\cdots,\phi(s'^{(m)})$ 
				\STATE $= \bar{s}'^{(1)}, \cdots, \bar{s}'^{(m)}$
				\FORALL{$(\bar{s},a,\bar{s}') \in \bar{S}\times A \times \bar{S}$}
				\STATE $\bar{T}_Y(\bar{s}'|\bar{s},a) = \frac{1}{m} \sum_{i=1}^{m}\mathbbm{1}\{\bar{Y}_{\bar{s},a}^{(i)} = \bar{s}'\}$
				\ENDFOR
				\STATE $\bar{M}_Y \triangleq \langle \bar{S},A, \bar{T}_Y,\bar{R},\gamma \rangle$
				\STATE $\bar{\pi}_Y^*$ = Value Iteration($\bar{M}_Y$)
				\STATE Apply $\bar{\pi}_Y^*$ to $M$
			\end{algorithmic}
	\end{algorithm}
\end{minipage}
\begin{minipage}[t]{0.5\linewidth}
		\centering
\begin{algorithm}[H]
	\caption{\textsc{CollectSamples} with Simulator}
	\label{alg:procedure1}
	\begin{algorithmic}
		\STATE {\bfseries Input:} $M, \phi, \delta, \epsilon$
		\STATE $\kappa = \frac{\delta}{|\bar{S}||A|}$
		\STATE $m =  \lceil \frac{2[\ln(2^{|\bar{S}|} - 2) - \ln(\kappa)]}{\epsilon^2} \rceil$ 		
		\FORALL{$(\bar{s},a) \in \bar{S} \times A$}
		\STATE $\bar{Y}_{\bar{s},a} = [ \ ]$
		\STATE $x_{\bar{s},a} = $ select a prototype state $s \in \bar{s}$
		\FOR{$i = 1:m$}
		\STATE $s' = \ $Sample($T(\cdot | x_{\bar{s},a},a)$)
		\STATE $\bar{Y}_{\bar{s},a}$.append($\phi(s')$)
		\ENDFOR
		\ENDFOR 
		\STATE {\bfseries Return:} all $\bar{Y}_{\bar{s},a}$
	\end{algorithmic}
\end{algorithm}
\end{minipage}
If a simulator of the \gls*{mdp} is available, this is a reasonable assumption. 
For every pair ($\bar{s},a$), the simulator sampling procedure (Algorithm~\ref{alg:procedure1}) selects a prototype $x_{\bar{s},a} \in \bar{s}$ from which to sample. We define a weighting function $\omega_X(s,a)$ that has a weight of $1$ if $s$ is the prototype $x_{\bar{s},a}$ and $0$ otherwise:
\begin{align}
\forall_{(\bar{s},a), s \in \bar{s}} \ \omega_X(s,a) \triangleq \mathbbm{1}\{s = x_{\bar{s},a}\}. \label{eq:OX}
\end{align}
Then we use this $\omega_X$ to define the abstract transition function $\bar{T}_{\omega_X}$ according to \eqref{eq:constructTransitions}. 
$\bar{T}_{\omega_X}(\bar{s}'|\bar{s},a) = \sum_{s' \in \bar{s}'} T(s'|s = x_{\bar{s},a},a)$.
This way, the samples we collect for one pair $(\bar{s},a)$ are \gls*{iid}\ They are independent because of Assumption~\ref{assump:2independent} and identically distributed because we sample from the prototype. 
Because the samples are \gls*{iid}, we can use Lemma~\ref{lemma:weissman}. 
We show that, with the simulator we can combine \gls*{mbrl} with abstraction and still learn an accurate model. 
We can guarantee that $\bar{T}_Y$ will be close to $\bar{T}_{\omega_X}$, with a high probability:
\begin{theorem}~\label{theo:mainclaim}
Under assumption~\ref{assump:2independent}, following the procedure in Algorithm~\ref{alg:procedureGeneral}, with the data collection from Algorithm 3 and inputs $|\bar{S}|, A, \epsilon$, and $\delta$. For $\bar{T}_Y$  constructed by the algorithm, we have that with probability $1 - \delta$, the following holds: 
\begin{align}
	\forall_{(\bar{s}, a)} \	||\bar{T}_Y(\cdot|\bar{s},a) - \bar{T}_{\omega_X} (\cdot|\bar{s},a)||_1 &\leq \epsilon. \label{eq:boundsprop1} 
\end{align}
\end{theorem}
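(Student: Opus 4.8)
The plan is to exploit the simulator assumption to reduce this statement to a direct application of the i.i.d.\ $L_1$ concentration bound of Lemma~\ref{lemma:weissman}, followed by a union bound over all abstract state-action pairs. The entire difficulty of the online setting (Section~\ref{sec:dependence}) evaporates here, so the argument should be short.

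First I would observe that, in the simulator procedure (Algorithm~\ref{alg:procedure1}), every sample collected for a fixed pair $(\bar{s},a)$ is drawn from the single prototype distribution $T(\cdot|x_{\bar{s},a},a)$. Hence the stored abstract outcomes $\bar{Y}_{\bar{s},a}^{(i)} = \phi(s'^{(i)})$ are genuinely \gls*{iid}: they are independent by Assumption~\ref{assump:2independent}, and identically distributed because each is the image under $\phi$ of a draw from the \emph{same} transition function. Moreover, under the degenerate (but valid) weighting $\omega_X$ of \eqref{eq:OX}, the target $\bar{T}_{\omega_X}(\cdot|\bar{s},a)$ collapses exactly to the prototype's abstract transition law, $\bar{T}_{\omega_X}(\bar{s}'|\bar{s},a) = \sum_{s' \in \bar{s}'} T(s'|x_{\bar{s},a},a)$. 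Thus each $\bar{Y}_{\bar{s},a}^{(i)}$ is an \gls*{iid}\ sample precisely from $\bar{T}_{\omega_X}(\cdot|\bar{s},a)$, which is the distribution $\bar{T}_Y$ estimates.

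With this identification in hand, I would apply Lemma~\ref{lemma:weissman} over the abstract state space $\bar{S}$ (of size $|\bar{S}|$) for each fixed pair $(\bar{s},a)$, giving
\begin{align}
\Pr(||\bar{T}_Y(\cdot|\bar{s},a) - \bar{T}_{\omega_X}(\cdot|\bar{s},a)||_1 \geq \epsilon) \leq (2^{|\bar{S}|} - 2) e^{-\frac{1}{2}m\epsilon^2}.
\end{align}
Then I would verify that the choice $m = \lceil 2[\ln(2^{|\bar{S}|} - 2) - \ln\kappa]/\epsilon^2 \rceil$ from Algorithm~\ref{alg:procedure1}, with $\kappa = \delta/(|\bar{S}||A|)$, forces this per-pair failure probability below $\kappa$; this is a one-line rearrangement of the exponential bound, solving $(2^{|\bar{S}|}-2)e^{-\frac{1}{2}m\epsilon^2} \leq \kappa$ for $m$.

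Finally I would close with the union bound (Lemma~\ref{lemma:unionbound}) across the $|\bar{S}||A|$ abstract state-action pairs: the probability that the $L_1$ error exceeds $\epsilon$ for at least one pair is at most $|\bar{S}||A| \cdot \kappa = \delta$, so the stated bound holds simultaneously for all pairs with probability at least $1-\delta$. The only real content — and the step that distinguishes this argument from the online analysis — is the first observation that sampling from a fixed prototype restores the \gls*{iid}\ property; once that is granted, the remainder is routine bookkeeping and no martingale machinery is required. I do not anticipate any genuine obstacle, only the need to state the prototype-sampling reduction cleanly.
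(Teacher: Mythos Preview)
Your proposal is correct and follows essentially the same route as the paper: recognize that prototype sampling under Assumption~\ref{assump:2independent} yields \gls*{iid}\ samples from $\bar{T}_{\omega_X}(\cdot|\bar{s},a)$, apply Lemma~\ref{lemma:weissman}, invert the exponential to obtain the sample count $m$, and close with a union bound over the $|\bar{S}||A|$ pairs. The paper merely packages the last two steps into separate auxiliary lemmas rather than doing them inline, but the argument is otherwise identical.
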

\subsection{Proof of Theorem~\ref{theo:mainclaim}~\label{app:simu}}
Before starting with the actual proof, we first go over Algorithm~\ref{alg:procedure1} and give two lemmas the proof uses. 

The agent will draw samples using the simulator as described in Algorithm~\ref{alg:procedure1}. Since we assume we can sample directly from the transition functions $T(\cdot|s,a)$, this algorithm loops over all pairs $(\bar{s},a)$ and samples $m$ times\footnote{The value of $m$ in Algorithm~\ref{alg:procedure1} is chosen based on the results further along in this section.} from each transition function. 
More formally, for every pair $(\bar{s},a)$, the algorithm selects one prototype state $x_{\bar{s},a} = s \in \bar{s}$. Then, it loops over every pair $(\bar{s},a)$ and  samples $m$ transitions from $T(\cdot|x_{\bar{s},a},a)$. 
The set of collected experiences for each abstract state-action pair $(\bar{s},a)$ is represented by $\bar{Y}_{\bar{s},a}$, as defined by \eqref{eq:sampledTransitions}.

Given $\bar{Y}_{\bar{s},a}$, we define the learned model $\bar{T}_Y(\cdot|\bar{s},a)$ according to \eqref{eq:empiricalTabstract}, $\bar{T}_{\omega_X}$ according to \eqref{eq:omegaxT}, and $\omega_X$ according to \eqref{eq:OX}.
It follows from Lemma~\ref{lemma:weissman}
that we can derive a number of samples that we require to guarantee that $\Pr(||\bar{T}_Y(\cdot|\bar{s},a) - \bar{T}_{\omega_X}(\cdot|\bar{s},a)||_1 \geq \epsilon) \leq \kappa$ is true for inputs $\kappa$ and $\epsilon$:
\setcounter{lemma}{3}
\begin{lemma} 
For inputs $\kappa$ and $\epsilon$ ($0 < \kappa < 1, 0 < \epsilon < 2$), we have that 
the following holds for $m \geq \frac{2[\ln(2^{|\bar{S}|} - 2) - \ln(\kappa)]}{\epsilon^2}$:
\begin{align}
	\Pr(||\bar{T}_Y(\cdot|\bar{s},a) - \bar{T}_{\omega_X}(\cdot|\bar{s},a)||_1 \geq \epsilon) \leq \kappa.
\end{align}
\end{lemma}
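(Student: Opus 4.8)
The plan is to reduce the statement to a direct application of the Weissman $L_1$ inequality (Lemma~\ref{lemma:weissman}) followed by elementary algebra, exploiting that the simulator setting restores exactly the \gls*{iid}\ property that the online setting destroyed. First I would verify that, for a fixed pair $(\bar{s},a)$, the $m$ samples $\bar{Y}_{\bar{s},a}^{(1)},\dots,\bar{Y}_{\bar{s},a}^{(m)}$ collected by Algorithm~\ref{alg:procedure1} are genuinely \gls*{iid}. Independence is immediate from Assumption~\ref{assump:2independent}, since each transition is drawn afresh from the simulator. They are identically distributed because every draw is taken from the \emph{same} prototype state $x_{\bar{s},a}$: with the prototype weighting $\omega_X$ of \eqref{eq:OX}, the definition in \eqref{eq:omegaxT} collapses to $\bar{T}_{\omega_X}(\bar{s}'|\bar{s},a) = \sum_{s' \in \bar{s}'} T(s'|x_{\bar{s},a},a) = T(\bar{s}'|x_{\bar{s},a},a)$, so each observed abstract next-state is distributed exactly according to $\bar{T}_{\omega_X}(\cdot|\bar{s},a)$.

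Given this, $\bar{T}_Y(\cdot|\bar{s},a)$ is precisely the empirical distribution of $m$ \gls*{iid}\ draws from $\bar{T}_{\omega_X}(\cdot|\bar{s},a)$ over the abstract state space $\bar{S}$. I would then invoke Lemma~\ref{lemma:weissman} verbatim, with $\bar{S}$ playing the role of $S$ and $m$ the role of $N(s,a)$, to obtain
\begin{align*}
	\Pr(||\bar{T}_Y(\cdot|\bar{s},a) - \bar{T}_{\omega_X}(\cdot|\bar{s},a)||_1 \geq \epsilon) \leq (2^{|\bar{S}|} - 2)\, e^{-\frac{1}{2}m\epsilon^2}.
\end{align*}

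It then remains only to choose $m$ so that this right-hand side is at most $\kappa$. Requiring $(2^{|\bar{S}|} - 2)\, e^{-\frac{1}{2}m\epsilon^2} \leq \kappa$, taking logarithms, and rearranging yields $m \geq 2[\ln(2^{|\bar{S}|}-2) - \ln\kappa]/\epsilon^2$, which is exactly the stated bound. I do not anticipate any genuine obstacle here: the sole point requiring care is the identification of $\bar{T}_{\omega_X}$ with the per-sample distribution, i.e.\ checking that the prototype choice makes $\omega_X$ degenerate on $x_{\bar{s},a}$ so that Weissman's \gls*{iid}\ hypothesis is literally satisfied. Once that is in place the result is immediate, and the argument parallels Lemma~\ref{lemma:l1samples}, except that the sharper Weissman constants ($\tfrac{1}{2}$ and $2^{|\bar{S}|}-2$) replace the martingale-based constants ($\tfrac{1}{8}$ and $2^{|\bar{S}|}$) that the online setting forced us to use there.
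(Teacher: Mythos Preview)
Your proposal is correct and matches the paper's own proof essentially step for step: invoke Lemma~\ref{lemma:weissman} (Weissman) with $\bar S$ in place of $S$ and $m$ in place of $N(s,a)$, then solve $(2^{|\bar S|}-2)e^{-\frac{1}{2}m\epsilon^2}\le\kappa$ for $m$. The only difference is that you spell out inside the proof why the simulator samples are \gls*{iid}\ from $\bar{T}_{\omega_X}(\cdot|\bar{s},a)$, whereas the paper places that justification in the surrounding text and jumps straight to the Weissman bound.
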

\begin{proof}
To shorten the notation, we use the definitions $P_Y \triangleq \bar{T}_Y(\cdot|\bar{s},a)$ and $P_{\omega_X} \triangleq  \bar{T}_{\omega_X}(\cdot|\bar{s},a)$. 
From  Lemma~\ref{lemma:weissman}, we have 	that
\begin{align}
	\Pr(||P_Y - P_{\omega_X}||_1 \geq \epsilon) \leq (2^{|\bar{S}|} - 2) e^{-\frac{1}{2}m\epsilon^2 }.
\end{align}
We need to select $m$ such that $\kappa \geq (2^{|\bar{S}|} - 2) e^{-\frac{1}{2}m\epsilon^2 }$:
\begin{align}
	\kappa &\geq (2^{|\bar{S}|} - 2) e^{-\frac{1}{2}m \epsilon^2}\\
	\frac{\kappa}{2^{|\bar{S}|} - 2} &\geq e^{-\frac{1}{2}m \epsilon^2}\\
	\ln(\kappa) - \ln(2^{|\bar{S}|} - 2) &\geq -\frac{m \epsilon^2}{2}\\
	\frac{m \epsilon^2}{2} &\geq  \ln(2^{|\bar{S}|} - 2) - \ln(\kappa)\\
	m &\geq \frac{2[\ln(2^{|\bar{S}|} - 2) - \ln(\kappa)]}{\epsilon^2}    
\end{align}
Thus, if $m \geq \frac{2[\ln(2^{|\bar{S}|} - 2) - \ln(\kappa)]}{\epsilon^2}$ we have 
\begin{align}
	\Pr(||P_Y - P_{\omega_X}||_1 \geq \epsilon) \leq \kappa. &\qedhere
\end{align}
\end{proof}

Using the Union bound, we can give a lower bound on the probability that $\bar{T}_Y(\cdot|\bar{s},a)$ and  $\bar{T}_{\omega_X}(\cdot|\bar{s},a)$ are $\epsilon$ close for every $(\bar{s},a)$:
\begin{lemma}~\label{lemma:unionboundingT}
If 
\begin{align}
	\forall_{(\bar{s}, a)} \ \big[\Pr(||\bar{T}_Y(\cdot|\bar{s},a) - \bar{T}_{\omega_X} (\cdot|\bar{s},a)||_1 \geq \epsilon)\big] \leq \frac{\delta}{|\bar{S}||A|}
\end{align}
then the following holds with a probability of at least $1 - \delta$:
\begin{align}
	\max_{(\bar{s},a)}\big[	||\bar{T}_Y(\cdot|\bar{s},a) - \bar{T}_{\omega_X} (\cdot|\bar{s},a)||_1\big] \leq \epsilon. \label{eq:unionboundingTresult} 
\end{align}
\end{lemma}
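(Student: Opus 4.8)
The plan is to invoke the union bound (Lemma~\ref{lemma:unionbound}) on the collection of per-pair failure events. First I would, for each abstract state-action pair $(\bar{s},a) \in \bar{S} \times A$, define the failure event $A_{(\bar{s},a)} \triangleq \{||\bar{T}_Y(\cdot|\bar{s},a) - \bar{T}_{\omega_X}(\cdot|\bar{s},a)||_1 \geq \epsilon\}$. By the hypothesis of the lemma, each of these events has probability at most $\frac{\delta}{|\bar{S}||A|}$. Since there are exactly $|\bar{S}||A|$ such pairs, the union bound yields $\Pr(\cup_{(\bar{s},a)} A_{(\bar{s},a)}) \leq \sum_{(\bar{s},a)} \Pr(A_{(\bar{s},a)}) \leq |\bar{S}||A| \cdot \frac{\delta}{|\bar{S}||A|} = \delta$.

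Next I would pass to the complementary event. The event that no failure occurs, $\cap_{(\bar{s},a)} A_{(\bar{s},a)}^{c}$, is precisely the event that $||\bar{T}_Y(\cdot|\bar{s},a) - \bar{T}_{\omega_X}(\cdot|\bar{s},a)||_1 < \epsilon$ holds for every pair simultaneously, which is in turn equivalent to $\max_{(\bar{s},a)} ||\bar{T}_Y(\cdot|\bar{s},a) - \bar{T}_{\omega_X}(\cdot|\bar{s},a)||_1 < \epsilon$. Its probability is $1 - \Pr(\cup_{(\bar{s},a)} A_{(\bar{s},a)}) \geq 1 - \delta$. Since a strict inequality implies the corresponding non-strict one, the bound~\eqref{eq:unionboundingTresult} holds with probability at least $1 - \delta$, as claimed.

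The argument is essentially a bookkeeping exercise, so I do not anticipate a genuine obstacle; the only two points demanding mild care are (i) matching the number of events, $|\bar{S}||A|$, to the denominator appearing in the hypothesis so that the summed probabilities collapse exactly to $\delta$, and (ii) keeping the direction of the inequality consistent when translating between the ``maximum exceeds $\epsilon$'' event and the union of per-pair failure events, together with the harmless conversion between the strict inequality obtained on the complement and the non-strict inequality stated in~\eqref{eq:unionboundingTresult}.
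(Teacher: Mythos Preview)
Your proposal is correct and follows essentially the same approach as the paper: define the per-pair failure events, apply the union bound (Lemma~\ref{lemma:unionbound}) over the $|\bar{S}||A|$ pairs, and pass to the complement. If anything, your treatment of the strict/non-strict inequality on the complementary event is slightly more careful than the paper's own write-up.
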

\begin{proof}
We define 
\begin{align}
	\Delta_{\bar{s},a} \triangleq ||\bar{T}_Y(\cdot|\bar{s},a) - \bar{T}_{\omega_X} (\cdot|\bar{s},a)||_1.
\end{align} 
Then $\Pr(\max_{(\bar{s},a)}  \{\Delta_{\bar{s},a} \geq \epsilon\} )$ is the probability that $\Delta_{\bar{s},a} \geq \epsilon$ for at least one abstract state-action pair. From the union bound, it follows that  $\Pr(\max_{(\bar{s},a)}  \{\Delta_{\bar{s},a} \geq \epsilon\} ) \leq \delta$:
\begin{align}
	\Pr(\max_{(\bar{s},a)}  \{\Delta_{\bar{s},a} \geq \epsilon\} ) &\leq \sum_{\bar{s},a} \Pr(\Delta_{\bar{s},a} \geq \epsilon ) \\  
	&\leq \sum_{\bar{s},a} \frac{\delta}{|\bar{S}||A|} \\
	&= \delta.
\end{align}	
It follows that $\Pr(\max_{(\bar{s},a)}  \{\Delta_{\bar{s},a} \leq \epsilon\} ) \geq 1 - \delta$ since $\Pr(\max_{(\bar{s},a)}  \{\Delta_{\bar{s},a} \leq \epsilon\} ) = 1 - \Pr(\max_{(\bar{s},a)}  \{\Delta_{\bar{s},a} \geq \epsilon\} )$.
Thus the probability that  \eqref{eq:unionboundingTresult} holds is at least $1 - \delta$.
\end{proof}
Now we are ready to proof Theorem~\ref{theo:mainclaim}:
\begin{proof}[Proof of Theorem~\ref{theo:mainclaim}] 
By Assumption~\ref{assump:2independent}, 
and the earlier assumption that $|S|$ and $|A|$ are finite, we have that we can obtain $m$ samples in finite time for every abstract state-action pair and any $m > 0$.
Given the inputs $|\bar{S}|, A, \epsilon$, and $\delta$, Algorithm~\ref{alg:procedure1} sets $m = \lceil \frac{2[\ln(2^{|\bar{S}|} - 2) - \ln(\kappa)]}{\epsilon^2} \rceil$, where $\kappa = \frac{\delta}{|\bar{S}||A|}$. Then, for every $(\bar{s},a)$, a prototype state $x_{\bar{s},a} = s \in \bar{s}$ is selected. We use \eqref{eq:OX} 
to define $\omega_X$  
and \eqref{eq:omegaxT} 
to define $\bar{T}_{\omega_X}$.

For all $(\bar{s},a)$,  Algorithm~\ref{alg:procedure1} obtains a sequence $\bar{Y}_{\bar{s},a}$ by sampling from the transition function from the prototype state $x_{\bar{s},a}$ and Algorithm~\ref{alg:procedureGeneral2} 
constructs the empirical transition functions as in \eqref{eq:empiricalTabstract}.

Given our choice of $m$ and the inputs $\kappa = \frac{\delta}{|\bar{S}||A|}$ and $\epsilon$, it follows from Lemma~\ref{lemma:l1samples} that 
\begin{align}
	\forall_{(\bar{s}, a)} \ \Pr(||\bar{T}_Y(\cdot|\bar{s},a) - \bar{T}_{\omega_X} (\cdot|\bar{s},a)||_1 \geq \epsilon) \leq \frac{\delta}{|\bar{S}||A|}.
\end{align}
By the union bound, we have that the following holds with a probability of at least $ 1 - \delta$: 
\begin{align}
	\forall_{(\bar{s}, a)} \	||\bar{T}_Y(\cdot|\bar{s},a) - \bar{T}_{\omega_X} (\cdot|\bar{s},a)||_1 &\leq \epsilon.
\end{align}
\end{proof}

\end{document}